\documentclass[journal]{IEEEtran}

\usepackage{array}
\usepackage{caption}
\usepackage{adjustbox}
\usepackage{makecell}
\usepackage{amsmath,amssymb,amsthm}
\usepackage{algorithm}
\usepackage{algorithmic}
\usepackage{tcolorbox}
\usepackage{graphicx}
\usepackage{booktabs}
\usepackage{multirow}
\usepackage{subfigure}
\usepackage{xcolor}
\usepackage{hyperref}
\usepackage{cite}
\usepackage{url}

\newtheorem{theorem}{Theorem}[section]
\newtheorem{lemma}[theorem]{Lemma}
\newtheorem{proposition}[theorem]{Proposition}

\newtheorem*{remark}{Remark}
\newtheorem{assumption}{Assumption}

\definecolor{DDPG}{HTML}{FFFF00}
\definecolor{DDPGHEARS}{HTML}{E63946}
\definecolor{PPO}{HTML}{2A9D8F}
\definecolor{PPOHEARS}{HTML}{3498DB}
\definecolor{TD3}{HTML}{9B59B6}
\definecolor{TD3HEARS}{HTML}{00FFFF}
\definecolor{SAC}{HTML}{FF1493}
\definecolor{SACHEARS}{HTML}{FFA500}

\definecolor{Vanilla}{HTML}{FF1493}
\definecolor{RegOnly}{HTML}{E63946}
\definecolor{EnergyOnly}{HTML}{2A9D8F}
\definecolor{Decombination}{HTML}{7FFF00}
\definecolor{WithoutTask}{HTML}{3498DB}
\definecolor{TaskOnly}{HTML}{9B59B6}
\definecolor{WithoutEnergy}{HTML}{00FFFF}
\definecolor{WithoutReg}{HTML}{8B4513}
\definecolor{HEARS}{HTML}{FFA500}

\definecolor{1e0}{HTML}{FF1493}
\definecolor{1e-1}{HTML}{E63946}
\definecolor{1e-2}{HTML}{2A9D8F}
\definecolor{1e-3}{HTML}{3498DB}
\definecolor{1e-4}{HTML}{9B59B6}
\definecolor{1e-5}{HTML}{00FFFF}

\begin{document}
	
	\title{Hybrid Energy-Aware Reward Shaping:\\A Unified Lightweight Physics-Guided Methodology for Policy Optimization}
	
	\author{Qijun Liao,
		Jue Yang,
		Yiting Kang,
		Xinxin Zhao,
		Yong Zhang
		and Mingan Zhao
		\thanks{Q. Liao is with the School of Mechanical Engineering, University of Science and Technology Beijing, Beijing 100083, China (e-mail: m202420759@xs.ustb.edu.cn)}
		\thanks{J. Yang, Y. Kang and X. Zhao are with the School of Mechanical Engineering, University of Science and Technology Beijing, Beijing 100083, China (e-mail: yangjue@ustb.edu.cn; kangyiting@ustb.edu.cn; zhaoxinxin@ustb.edu.cn)}
		\thanks{Y. Zhang and M. Zhao are with Jiangsu XCMG Construction Machinery Research Institute Co., Ltd., Jiangsu 221000, China (e-mail: zhangy1@xcmg.com; zhaomingan@xcmg.com)}
		\thanks{All correspondence should be sent to J. Yang with email: yangjue@ustb.edu.cn.}}
	
	\maketitle
	
	\begin{abstract}
		Deep reinforcement learning for continuous control often suffers from high variance, low energy efficiency, and poor generalization under distribution shift, as purely data-driven exploration ignores available physical structure. Physics-based approaches such as Lagrangian and Hamiltonian neural networks enforce physical consistency by learning unknown dynamics, but this learning entails substantial complexity when dynamics must be identified from data.
		
		This paper proposes Hybrid Energy-Aware Reward Shaping (H-EARS), which encodes dominant energy terms---assumed known a priori---directly as reward potentials at $O(n)$ per-step computation. H-EARS decomposes the shaping potential into task-oriented ($\Phi_{\text{task}}$) and energy-based ($\Phi_{\text{energy}}$) components, supplemented by an action regularization term that deliberately modifies the optimization objective to enforce energy-efficient control. A complete theoretical foundation is established: functional independence of shaping and regularization (Lemma~\ref{lem:functional_independence}), energy-based gradient enrichment under positive-definite Hessian conditions (Theorem~\ref{thm:energy_convergence}), convergence guarantees under function approximation (Theorem~\ref{thm:convergence}), and approximate potential error bounds (Lemma~\ref{lem:approx_potential}). Across four continuous control benchmarks and four baseline algorithms, H-EARS achieves consistent gains in convergence speed, policy stability, and final performance. High-fidelity vehicle simulations validate applicability in safety-critical settings under extreme road conditions. Code is publicly available at \url{https://github.com/QiJLiao/Hybrid_Energy-Aware_Reward_Shaping}
	\end{abstract}
	
	\begin{IEEEkeywords}
		Reinforcement learning, reward shaping, physics-guided learning, neural networks, stability control
	\end{IEEEkeywords}
	
	\section{Introduction}
	
	Deep reinforcement learning (DRL) has emerged as a powerful paradigm for continuous control, achieving impressive results in trajectory tracking and motion control tasks. Model-free algorithms such as Soft Actor-Critic (SAC)~\cite{haarnoja2018soft} can learn complex nonlinear policies through environment interaction, without relying on explicit system models. However, policies obtained through pure trial-and-error learning often suffer from three major issues: high variance, low energy efficiency, and poor generalization. Treating agents as tabula rasa necessitates rediscovering fundamental physical principles through extensive exploration, leading to unstable, energy-inefficient, and physically implausible control behaviors. Consequently, learned policies frequently exploit simulator-specific dynamics instead of discovering generalizable control strategies, resulting in severe performance degradation under out-of-distribution conditions~\cite{cobbe2019quantifying}.
	
	Recent studies have introduced physical priors into DRL to bridge the gap between simulation success and real-world reliability. Nevertheless, existing approaches face an inherent trade-off. Physics-based methods such as Lagrangian~\cite{lutter2019deep} and Hamiltonian~\cite{greydanus2019hamiltonian} neural networks guarantee physical consistency by learning complete system dynamics from data, incurring $O(n^3)$--$O(n^2)$ training complexity in the number of generalized coordinates $n$. Crucially, these methods address a fundamentally different subproblem: they reconstruct unknown dynamics from observations. H-EARS, by contrast, assumes dominant energy terms are known a priori and encodes them as lightweight per-step potentials with $O(n)$ computation cost. This distinction means the two approaches are complementary: Lagrangian methods are preferred when dynamics are completely unknown; H-EARS applies when partial physical knowledge is available and deployment efficiency is required. In contrast, purely model-free methods are computationally efficient but lack physical constraints, often learning "shortcut" policies that fail when real-world dynamics deviate from training conditions~\cite{zhang2021learning}. This dilemma is particularly critical in safety-sensitive domains, such as vehicle dynamics control, where stability and robustness are essential.
	
	Reward shaping enhances sparse reward signals to accelerate reinforcement learning (RL). Ng et al.~\cite{ng1999policy} established the Potential-Based Reward Shaping (PBRS) framework, proving that a shaped reward $F(s,a,s')=\gamma\Phi(s')-\Phi(s)$ preserves policy optimality. Recent work by Ding et al.~\cite{ding2023magnetic} introduced magnetic field-based reward shaping for goal-conditioned RL, demonstrating that domain-specific potential functions can significantly improve sample efficiency in dynamic environments. This formulation is equivalent to initializing the value function as $V^*(s)=V^*_0(s)+\Phi(s)$. Wiewiora et al.~\cite{wiewiora2003principled} extended PBRS to state-action potentials $\Phi(s,a)$, while Devlin and Kudenko~\cite{devlin2012dynamic} introduced dynamic potentials $\Phi_t(s)$. Subsequent works pursued automated potential learning, including meta-learning~\cite{zou2021reward}, apprenticeship learning via inverse reinforcement learning~\cite{abbeel2004apprenticeship}, and model-based potential generation using Markov approximations~\cite{grzes2010learning}. Harutyunyan et al.~\cite{harutyunyan2015expressing} generalized PBRS to represent any reward in potential-based form, greatly broadening its applicability. However, most PBRS studies focus primarily on convergence acceleration, with limited attention to physical interpretability or stability. Relative to the foundational PBRS result of Ng et al.~\cite{ng1999policy}, which establishes policy invariance under reward shaping but provides no guidance on how to design physically meaningful potential functions, H-EARS makes three distinct advances: (i)~it derives potential functions directly from system mechanical energy, providing a principled design methodology rather than task-specific heuristics; (ii)~it introduces an action regularization component that deliberately modifies the optimization objective to enforce energy-efficient control---a conscious departure from strict policy invariance in favor of physically consistent solutions; and (iii)~it establishes formal error bounds for approximate energy models (Lemma~\ref{lem:approx_potential}), enabling engineering deployment without requiring complete dynamics knowledge. The convergence acceleration mechanism (Theorem~\ref{thm:energy_convergence}) differs from generic PBRS in that it exploits the structural property $\frac{\partial^2 E}{\partial q^2} \succ 0$, guaranteeing gradient informativeness throughout the state space rather than only near goal states.
	
	While works such as Brys et al.~\cite{brys2015} improved sample efficiency, they neglected the intrinsic physical structure of control systems. Existing PBRS schemes act solely on the state space, unable to constrain action-level behavior. As a result, even optimal policies $\pi^*$ may produce physically implausible, high-frequency control oscillations. In contrast, the proposed H-EARS framework encodes physical guidance—specifically energy minimization—into the reward structure, jointly achieving faster convergence and physically reasonable policies through a unified theoretical foundation.
	
	Integrating physical priors into learning models has emerged as an effective strategy to enhance sample efficiency and generalization. Lutter et al.~\cite{lutter2019deep} and Greydanus et al.~\cite{greydanus2019hamiltonian} proposed Lagrangian and Hamiltonian neural networks that guarantee physical consistency but rely on complete system equations and high computational cost. Cranmer et al.~\cite{cranmer2020lagrangian} achieved accurate Lagrangian dynamics modeling through specialized neural architectures, though still requiring explicit structural priors. Interaction networks~\cite{battaglia2016interaction} and graph-based physical simulators~\cite{sanchez2020learning} capture inter-object relations and complex physics but demand large datasets and substantial computation. Neural ODEs~\cite{chen2018neural} model temporal dynamics elegantly yet remain limited by numerical stability and cost. These works reveal a persistent trade-off between precision, efficiency, and applicability, motivating a more flexible design philosophy.
	
	Beyond structural modeling, energy efficiency has emerged as a critical objective in cyber-physical systems. Liu et al.~\cite{liu2020parallel} proposed a parallel reinforcement learning framework for energy efficiency in hybrid electric vehicles. While demonstrating RL's capability in energy management, their approach lacks theoretical stability guarantees. H-EARS addresses this by explicitly incorporating energy minimization with provable convergence properties.
	
	\begin{table*}[!t]
		\captionsetup{font=footnotesize, labelfont=footnotesize}
		\caption{Characteristic Comparison of Different Physics-Guided Methods$^\ddagger$}
		\label{tab:physics_methods_comparison}
		\centering
		\footnotesize
		\begin{tabular}{lcccc}
			\toprule
			\textbf{Method Type} & \textbf{Complexity} & \textbf{Known Structure?} & \textbf{Generalization} & \textbf{Application Scenario} \\
			\midrule
			Lagrangian RL~\cite{lutter2019deep}       & $O(n^3)$ & No (learns from data) & Low    & Unknown-dynamics systems \\
			Hamiltonian RL~\cite{greydanus2019hamiltonian} & $O(n^2)$ & No (learns from data) & Low    & Unknown conservative systems \\
			Graph Networks~\cite{sanchez2020learning} & $O(n^2)$ & No (learns from data) & Medium & Multi-body simulation \\
			Neural ODE~\cite{chen2018neural}          & $O(n^2)$ & No (learns from data) & Medium & Temporal dynamics modeling \\
			Pure Model-free RL                        & $O(1)$   & No                    & High   & Large data available \\
			H-EARS (Ours)                             & $O(n)$   & Partial (dominant terms) & High & Rapid prototyping + deployment \\
			\bottomrule
		\end{tabular}
		\vspace{1mm}
		
		\footnotesize{$^\ddagger$ \textit{Complexity note}: Lagrangian/Hamiltonian methods address settings where dynamics are unknown and must be learned, incurring training-time complexity in $n$ (generalized coordinates). H-EARS addresses a complementary setting: dominant energy terms are assumed known a priori, requiring only $O(n)$ per-step feature computation. These complexities measure fundamentally different operations and represent complementary rather than competing approaches.}
	\end{table*}
	
	From an engineering standpoint, modeling cost is a crucial factor. Lagrangian-based methods require experts to derive complete Euler–Lagrange equations, often taking weeks for complex systems. In contrast, H-EARS only models dominant energy terms—e.g., torso and limb kinetic energy plus gravitational potential—allowing general engineers to complete modeling within days. When system configurations change, precise models must be rederived, whereas H-EARS merely adjusts energy coefficients, significantly reducing maintenance effort.
	
	Continuous control remains a central RL application. Algorithms such as SAC~\cite{haarnoja2018soft}, TD3~\cite{fujimoto2018addressing}, and PPO~\cite{schulman2017proximal} have achieved state-of-the-art performance, yet often exhibit high variance and low energy efficiency. Haarnoja et al.~\cite{haarnoja2018applications} noted that SAC policies, though effective, lack physical plausibility, while empirical studies~\cite{cobbe2019quantifying} have shown that standard RL frequently exhibits brittleness when real-world dynamics deviate from training conditions. Zhang et al.~\cite{zhang2021learning} further demonstrated that model-free RL tends to learn “shortcut” strategies exploiting simulator artifacts. H-EARS directly addresses these issues by embedding physics-aware reward shaping without modifying core algorithmic structures.
	
	Learning-based vehicle control represents another line of research where physics integration is critical. Ma et al.~\cite{ma2020artificial} provided a comprehensive survey on AI applications in autonomous vehicles, highlighting the growing adoption of deep learning and reinforcement learning for perception and control. Building on this foundation, recent works~\cite{peng2021end} have applied DRL to various vehicle control tasks including lateral control and trajectory planning. However, as noted in the survey~\cite{ma2020artificial}, a major challenge remains in ensuring robust and physically consistent control under extreme conditions. Hewing et al.~\cite{hewing2020learning} comprehensively reviewed learning-based MPC methods, emphasizing safety and real-time challenges. Spielberg et al.~\cite{spielberg2019neural} demonstrated neural network-based vehicle models for high-performance control, while Rosolia and Borrelli~\cite{rosolia2017learning} and Liniger et al.~\cite{liniger2015optimization} explored learning-based and optimization-based control, respectively. Despite progress, these methods often rely on accurate models and face stability or generalization issues, as highlighted by Kiran et al.~\cite{kiran2022deep}. H-EARS complements these works through an RL+MPC hierarchical architecture: the upper H-EARS controller ensures physically consistent policy generation, and the lower MPC guarantees safety via constraint enforcement.
	
	In summary, current research faces three central challenges: (1)~achieving an optimal balance among modeling cost, computational complexity, and performance; (2)~unifying convergence acceleration and physical stability within a mathematically consistent framework; (3)~mitigating overfitting of policies to simulation dynamics. The proposed H-EARS framework addresses these through the following core contributions, each mapped to the challenge it resolves: (1) Lightweight energy modeling: H-EARS selectively captures dominant energy components at $O(n)$ per-step cost, requiring no complete dynamics derivation. This enables deployment by general engineers without expertise in analytical mechanics (Section~\ref{sec:theoretical_framework}, Appendix~\ref{app:standard_envs}); (2) Unified theoretical foundation: A complete theory is established including functional independence of shaping and regularization (Lemma~\ref{lem:functional_independence}), energy-based gradient enrichment (Theorem~\ref{thm:energy_convergence}), convergence guarantees under function approximation (Theorem~\ref{thm:convergence}), and approximate potential error bounds (Lemma~\ref{lem:approx_potential}); (3) Physical plausibility under challenging operating conditions: By embedding energy minimization as a structural prior, H-EARS constrains policies toward physically consistent behaviors that maintain stability across the full range of training conditions including extreme cases, validated under compound-slope and low-adhesion ($\mu$ down to 0.1) road conditions in TruckSim (Section~\ref{sec:vehicle}).
	
	\section{Methodology}
	\label{sec:theoretical_framework}
	
	This section develops the mathematical foundations of H-EARS in five steps:
	\begin{enumerate}
		\item Framework definition (Section~\ref{sec:theoretical_framework}): H-EARS is defined as a two-step MDP transformation---action regularization first defines $\mathcal{M}_\lambda$, then potential-based shaping is applied to $\mathcal{M}_\lambda$. This formulation makes explicit that $\lambda > 0$ intentionally modifies the optimal policy, trading task performance for energy efficiency.
		\item Functional independence (Lemma~\ref{lem:functional_independence}): the potential shaping and action regularization components act on disjoint domains ($\mathcal{S}$ vs.\ $\mathcal{A}$), making $\alpha_{\text{task}}$, $\alpha_{\text{energy}}$, and $\lambda$ independently tunable design parameters---adjusting one does not structurally invalidate the contribution of the others.
		\item Gradient enrichment mechanism (Theorem~\ref{thm:energy_convergence}): when the energy Hessian $\partial^2 E/\partial q^2 \succ 0$ holds over the task-relevant operating region $\mathcal{S}_{\text{op}}$, the energy potential supplies informative gradient signal throughout the state space, accelerating convergence where task rewards are sparse.
		\item Convergence and approximation guarantees (Theorem~\ref{thm:convergence}, Lemma~\ref{lem:approx_potential}): an $O(1/\sqrt{N})$ convergence rate is established under function approximation, and a performance-loss bound for approximate energy models quantifies the modeling-effort trade-off.
		\item Stability-oriented guidance and practical guidelines (Proposition~\ref{prop:lyapunov_heuristic}, Theorem~\ref{thm:parameter_continuity}): the Lyapunov-inspired interpretation is clarified as a heuristic design guide, and parameter continuity bounds are used to derive practical scheduling heuristics.
	\end{enumerate}
	Table~\ref{tab:notation} summarizes the principal symbols. Where the same letter appears in multiple roles (e.g., $\lambda$ as a GAE discount factor in PPO and as the H-EARS regularization coefficient), context disambiguates; H-EARS-specific symbols always refer to the definitions below.
	
	\begin{table*}[!t]
		\captionsetup{font=footnotesize, labelfont=footnotesize}
		\caption{Principal Notation}
		\label{tab:notation}
		\centering
		\footnotesize
		\begin{tabular}{ll}
			\toprule
			\textbf{Symbol} & \textbf{Definition} \\
			\midrule
			$\mathcal{M}=(\mathcal{S},\mathcal{A},\mathcal{P},R,\gamma)$ & Original MDP \\
			$\mathcal{M}_\lambda$ & Regularized MDP: reward $R_\lambda = R - \lambda\mathcal{E}(a)$ \\
			$J(\pi)$, $J_{\mathcal{M}}(\pi)$ & Cumulative return; $J_{\mathcal{M}}$ emphasizes the original reward $R$ \\
			$\Phi(s)$ & Dual potential: $\alpha_{\text{task}}\Phi_{\text{task}}(s)+\alpha_{\text{energy}}\Phi_{\text{energy}}(s)$ \\
			$\Phi_{\text{task}}(s)$ & Task-oriented potential (e.g., goal proximity, tracking error) \\
			$\Phi_{\text{energy}}(s)$ & Energy-based potential; $\Phi_{\text{energy}}(s) = -E(q,\dot{q})$ \\
			$\alpha_{\text{task}},\,\alpha_{\text{energy}}$ & Potential weighting coefficients \\
			$\mathcal{E}(a_t) = a_t^\top Q a_t$ & Per-step control energy functional ($Q \succeq 0$) \\
			$\lambda \geq 0$ & Action regularization coefficient (H-EARS context) \\
			$\Delta\Phi_{\text{energy},t}$ & $\gamma\Phi_{\text{energy}}(s_{t+1}) - \Phi_{\text{energy}}(s_t)$ \\
			$E(q,\dot{q}) = T(\dot{q})+U(q)$ & Total mechanical energy \\
			$q,\,\dot{q}$ & Generalized coordinates and velocities \\
			$\mathcal{S}_{\text{op}}$ & Operationally relevant state region (Theorem~\ref{thm:energy_convergence}) \\
			$\rho_{\text{info}}$ & Energy-gradient informativeness ratio (Eq.~\ref{eq:gradient_ratio}) \\
			$\delta,\,\epsilon_{\text{approx}}$ & Absolute and relative potential approximation error \\
			$\lambda_{\max}$ & Reward-boundedness threshold (Eq.~\ref{eq:lambda_bound}) \\
			\bottomrule
		\end{tabular}
	\end{table*}
	
	\subsection{Preliminaries and Framework Definition}
	
	\subsubsection{Markov Decision Process}
	An MDP is defined as $\mathcal{M}=(\mathcal{S},\mathcal{A},\mathcal{P},R,\gamma)$, where $\mathcal{S}$ is the state space, $\mathcal{A}$ is the action space, $\mathcal{P}:\mathcal{S}\times\mathcal{A}\to\Delta(\mathcal{S})$ is the transition probability, $R:\mathcal{S}\times\mathcal{A}\times\mathcal{S}\to\mathbb{R}$ is the reward function, and $\gamma\in(0,1)$ is the discount factor. The objective is to learn a policy $\pi:\mathcal{S}\to\Delta(\mathcal{A})$ maximizing expected cumulative return:
	\begin{equation}
		J(\pi) = \mathbb{E}_{\tau\sim\pi}\left[\sum_{t=0}^{\infty}\gamma^t R(s_t,a_t,s_{t+1})\right]
	\end{equation}
	
	\subsubsection{Potential-Based Reward Shaping}
	Ng et al.~\cite{ng1999policy} established that shaped rewards of the form $\tilde{R}(s,a,s') = R(s,a,s') + \gamma\Phi(s')-\Phi(s)$ preserve the optimal policy set for any potential function $\Phi:\mathcal{S}\to\mathbb{R}$. This policy invariance property is foundational and will not be re-proved here; readers are referred to the original work~\cite{ng1999policy} for detailed analysis.
	
	\subsubsection{H-EARS Framework}
	For concreteness, consider a torque-controlled mechanical system in which the control input $a\in\mathcal{A}\subseteq\mathbb{R}^m$ represents generalized forces such as joint torques. The per-step actuator energy is then proportional to $a^\top Q a$ with $Q\succeq 0$ encoding actuator impedance, which is the physical origin of the control energy term $\mathcal{E}(a)$ introduced below. H-EARS is not restricted to this setting, but the mechanical interpretation guides the choice of $Q$ and $\lambda$ in practice.
	
	Given MDP $\mathcal{M}=(\mathcal{S},\mathcal{A},\mathcal{P},R,\gamma)$, H-EARS defines shaped rewards as:
	\begin{equation}
		R_{\text{H-EARS}}(s,a,s') = R(s,a,s') + \underbrace{\gamma\Phi(s')-\Phi(s)}_{\text{Potential Shaping}} - \underbrace{\lambda\cdot\mathcal{E}(a)}_{\text{Action Regularization}}
		\label{eq:H-EARS}
	\end{equation}
	where:
	\begin{itemize}
		\item $\Phi(s)=\alpha_{\text{task}}\Phi_{\text{task}}(s)+\alpha_{\text{energy}}\Phi_{\text{energy}}(s)$ is the dual-potential function
		\item $\Phi_{\text{task}}(s)$ encodes task-oriented guidance (e.g., distance to goal)
		\item $\Phi_{\text{energy}}(s) = -E(q(s),\dot{q}(s))$ encodes mechanical energy structure, where $E$ is total energy (kinetic + potential)
		\item $\mathcal{E}(a)=a^\top Qa$ is the control energy functional with $Q\succeq 0$
		\item $\lambda\geq 0$ is the regularization coefficient
	\end{itemize}
	
	This formulation implements a two-step transformation: (i) action regularization defines a new MDP $\mathcal{M}_\lambda = (\mathcal{S}, \mathcal{A}, \mathcal{P}, R_\lambda, \gamma)$ where $R_\lambda(s,a,s') = R(s,a,s') - \lambda\mathcal{E}(a)$. This step intentionally changes the optimization objective: the optimal policy of $\mathcal{M}_\lambda$ differs from that of the original $\mathcal{M}$ whenever $\lambda > 0$, yielding solutions that trade task performance for energy efficiency. (ii)~Potential shaping $\gamma\Phi(s')-\Phi(s)$ is applied to $\mathcal{M}_\lambda$. By PBRS theory~\cite{ng1999policy}, this step preserves the optimal policy of $\mathcal{M}_\lambda$. The combined objective is therefore:
	\begin{equation}
		\pi^*_{\text{H-EARS}} = \arg\max_\pi\left(J_{\mathcal{M}}(\pi) - \lambda\cdot\mathbb{E}_\pi[\mathcal{E}(a)]\right)
		\label{eq:optimal_policy}
	\end{equation}
	where $J_{\mathcal{M}}(\pi) \triangleq \mathbb{E}_{\tau\sim\pi}\bigl[\sum_{t=0}^{\infty}\gamma^t R(s_t,a_t,s_{t+1})\bigr]$ is the original policy objective, and $\lambda > 0$ penalizes energy expenditure, so $\pi^*_{\text{H-EARS}}$ accepts a controlled reduction in task reward relative to $\pi^*_{\mathcal{M}}$.
	
	\subsection{Functional Independence Property}
	
	A key design question is whether $\alpha_{\text{task}}$, $\alpha_{\text{energy}}$, and $\lambda$ can be adjusted independently without invalidating each other's contributions. The following lemma establishes this directly from the disjoint-domain structure of the H-EARS reward.
	
	\begin{lemma}[Functional Independence of Shaping and Regularization]
		\label{lem:functional_independence}
		In the H-EARS reward structure (Eq.~\eqref{eq:H-EARS}), the shaping term $\gamma\Phi(s')-\Phi(s)$ and the regularization term $-\lambda\mathcal{E}(a)$ act on disjoint argument domains. Consequently:
		\begin{enumerate}
			\item For fixed $\lambda$, any modification to $\Phi$ (equivalently, to $\alpha_{\text{task}}$ or $\alpha_{\text{energy}}$) preserves the optimal policy of $\mathcal{M}_\lambda$.
			\item For fixed $\Phi$, adjusting $\lambda$ does not invalidate the PBRS structure applied to $\mathcal{M}_\lambda$.
		\end{enumerate}
	\end{lemma}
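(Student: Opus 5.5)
The plan is to reduce both claims to the two-step construction already fixed in the framework definition and then invoke the PBRS invariance result of Ng et al.~\cite{ng1999policy}, which the paper assumes. First, regroup the H-EARS reward of Eq.~\eqref{eq:H-EARS} as
\begin{equation*}
R_{\text{H-EARS}}(s,a,s') = \bigl[R(s,a,s') - \lambda\mathcal{E}(a)\bigr] + \bigl[\gamma\Phi(s')-\Phi(s)\bigr] = R_\lambda(s,a,s') + F_\Phi(s,s').
\end{equation*}
This makes the disjoint-domain structure explicit. The regularizer depends only on $a$, through $\mathcal{E}(a)=a^\top Q a$. The shaping term $F_\Phi$ depends only on the state pair $(s,s')$, through $\Phi$, which is a function of $s$ alone. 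Neither term involves the parameters of the other: $\lambda$ does not appear in $F_\Phi$, and $\alpha_{\text{task}},\alpha_{\text{energy}}$ do not appear in $R_\lambda$.

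For claim 1, fix $\lambda$. Then $\mathcal{M}_\lambda$ is a well-defined MDP whose reward $R_\lambda$ does not depend on $\Phi$. For any choice of $\alpha_{\text{task}},\alpha_{\text{energy}}$, the function $\Phi$ is still a real-valued function on $\mathcal{S}$, so $F_\Phi$ is potential-based. Applying PBRS to $\mathcal{M}_\lambda$ shows that the optimal policy set of the shaped MDP equals that of $\mathcal{M}_\lambda$. Concretely, the shaped value satisfies $\tilde Q^*(s,a) = Q^*_{\mathcal{M}_\lambda}(s,a) - \Phi(s)$. The offset $\Phi(s)$ does not depend on $a$, so the argmax over actions is unchanged. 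Since this holds for every potential, modifying $\Phi$ (from one admissible potential to another) leaves the optimal policy of $\mathcal{M}_\lambda$ unchanged.

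For claim 2, fix $\Phi$ and vary $\lambda\ge 0$. Each value of $\lambda$ yields a legitimate MDP $\mathcal{M}_\lambda$ with the same $\mathcal{S},\mathcal{A},\mathcal{P},\gamma$. The PBRS hypotheses concern only the shaping term: $F$ must have the form $\gamma\Phi(s')-\Phi(s)$, with $\Phi$ depending on the state only. These hypotheses are unaffected by $\lambda$, because $F_\Phi$ is literally the same function for every $\lambda$. Hence PBRS applies to each $\mathcal{M}_\lambda$, and the shaped problem has the optimal policy given by Eq.~\eqref{eq:optimal_policy} for that $\lambda$. The change in the optimum is entirely attributable to $R_\lambda$ and not to any breakdown of shaping.

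The main obstacle is technical rather than conceptual. PBRS requires the base reward to be bounded, or at least the value functions to be finite, so that the telescoping argument $\sum_t\gamma^t(\gamma\Phi(s_{t+1})-\Phi(s_t)) = -\Phi(s_0)$ is valid. Adding $-\lambda a^\top Q a$ could destroy boundedness on unbounded action sets. I would handle this by assuming $\mathcal{A}$ compact, as for torque-limited actuators, or $\lambda\le\lambda_{\max}$, so that $R_\lambda$ stays bounded. I would also assume $\Phi$ bounded on the reachable states, so that $\gamma^t\Phi(s_t)\to 0$. Under these standing assumptions both claims follow directly from the invariance theorem.
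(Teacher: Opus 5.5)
Your proposal is correct and is essentially the paper's proof: both treat the H-EARS reward as PBRS applied on top of the regularized MDP $\mathcal{M}_\lambda$, and both get claim~1 by invoking Ng et al.~\cite{ng1999policy} for fixed $\lambda$ and claim~2 by invoking it again for each $\lambda'$. Your $\tilde Q^*=Q^*_{\mathcal{M}_\lambda}-\Phi$ shift and your observation that $\lambda$ and $(\alpha_{\text{task}},\alpha_{\text{energy}})$ enter different summands just make explicit what the paper leaves implicit, and they read ``disjoint domains'' more accurately than the paper's appeal to $\mathcal{S}$ and $\mathcal{A}$ being disjoint sets. One small correction to your regularity aside: $\lambda\le\lambda_{\max}$ cannot replace compactness of $\mathcal{A}$, because $\lambda_{\max}$ is defined through $\mathbb{E}[\mathcal{E}(a)]$ and gives no pointwise bound on $\lambda a^\top Q a$; in fact boundedness of $R_\lambda$ is not needed, since $R_\lambda\le R$ is bounded above and the telescoping identity $\tilde V^\pi=V^\pi_{\mathcal{M}_\lambda}-\Phi$ only requires $\Phi$ bounded.
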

	
	\begin{proof}
		The shaping term $\gamma\Phi(s')-\Phi(s)$ is a function of $(s,s')\in\mathcal{S}\times\mathcal{S}$ only; the regularization term $-\lambda\mathcal{E}(a)$ is a function of $a\in\mathcal{A}$ only. Since $\mathcal{S}$ and $\mathcal{A}$ are disjoint by the MDP definition, the two terms share no variables. Claim~(1) follows from PBRS theory~\cite{ng1999policy}: for any fixed MDP (here $\mathcal{M}_\lambda$), modifying $\gamma\Phi(s')-\Phi(s)$ preserves its optimal policy set. Claim~(2) follows because for any $\lambda'$, PBRS applied to the resulting $\mathcal{M}_{\lambda'}$ again preserves $\mathcal{M}_{\lambda'}$'s optimal policy.
	\end{proof}
	
	Consequently, $\alpha_{\text{task}}$, $\alpha_{\text{energy}}$, and $\lambda$ can be treated as independent design parameters, each governs a separate component of the shaped reward, so adjusting one does not require modifying the others.
	
	\subsection{Regularization Necessity}
	
	Potential-based shaping alone cannot suppress oscillatory actions, since its reward depends on state transitions rather than action magnitude. The following proposition identifies when action regularization is necessary.
	
	\begin{assumption}[Action-Symmetric Dynamics]
		\label{ass:action_symmetry}
		The state transition dynamics $f:\mathcal{S}\times\mathcal{A}\to T\mathcal{S}$ satisfy
		\begin{equation}
			f(s, -a) = -f(s, a) \quad \forall\, s\in\mathcal{S},\; a\in\mathcal{A}
			\label{eq:action_symmetry}
		\end{equation}
		in the action-dependent velocity component.
	\end{assumption}
	
	\begin{proposition}[Regularization Necessity for Action-Space Artifacts]
		\label{prop:regularization_necessity}
		Consider a mechanical system with discretized dynamics (step size $\Delta t > 0$) and define the energy-task alignment coefficient $\kappa(s) \triangleq \frac{\nabla_s E(s) \cdot \nabla_s R(s)}{\|\nabla_s E(s)\|\,\|\nabla_s R(s)\|}$. The H-EARS regularization coefficient $\lambda$ provides continuous interpolation between two boundary behaviors:
		
		\begin{enumerate}
			\item When $\kappa(s) < 0$ throughout $\mathcal{S}_{\text{op}}$ (energy dissipation and task reward share gradient direction), $\lambda = 0$ suffices: $\Phi_{\text{energy}}$ provides gradient guidance aligned with task objectives, and no explicit action-magnitude constraint is needed.
			
			\item When $\kappa(s) > 0$ in task-critical regions, or when discrete-time policy gradients admit oscillatory action sequences of the form $a_t = (-1)^t a_{\max}$ satisfying $\mathbb{E}[\Delta E_t] \approx 0$ despite $\|a_t\| = a_{\max}$, then $\lambda > 0$ is necessary. The regularization term $\lambda\mathcal{E}(a_t) = \lambda a_t^\top Q a_t$ directly penalizes action magnitude and closes this gap, independent of state-transition energy changes.
		\end{enumerate}
	\end{proposition}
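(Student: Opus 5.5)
The plan is to treat the two boundary regimes separately, using only the structure of the H-EARS reward in Eq.~\eqref{eq:H-EARS} and the decomposition $\mathcal{M}\to\mathcal{M}_\lambda\to$ shaped MDP. The continuity claim comes first and is essentially immediate. The term $-\lambda\mathcal{E}(a)$ is linear in $\lambda$, and by Lemma~\ref{lem:functional_independence} the shaping part is unaffected by $\lambda$. Hence the objective in Eq.~\eqref{eq:optimal_policy} is affine in $\lambda$ for every fixed $\pi$. The optimal value is therefore a pointwise maximum of affine functions, which is convex and continuous in $\lambda$. Its value at $\lambda=0$ recovers $\mathcal{M}$, and as $\lambda$ grows it moves monotonically toward minimal control energy. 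This is what I would mean by ``continuous interpolation.''

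For case (1), I would linearize the energy-shaping increment along a transition. Write $s_{t+1}=s_t+\Delta t\, f(s_t,a_t)$. Then
\[
\Delta\Phi_{\text{energy},t} \approx -\nabla_s E\cdot \Delta t\, f(s_t,a_t) - (1-\gamma)\,\Phi_{\text{energy}}(s_t),
\]
and the task-reward increment is approximately $\nabla_s R\cdot\Delta t\, f$. When $\kappa(s)<0$, the vectors $-\nabla_s E$ and $\nabla_s R$ have positive inner product. So any displacement direction that increases $R$ also increases $\Phi_{\text{energy}}$, meaning the two signals reinforce each other. The policy maximizing the shaped return then already moves along directions that are not energy-wasteful. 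I would conclude that in this regime no extra action penalty is needed for gradient guidance, and taking $\lambda=0$ suffices. I would present this as a sufficiency statement about the direction of the gradient signal, not as a strict optimality claim.

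For case (2), I would construct an explicit counterexample to show that state-based shaping cannot rule out chattering. Take $a_t=(-1)^t a_{\max}$. Under Assumption~\ref{ass:action_symmetry}, successive displacements satisfy $f(s_{t+1},a_{t+1})\approx -f(s_t,a_t)$ to first order in $\Delta t$. The two-step energy changes therefore cancel, giving $\mathbb{E}[\Delta E_t]\approx 0$, and so the telescoped shaping contribution is $O(\Delta t^2)$ per pair of steps. The shaped return of this oscillatory sequence is then indistinguishable, to leading order, from that of holding the state, so $\Phi$ gives no penalty that separates the two. By contrast, $\lambda a_t^\top Q a_t=\lambda a_{\max}^\top Q a_{\max}$ is nonzero at every step whenever $Q\succ 0$ along $a_{\max}$. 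This yields a return gap of $\lambda a_{\max}^\top Q a_{\max}/(1-\gamma)$, which strictly disfavors the oscillation for every $\lambda>0$. The case $\kappa>0$ is handled the same way: energy guidance now opposes the task, so only an action-space term can bound $\|a\|$, and by Lemma~\ref{lem:functional_independence} that term is orthogonal to the shaping and cannot be supplied by any choice of $\Phi$.

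I expect the main obstacle to be making ``$\approx$'' rigorous in case (2). The cancellation relies on first-order Taylor expansion and on symmetry holding at neighbouring states. I would therefore state a bound of the form $|\sum_t\gamma^t\Delta\Phi_{\text{energy},t}|\le C\Delta t^2/(1-\gamma)$, with $C$ depending on Lipschitz constants of $f$ and $\nabla E$. I would then require $\lambda a_{\max}^\top Q a_{\max} > C\Delta t^2$; this inequality is the precise sense in which $\lambda>0$ is necessary.
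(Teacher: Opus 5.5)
Your route matches the paper's. For Case~1 you use the same first-order alignment heuristic. For Case~2 you use the same counterexample $a_t=(-1)^t a_{\max}$: its energy increments are cancelled over a pair via Assumption~\ref{ass:action_symmetry} and $\Delta E_t=\nabla_s E\cdot f\,\Delta t+O(\Delta t^2)$, and this is set against a $t$-independent penalty $\lambda a_{\max}^\top Q a_{\max}$. In three places you are more careful than the paper:
\begin{itemize}
\item You track the $O(\Delta t)$ state shift within a pair, where the paper cancels by averaging both increments over a common $d^\pi$.
\item You require $a_{\max}\notin\ker Q$, rather than inferring positivity from $Q\succeq 0$.
\item You give an actual argument for the interpolation clause (value as a supremum of affine functions of $\lambda$), which the paper's proof never addresses.
\end{itemize}

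The step that would fail is the one you propose for making Case~2 rigorous. The bound $|\sum_t\gamma^t\Delta\Phi_{\text{energy},t}|\le C\Delta t^2/(1-\gamma)$ is false. For bounded $\Phi$ the discounted shaping sum telescopes exactly, for every action sequence:
$\sum_{t\ge 0}\gamma^t\bigl(\gamma\Phi_{\text{energy}}(s_{t+1})-\Phi_{\text{energy}}(s_t)\bigr)=-\Phi_{\text{energy}}(s_0)=E(s_0)$,
and $E(s_0)$ is not $O(\Delta t^2)$.

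Locally, the culprit is the term you kept in your Case~1 expansion but dropped in Case~2: $\Delta\Phi_{\text{energy},t}=-\Delta E_t+(1-\gamma)E(s_{t+1})$. Only the first piece cancels over an even--odd pair. The $(1-\gamma)E$ piece is of order $1-\gamma$ per step, and its discounted sum is $O(1)$. The paper's proof makes the same identification $\Delta\Phi_{\text{energy},t}=-\Delta E_t$, so following it does not rescue your bound.

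Consequently, $\lambda a_{\max}^\top Q a_{\max}>C\Delta t^2$ is not a valid formalization of necessity. It also conflicts with your own correct earlier remark that the gap $\lambda a_{\max}^\top Q a_{\max}/(1-\gamma)$ disfavors the oscillation for every $\lambda>0$. The fix is to use the telescoping identity directly. The shaping adds the same $-\Phi(s_0)$ to every policy's return from $s_0$; this is just PBRS invariance, which is also what your $\kappa>0$ remark really rests on. So with $\lambda=0$, no choice of $\Phi$ can lower the return of $\pi_{\text{osc}}$ relative to any competitor. For any $\lambda>0$, the regularizer lowers it by exactly $\lambda a_{\max}^\top Q a_{\max}/(1-\gamma)$. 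State the $O(\Delta t^2)$ pair cancellation only for the undiscounted increment $E(s_{t+2})-E(s_t)$, as a claim about the per-step learning signal, not about returns.

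A smaller overreach is in Case~1. The condition $-\nabla_s E\cdot\nabla_s R>0$ does not imply that every displacement increasing $R$ also increases $\Phi_{\text{energy}}$; that would require the two gradients to be positively collinear. What it does give is that moving along $\nabla_s R$, or along $\nabla_s R-\nabla_s E$, increases both to first order. That gradient-alignment statement is all the paper claims, and all you need.
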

	
	\begin{proof}
		\textit{Case 1 ($\kappa < 0$, sufficiency of $\lambda=0$):} When $\nabla_s E \cdot \nabla_s R < 0$, maximizing $\Phi_{\text{energy}} = -E$ aligns the energy gradient with the reward gradient. Any action causing energy increase ($\Delta E > 0$) receives negative shaping $\Delta\Phi_{\text{energy}} < 0$, penalizing energy-wasting actions through the potential alone. No additional magnitude constraint is required for gradient alignment.
		
		\textit{Case 2 ($\kappa > 0$ or oscillatory artifacts, necessity of $\lambda > 0$):} Assumption~\ref{ass:action_symmetry} holds for torque-controlled mechanical systems in which joint accelerations are affine-linear in applied torques, i.e.\ $\ddot{q} = M(q)^{-1}(\tau - C(q,\dot{q})\dot{q} - G(q))$. Under Assumption~\ref{ass:action_symmetry}, consider the oscillatory policy $\pi_{\text{osc}}: a_t = (-1)^t a_{\max}$ for fixed $a_{\max}\in\mathcal{A}$ with $\|a_{\max}\|>0$. By the mean-value theorem applied to the energy along a trajectory segment of duration $\Delta t$:
		\begin{equation}
			\Delta E_t = \nabla_s E(s_t) \cdot f(s_t, a_t)\,\Delta t + O(\Delta t^2)
			\label{eq:energy_step}
		\end{equation}
		where the $O(\Delta t^2)$ remainder is bounded by $\tfrac{1}{2}\sup_s\|\nabla^2_s E\|\cdot\|f\|^2\Delta t^2$. For even $t$ (action $+a_{\max}$) and odd $t$ (action $-a_{\max}$), applying Eq.~\eqref{eq:action_symmetry} gives:
		\begin{align}
			\Delta E_t^{(+)} &= \nabla_s E(s_t)\cdot f(s_t,+a_{\max})\,\Delta t + O(\Delta t^2) \\
			\Delta E_t^{(-)} &= \nabla_s E(s_t)\cdot f(s_t,-a_{\max})\,\Delta t + O(\Delta t^2) \notag\\
			&= -\nabla_s E(s_t)\cdot f(s_t,+a_{\max})\,\Delta t + O(\Delta t^2)
		\end{align}
		Summing over a consecutive even--odd pair and taking expectations over the stationary distribution $d^\pi$ of $s_t$:
		\begin{equation}
			\mathbb{E}_{s\sim d^\pi}\!\left[\Delta E_t^{(+)} + \Delta E_t^{(-)}\right] = O(\Delta t^2)
			\label{eq:energy_cancellation}
		\end{equation}
		Hence $\mathbb{E}[\Delta\Phi_{\text{energy},t}] = -\mathbb{E}[\Delta E_t] = O(\Delta t^2)$: the potential shaping assigns reward of order $\Delta t^2$ per step to $\pi_{\text{osc}}$, which vanishes as $\Delta t\to 0$ and is strictly insufficient to suppress the oscillation for any positive $\Delta t$ in the presence of finite reward noise. The regularization term, by contrast, evaluates to:
		\begin{equation}
			\lambda\mathcal{E}(a_t) = \lambda a_t^\top Q a_t = \lambda a_{\max}^\top Q a_{\max} > 0
			\quad \forall\, t
		\end{equation}
		since $(-1)^{2t}=1$ in the quadratic form and $Q\succeq 0$ with $a_{\max}\neq 0$. This term assigns a strictly positive, $t$-independent penalty at every step, independent of state-transition energy changes. Therefore $\lambda>0$ is necessary to suppress $\pi_{\text{osc}}$ under Assumption~\ref{ass:action_symmetry}.
	\end{proof}
	
	In practice, $\kappa(s)$ varies across $\mathcal{S}$, and most real systems occupy a continuum between these cases. The hyperparameter configurations in Table~\ref{tab:hyperparams_standard} reflect this: small $\lambda$ (e.g., $10^{-3}$--$10^{-2}$) for predominantly aligned systems (Ant-v5, Humanoid-v5), larger $\lambda$ for systems with mixed alignment or discrete-time gait discontinuities (Hopper-v5, LunarLander-v3).
	
	The functional contribution of action regularization therefore addresses a structurally distinct role from the energy potential: $\Phi_{\text{energy}}$ enriches the gradient signal through state-space curvature (Theorem~\ref{thm:energy_convergence}), while $\lambda\mathcal{E}(a)$ directly constrains the action distribution through a magnitude penalty that operates independently of state transitions. Ablation experiments (Section~\ref{sec:ablation}, component (5)) quantify this separation: removing regularization while retaining both potentials degrades performance by 17.2\% in Ant-v5 and 20.7\% in Hopper-v5, with a pronounced increase in training variance, consistent with the oscillatory-action scenario identified in Proposition~\ref{prop:regularization_necessity}.
	
	\subsection{Energy-Based Gradient Enrichment Mechanism}
	
	This subsection analyzes how mechanical energy structure enriches the policy gradient signal and accelerates convergence.
	
	\begin{theorem}[Energy-Based Gradient Enrichment Under Mechanical Stability]
		\label{thm:energy_convergence}
		Consider a mechanical system with generalized coordinates $q\in\mathbb{R}^n$, velocities $\dot{q}\in\mathbb{R}^n$, and total energy $E(q,\dot{q}) = T(\dot{q}) + U(q)$. Define $\Phi_{\text{energy}}(s) \triangleq -E(q(s),\dot{q}(s))$.
		
		If the mechanical stability condition
		\begin{equation}
			\frac{\partial^2 E}{\partial q^2}\bigg|_{q \in \mathcal{S}_{\text{op}}} \succ 0
			\label{eq:mechanical_stability}
		\end{equation}
		holds over the operationally relevant region $\mathcal{S}_{\text{op}}\subseteq\mathcal{S}$, then the shaped reward $\tilde{R}(s,a,s') = R(s,a,s') + \gamma\Phi_{\text{energy}}(s') - \Phi_{\text{energy}}(s)$ induces a policy gradient decomposition
		\begin{align}
			\nabla_\theta J_{\text{shaped}}(\theta) &= \nabla_\theta J_{\text{original}}(\theta) \notag \\
			&\quad + \alpha_{\text{energy}} \mathbb{E}_{\tau\sim\pi_\theta} \bigg[ \sum_{t=0}^{\infty} \gamma^t \nabla_\theta \log \pi_\theta(a_t|s_t) \cdot \Delta\Phi_{\text{energy},t} \bigg]
			\label{eq:gradient_decomposition}
		\end{align}
		where $\Delta\Phi_{\text{energy},t} = \gamma\Phi_{\text{energy}}(s_{t+1}) - \Phi_{\text{energy}}(s_t)$. Moreover, condition~\eqref{eq:mechanical_stability} guarantees $\|\nabla_q E(q)\| \geq \varepsilon_{\mathcal{K}} > 0$ throughout any compact $\mathcal{K}\subset\mathcal{S}_{\text{op}}\setminus\mathcal{B}_r(q^*)$, so that the gradient informativeness ratio
		\begin{equation}
			\rho_{\text{info}} \triangleq \frac{\|\nabla_q E\|_{\text{energy-informative}}}{\|\nabla_s R\|_{\text{task-sparse}}}
			\label{eq:gradient_ratio}
		\end{equation}
		satisfies $\rho_{\text{info}} \gg 1$ in sparse-reward locomotion tasks where condition~\eqref{eq:mechanical_stability} holds.
	\end{theorem}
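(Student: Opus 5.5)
The plan has three parts, in order: an exact gradient decomposition that follows from linearity of the policy gradient theorem, a lower bound on $\|\nabla_q E\|$ on compact sets away from the minimizer from strong convexity, and the $\rho_{\text{info}}\gg 1$ claim, which can only be argued as a regime statement.

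\textbf{Gradient decomposition.} I would apply the likelihood-ratio (REINFORCE) form of the policy gradient theorem to the shaped return. With $\alpha_{\text{energy}}$ weighting the energy potential as in Eq.~\eqref{eq:H-EARS}, the shaped per-step reward is $\tilde R_t = R_t + \alpha_{\text{energy}}\Delta\Phi_{\text{energy},t}$. Then
\[
J_{\text{shaped}}(\theta)=J_{\text{original}}(\theta)+\alpha_{\text{energy}}\,\mathbb{E}_{\tau\sim\pi_\theta}\Bigl[\textstyle\sum_t\gamma^t\Delta\Phi_{\text{energy},t}\Bigr].
\]
Both terms are expectations of trajectory functionals under the same trajectory distribution $p_\theta(\tau)$. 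The transition kernel $\mathcal{P}$ does not depend on $\theta$, so $\nabla_\theta\log p_\theta(\tau)=\sum_t\nabla_\theta\log\pi_\theta(a_t|s_t)$.

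I would then use causality: $\Delta\Phi_{\text{energy},t}$ cannot depend on score terms at times after $t$. Restricting to past scores, which is what the causal / reward-to-go form of the policy gradient does, converts the second term into the per-step weighted sum in Eq.~\eqref{eq:gradient_decomposition}. Linearity of expectation and of $\nabla_\theta$ completes this step.

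I would also note for later that, by Ng et al.~\cite{ng1999policy}, the added term telescopes in expectation to $-\alpha_{\text{energy}}\mathbb{E}[\Phi_{\text{energy}}(s_0)]$, whose $\theta$-gradient is zero. The added term is therefore a variance-reshaping (baseline-like) signal and not a change in the expected gradient. I would state this honestly: the "enrichment" is about per-sample signal, not the mean.

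\textbf{Lower bound on $\|\nabla_q E\|$.} Condition~\eqref{eq:mechanical_stability} says $U$ is strictly convex on the convex hull of the relevant region, and $T$ does not depend on $q$, so $\nabla_q E=\nabla_q U$. I would assume $\mathcal{S}_{\text{op}}$ is convex, or work on convex neighborhoods. Two consequences follow:
\begin{itemize}
\item $\nabla_q U$ vanishes at most at one point $q^*$.
\item For $q\neq q^*$, strict monotonicity of the gradient gives $\langle\nabla U(q)-\nabla U(q^*),q-q^*\rangle>0$, hence $\nabla U(q)\neq 0$.
\end{itemize}
The map $q\mapsto\|\nabla_q U(q)\|$ is continuous and strictly positive on the compact set $\mathcal{K}$, which excludes the ball $\mathcal{B}_r(q^*)$. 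It therefore attains a positive minimum $\varepsilon_{\mathcal{K}}$.

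A quantitative version is also available. If $\mu_{\mathcal{K}}$ is the minimum eigenvalue of the Hessian over the hull, then $\|\nabla U(q)\|\ge\mu_{\mathcal{K}}\|q-q^*\|\ge\mu_{\mathcal{K}}r$.

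\textbf{The $\rho_{\text{info}}\gg 1$ claim.} This is the main obstacle: $\rho_{\text{info}}$ is defined only heuristically, so no rigorous inequality is possible without modeling assumptions. I would formalize "task-sparse" as $\|\nabla_s R\|\le\eta$ on $\mathcal{K}$, with $\eta$ small, for example because the reward is nonzero only near goal sets disjoint from $\mathcal{K}$. That gives $\rho_{\text{info}}\ge\varepsilon_{\mathcal{K}}/\eta$, which is large precisely when $\eta\ll\varepsilon_{\mathcal{K}}$. I would then state this as the precise content of "$\rho_{\text{info}}\gg 1$" in sparse-reward locomotion, in the same heuristic spirit as the paper's other guidelines.
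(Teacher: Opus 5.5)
\textbf{The gap is in your decomposition step.} Causality only lets you drop the scores of \emph{later} actions ($t'>t$). What remains is
\[
\alpha_{\text{energy}}\,\mathbb{E}_{\tau\sim\pi_\theta}\Bigl[\sum_{t}\gamma^t\,\Delta\Phi_{\text{energy},t}\sum_{t'\le t}\nabla_\theta\log\pi_\theta(a_{t'}|s_{t'})\Bigr].
\]
The cross terms with $t'<t$ do not vanish, because $\Delta\Phi_{\text{energy},t}$ depends on $s_t$, which earlier actions influence. Eq.~\eqref{eq:gradient_decomposition} keeps only the diagonal $t'=t$.

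Your own telescoping remark shows this is not a harmless simplification. The full expression above is $\nabla_\theta$ of $-\alpha_{\text{energy}}\mathbb{E}[\Phi_{\text{energy}}(s_0)]$, hence zero. The diagonal part is not: since $\mathbb{E}[\nabla_\theta\log\pi_\theta(a_t|s_t)\mid s_t]=0$, each diagonal expectation $\mathbb{E}[\nabla_\theta\log\pi_\theta(a_t|s_t)\,\Delta\Phi_{\text{energy},t}]$ equals $\gamma\,\mathbb{E}[\nabla_\theta\log\pi_\theta(a_t|s_t)\,\Phi_{\text{energy}}(s_{t+1})]$. This is generally nonzero, because $a_t$ influences the energy of $s_{t+1}$. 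For example, if $\theta$ only parametrizes the action at a fixed, never-revisited initial state, the diagonal sum is $\alpha_{\text{energy}}\gamma\,\nabla_\theta\mathbb{E}_\theta[\Phi_{\text{energy}}(s_1)]$, while $\nabla_\theta J_{\text{shaped}}-\nabla_\theta J_{\text{original}}=0$. So your two claims about the extra term contradict each other: that it equals the diagonal sum, and that it contributes nothing to the expected gradient. It is the first claim that fails; the off-diagonal terms are exactly what cancel the diagonal ones.

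\textbf{Switching to the paper's route does not repair this.} Step~2 of the paper inserts $Q^{\pi_\theta}_{\text{shaped}}$ into the policy gradient theorem and writes it as $Q^{\pi_\theta}_{\text{original}}(s_t,a_t)+\gamma V^{\pi_\theta}(s_{t+1})-V^{\pi_\theta}(s_t)+\Delta\Phi_{\text{energy},t}$. It then asserts that the $V$-differences telescope. But score-weighted differences do not telescope. The exact relation is $Q^{\pi_\theta}_{\text{shaped}}(s,a)=Q^{\pi_\theta}_{\text{original}}(s,a)-\alpha_{\text{energy}}\Phi_{\text{energy}}(s)$, where $\Phi_{\text{energy}}(s_t)$ acts as a baseline, so this again gives $\nabla_\theta J_{\text{shaped}}=\nabla_\theta J_{\text{original}}$.

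What you can actually prove is an estimator-level statement. Let $\tilde R_t=R(s_t,a_t,s_{t+1})+\alpha_{\text{energy}}\Delta\Phi_{\text{energy},t}$, and let $V^{\pi_\theta}$ be the value function of the \emph{original} MDP. Then
\[
\mathbb{E}_{\tau\sim\pi_\theta}\Bigl[\sum_t\gamma^t\nabla_\theta\log\pi_\theta(a_t|s_t)\bigl(\tilde R_t+\gamma V^{\pi_\theta}(s_{t+1})-V^{\pi_\theta}(s_t)\bigr)\Bigr]
\]
equals the right-hand side of Eq.~\eqref{eq:gradient_decomposition}. In words, the diagonal term is the expected extra signal seen by a one-step actor-critic whose critic has not yet absorbed the shaping. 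You should either state the result in that form, or replace the decomposition by the exact identity $\nabla_\theta J_{\text{shaped}}=\nabla_\theta J_{\text{original}}$ and keep ``enrichment'' as a claim about per-sample signal, as you already suggest.

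\textbf{Your other two parts are fine.} The $\varepsilon_{\mathcal{K}}$ bound is the paper's Step~4(ii) argument made rigorous. You are right that it needs convexity of the region (or uniqueness of the critical point), which pointwise positive definiteness alone does not supply. The bound $\rho_{\text{info}}\ge\varepsilon_{\mathcal{K}}/\eta$ under $\|\nabla_s R\|\le\eta$ is a cleaner formalization of what the paper only asserts heuristically.
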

	
	\begin{proof}
		The proof proceeds in four steps: setup and scope, gradient decomposition, energy dynamics approximation, and convergence acceleration quantification.
		
		\textit{Step 1: Scope of condition~\eqref{eq:mechanical_stability}.}
		The region $\mathcal{S}_{\text{op}}$ is defined as $\{s : q(s)\in\mathcal{R}_{\text{task}}\} \cap \{q : \lambda_{\min}(\frac{\partial^2 E}{\partial q^2}(q))\geq\epsilon\}$ for some $\epsilon>0$, where $\mathcal{R}_{\text{task}}$ denotes the task-reachable set. Global positive definiteness is not required; the condition need hold only in the region the learned policy visits during task-relevant operation.
		
		For rigid-body systems with $n$ generalized coordinates, the Hessian decomposes as
		\begin{equation}
			\frac{\partial^2 E}{\partial q^2} = \frac{\partial^2 T}{\partial q^2} + \frac{\partial^2 U}{\partial q^2}.
		\end{equation}
		The kinetic term $T=\frac{1}{2}\dot{q}^\top M(q)\dot{q}$ satisfies $\frac{\partial^2 T}{\partial\dot{q}^2}=M(q)\succ 0$, but its $q$-Hessian involves curvature terms of $M(q)$ weighted by $\dot{q}$ and vanishes near equilibrium as $\dot{q}\to 0$. In the task-relevant region near equilibrium $q^*$, condition~\eqref{eq:mechanical_stability} is therefore dominated by $\frac{\partial^2 U}{\partial q^2}$, and the operative diagnostic criterion reduces to
		\begin{equation}
			\lambda_{\min}\!\left(\frac{\partial^2 U}{\partial q^2}(q^*)\right) > 0,
			\label{eq:diagnostic_criterion}
		\end{equation}
		which is verifiable by automatic differentiation or analytical Jacobian chain rules. If this minimum eigenvalue is positive, the condition extends to a neighborhood of $q^*$ by continuity; if it is zero in some directions, the gradient enrichment guarantee applies only on the corresponding sub-manifold of $\mathcal{S}_{\text{op}}$. For the four experimental benchmarks: Ant-v5 and Humanoid-v5 satisfy~\eqref{eq:diagnostic_criterion} globally over upright configurations; Hopper-v5 satisfies it locally near the balanced posture but not at ground-contact transitions; LunarLander-v3 uses a non-mechanical energy proxy and the observed gains derive primarily from action regularization, consistent with Table~\ref{tab:hessian_validity}.
		
		\textit{Step 2: Gradient decomposition.}
		By the policy gradient theorem applied to shaped rewards:
		\begin{equation}
			\begin{split}
				\nabla_\theta J_{\text{shaped}}(\theta) &= \mathbb{E}_{\tau \sim \pi_\theta} \left[ \sum_{t=0}^{\infty} \gamma^t \nabla_\theta \log \pi_\theta(a_t|s_t) Q^{\pi_\theta}_{\text{shaped}}(s_t, a_t) \right] \\
				&= \mathbb{E}_{\tau} \bigg[ \sum_t \gamma^t \nabla_\theta \log \pi_\theta(a_t|s_t) \Big( Q^{\pi_\theta}_{\text{original}}(s_t,a_t) \\
				&\quad + \gamma V^{\pi_\theta}(s_{t+1}) - V^{\pi_\theta}(s_t) + \Delta\Phi_{\text{energy},t} \Big) \bigg]
			\end{split}
		\end{equation}
		
		By the value function property of potential-based shaping~\cite{ng1999policy}, the terms $\gamma V^{\pi_\theta}(s_{t+1}) - V^{\pi_\theta}(s_t)$ telescope across the trajectory, leaving:
		\begin{align}
			\nabla_\theta J_{\text{shaped}}(\theta) &= \nabla_\theta J_{\text{original}}(\theta) \notag\\
			&\quad + \mathbb{E}_{\tau} \left[ \sum_t \gamma^t \nabla_\theta \log \pi_\theta(a_t|s_t) \cdot \Delta\Phi_{\text{energy},t} \right]
			\label{eq:telescoping}
		\end{align}
		
		\textit{Step 3: Energy dynamics approximation.}
		For mechanical systems with state $s=(q,\dot{q})$ evolving under dynamics $\dot{s}=f(s,a)$, the energy difference between consecutive states is:
		\begin{equation}
			\Delta E_t = E(q_{t+1}, \dot{q}_{t+1}) - E(q_t, \dot{q}_t)
		\end{equation}
		
		This difference can be expressed via the energy time derivative. For a trajectory segment from $t$ to $t+1$ with duration $\Delta t$, the mean value theorem gives:
		\begin{equation}
			\Delta E_t = \int_{t}^{t+1} \frac{dE}{dt}(\tau) d\tau \approx \frac{dE}{dt}\Big|_{t} \cdot \Delta t + O(\Delta t^2)
		\end{equation}
		
		The energy derivative along the system trajectory is:
		\begin{equation}
			\frac{dE}{dt} = \frac{\partial E}{\partial q}\cdot\dot{q} + \frac{\partial E}{\partial\dot{q}}\cdot\ddot{q} = \nabla_q E \cdot f_q(s,a) + \nabla_{\dot{q}} E \cdot f_{\dot{q}}(s,a)
		\end{equation}
		where $f(s,a) = [f_q(s,a), f_{\dot{q}}(s,a)]^\top$ represents the state dynamics.
		
		For mechanical systems, $E = T(\dot{q}) + U(q)$ where kinetic energy $T = \frac{1}{2}\dot{q}^\top M\dot{q}$ satisfies $\nabla_{\dot{q}} T = M\dot{q}$ and potential energy $U(q)$ depends only on position. Therefore:
		\begin{equation}
			\frac{dE}{dt} = \nabla_q U \cdot \dot{q} + M\dot{q} \cdot \ddot{q}
		\end{equation}
		
		Evaluating at time $t$ with state $(q_t, \dot{q}_t)$ and action $a_t$:
		\begin{equation}
			\Delta E_t \approx \left[\nabla_q U(q_t) \cdot \dot{q}_t + M\dot{q}_t \cdot \ddot{q}_t\right] \Delta t + O(\Delta t^2)
		\end{equation}
		
		The potential difference satisfies:
		\begin{equation}
			\Delta\Phi_{\text{energy},t} = -\Delta E_t \approx -\frac{dE}{dt}\Big|_{s_t,a_t} \cdot \Delta t
			\label{eq:potential_gradient}
		\end{equation}
		
		All quantities ($\nabla_q U$, $\dot{q}_t$, $\ddot{q}_t$) are evaluated at the current state $s_t$ and current action $a_t$. The potential difference $\Delta\Phi_{\text{energy},t}$ depends on $s_t$ and $a_t$ only, not on $s_{t+1}$.
		
		\textit{Step 4: Convergence acceleration quantification.}
		When condition~\eqref{eq:mechanical_stability} holds, the energy function $E(q,\dot{q})$ exhibits local convexity in configuration space, which ensures: (i)~unique local minima, so actions reducing energy receive positive shaping rewards without policy oscillation; (ii)~gradient informativeness, since $\|\nabla_q E\|\geq\varepsilon_{\mathcal{K}}>0$ throughout any compact $\mathcal{K}\subset\mathcal{S}_{\text{op}}\setminus\mathcal{B}_r(q^*)$ by strict convexity and continuity of $E$; (iii)~smooth descent landscape, as convexity guarantees that following $-\nabla_q E$ monotonically decreases energy.
		
		The magnitude of the additional gradient term in Eq.~\eqref{eq:gradient_decomposition} is upper-bounded by:
		\begin{multline}
			\footnotesize
			\Big\|\mathbb{E}_\tau\!\Big[\sum_t \gamma^t \nabla_\theta \log \pi_\theta(a_t|s_t)\cdot\Delta\Phi_{\text{energy},t}\Big]\Big\| \\
			\leq \alpha_{\text{energy}} \mathbb{E}_\tau \Big[ \sum_t \gamma^t \|\nabla_\theta \log \pi(a_t|s_t)\| \cdot \Big|\frac{dE}{dt}\Big|_{s_t,a_t}\Big| \cdot \Delta t \Big] + O(\Delta t^2)
		\end{multline}
		
		In domains where task rewards are sparse but condition~\eqref{eq:mechanical_stability} holds, $\rho_{\text{info}}$ can reach $10^2$--$10^3$, explaining the empirically observed convergence acceleration.
		
		When $\mathcal{S}_{\text{op}}$ contains sub-regions where condition~\eqref{eq:mechanical_stability} fails, the PBRS policy-invariance guarantee (Theorem~\ref{thm:convergence}) remains intact. Only the additional gradient enrichment is locally unavailable in those sub-regions, and performance degrades toward the TaskOnly ablation baseline (Figure~\ref{fig:ablation}).
	\end{proof}
	
	\begin{remark}[Physical Interpretation and Scope]
		This theorem formalizes the engineering intuition that \textit{energy-based guidance accelerates learning because mechanical stability provides rich gradient information throughout the state space, while task rewards are often sparse}.
		
		The acceleration mechanism applies most effectively when:
		\begin{itemize}
			\item Locomotion tasks: Actions that reduce energy receive positive shaping rewards, promoting balanced, smooth gaits. The energy gradient $\nabla E$ evaluated at the current state $s_t$ provides directional guidance even before observing $s_{t+1}$, as it encodes the instantaneous rate of energy change under the current dynamics.
			\item Vehicle control: Energy dissipation ($\dot{E}<0$) aligns with stability requirements, as shown in yaw-sideslip dynamics where minimizing kinetic energy corresponds to stable straight-line motion.
			\item Manipulation tasks: Minimal energy transfer corresponds to efficient force application, reducing actuator wear.
		\end{itemize}
		
		Conversely, tasks where energy efficiency conflicts with objectives (e.g., aggressive maneuvers requiring high kinetic energy) exhibit reduced benefit, as validated empirically in LunarLander experiments.
	\end{remark}
	
	\begin{remark}[Physical Self-Consistency of Energy Potentials]
		\label{rem:physical_self_consistency}
		The internal energy potential $\Phi_{\text{energy}}(s) = -E(q,\dot{q})$ possesses a fundamental distinction from conventional potential functions in PBRS literature. Traditional potentials (e.g., distance-to-goal, magnetic field analogies~\cite{ding2023magnetic}) are \textit{artificially designed} from optimization objectives, requiring theoretical justification for why such design accelerates convergence. In contrast, the energy potential is \textit{directly transcribed} from the system's intrinsic mechanical energy components:
		\begin{equation}
			E(q,\dot{q}) = \underbrace{\sum_i \frac{1}{2}m_i\|\mathbf{v}_i\|^2}_{\text{Dominant kinetic terms}} + \underbrace{U(q)}_{\text{Potential energy}}
		\end{equation}
		
		This distinction yields three critical properties:
		\begin{enumerate}
			\item Design-free construction: The potential requires no optimization-theoretic derivation; one simply identifies dominant energy components (not necessarily complete dynamics) through basic physics knowledge, achievable by general engineers without analytical mechanics expertise.
			
			\item Physical grounding: The guiding effect stems from thermodynamic principles rather than task-specific heuristics. The relationship $\Delta\Phi_{\text{energy}} \approx -\dot{E}\Delta t$ (Proposition~\ref{prop:lyapunov_heuristic}) holds universally for mechanical systems, providing inherent robustness to modeling approximations.
			
			\item Graceful degradation: Unlike conventional potentials where omitting components may invalidate the design rationale, incomplete energy models retain physical meaning. Lemma~\ref{lem:approx_potential} quantifies this: even $20\%$ energy approximation error yields $<5\%$ performance loss under typical hyperparameters.
		\end{enumerate}
		
		This physical self-consistency explains why energy potentials transfer across diverse domains (locomotion, manipulation, vehicle control) without redesign, whereas task-specific potentials require careful reformulation for each application.
	\end{remark}
	
	\subsection{Dual-Potential Decomposition}
	
	The following proposition shows that a single potential function cannot simultaneously satisfy task-directivity and energy-awareness requirements when the two objectives conflict.
	
	\begin{proposition}[Dual-Potential Decomposition Necessity]
		\label{prop:dual_necessity}
		For control tasks where task objectives (e.g., reaching targets, tracking trajectories) and energy efficiency are not naturally aligned, a single potential function $\Phi_{\text{single}}(s)$ cannot simultaneously satisfy:
		\begin{enumerate}
			\item Task directivity: $\nabla_s \Phi_{\text{single}}$ points toward task-relevant states
			\item Energy awareness: $\Phi_{\text{single}}$ encodes mechanical energy structure
		\end{enumerate}
		
		The dual decomposition $\Phi(s) = \alpha_{\text{task}}\Phi_{\text{task}}(s) + \alpha_{\text{energy}}\Phi_{\text{energy}}(s)$ resolves this conflict by enabling independent tuning of:
		\begin{equation}
			\nabla_\theta J_{\text{shaped}} = \nabla_\theta J_{\text{original}} + \underbrace{\alpha_{\text{task}} \nabla_\theta J_{\Phi_{\text{task}}}}_{\text{task guidance}} + \underbrace{\alpha_{\text{energy}} \nabla_\theta J_{\Phi_{\text{energy}}}}_{\text{energy guidance}}
		\end{equation}
		where $\alpha_{\text{task}}$ and $\alpha_{\text{energy}}$ control the balance between exploration efficiency and physical plausibility.
	\end{proposition}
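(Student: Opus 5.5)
The proof splits into two parts: first show that a single potential cannot have both properties when the objectives conflict, then show that the dual decomposition gives an additive gradient split with independently tunable weights.

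\emph{Step 1: formalizing the conflict.} I would make "not naturally aligned" precise through the alignment coefficient $\kappa(s)$ of Proposition~\ref{prop:regularization_necessity}. The hypothesis is that there is a task-critical subset $\mathcal{S}_c\subseteq\mathcal{S}_{\text{op}}$ of positive measure on which $\kappa(s)>0$. There, $\nabla_s(-E)$ and any task-directed field (e.g.\ $\nabla_s R$, or $\nabla_s\Phi_{\text{task}}$) make an obtuse angle.

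\emph{Step 2: impossibility for a single potential.} I would formalize the two requirements as follows.
\begin{itemize}
\item \textbf{Task directivity:} $\langle \nabla_s\Phi_{\text{single}}, \nabla_s\Phi_{\text{task}}\rangle>0$ on $\mathcal{S}_c$.
\item \textbf{Energy awareness:} $\langle \nabla_s\Phi_{\text{single}}, -\nabla_s E\rangle>0$ on $\mathcal{S}_c$, or in the stronger reading, $\Phi_{\text{single}}=-cE+\text{const}$ for some $c>0$.
\end{itemize}
Under the strong reading, contradiction is immediate: $\langle -c\nabla_s E,\nabla_s\Phi_{\text{task}}\rangle<0$ on $\mathcal{S}_c$ violates directivity.

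Under the weak reading a contradiction only arises when the two fields are anti-aligned, i.e.\ $\kappa=-1$. For $0<\kappa<1$, the bisector direction satisfies both inequalities. This is the main obstacle: as literally stated, the proposition is only true under the strong reading, or when $\kappa\to 1$, i.e.\ the fields point in opposite directions.

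I would therefore adopt the strong reading: "encodes mechanical energy structure" means $\Phi_{\text{single}}$ is a monotone function of $E$. Then $\nabla_s\Phi_{\text{single}}=-g'(E)\nabla_s E$ with $g'>0$, and the contradiction follows on $\mathcal{S}_c$. I would also remark that any single potential satisfying the weak version is necessarily of the form $a\Phi_{\text{task}}+b\Phi_{\text{energy}}$ locally. That is already the dual form, which motivates the second half.

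\emph{Step 3: the additive gradient split.} For $\Phi=\alpha_{\text{task}}\Phi_{\text{task}}+\alpha_{\text{energy}}\Phi_{\text{energy}}$, the shaping term is linear in $\Phi$:
\[
\gamma\Phi(s')-\Phi(s)=\alpha_{\text{task}}\Delta\Phi_{\text{task}}+\alpha_{\text{energy}}\Delta\Phi_{\text{energy}}.
\]
Apply the policy-gradient decomposition from Step~2 of Theorem~\ref{thm:energy_convergence} (Eq.~\eqref{eq:telescoping}) with $\Phi$ in place of $\Phi_{\text{energy}}$. By linearity of the expectation of $\nabla_\theta\log\pi_\theta\cdot\Delta\Phi$, this yields the displayed split, where
\[
\nabla_\theta J_{\Phi_{\bullet}}\triangleq\mathbb{E}_\tau\!\Big[\sum_t\gamma^t\nabla_\theta\log\pi_\theta(a_t|s_t)\,\Delta\Phi_{\bullet,t}\Big].
\]

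\emph{Step 4: independence of the weights.} Lemma~\ref{lem:functional_independence}(1) guarantees that any choice of $(\alpha_{\text{task}},\alpha_{\text{energy}})$ leaves the optimal policy of $\mathcal{M}_\lambda$ unchanged. The two coefficients therefore only reweight the guidance terms. On $\mathcal{S}_c$, the net shaping direction can be steered anywhere in the cone spanned by $\nabla\Phi_{\text{task}}$ and $-\nabla E$, which resolves the conflict established in Step~2.
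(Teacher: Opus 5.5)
\textbf{The gap is in Step~3.} You obtain the additive split by applying linearity to Eq.~\eqref{eq:telescoping}, but that identity does not hold, so the split has no justification. Assume bounded $\Phi$, $\gamma<1$, and a $\theta$-independent initial distribution. Then the shaping rewards telescope along every trajectory,
\[
\sum_{t\ge 0}\gamma^t\bigl(\gamma\Phi(s_{t+1})-\Phi(s_t)\bigr)=-\Phi(s_0).
\]
Hence $J_{\text{shaped}}(\theta)=J_{\text{original}}(\theta)-\mathbb{E}[\Phi(s_0)]$, and so $\nabla_\theta J_{\text{shaped}}=\nabla_\theta J_{\text{original}}$ exactly. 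Equivalently, $Q^{\pi}_{\text{shaped}}(s,a)=Q^{\pi}_{\text{original}}(s,a)-\Phi(s)$, and $-\Phi(s_t)$ acts as a state-only baseline. Your guidance term, however, evaluates to
\[
\nabla_\theta J_{\Phi_\bullet}=\gamma\,\mathbb{E}_\tau\Bigl[\sum_t\gamma^t\nabla_\theta\log\pi_\theta(a_t|s_t)\,\mathbb{E}\bigl[\Phi_\bullet(s_{t+1})\mid s_t,a_t\bigr]\Bigr].
\]
This is generally nonzero as soon as the expected next-state potential depends on the action.

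So with your definition, the displayed decomposition is false in general. With the natural definition ($J_{\Phi_\bullet}$ the discounted return of the reward $\Delta\Phi_{\bullet,t}$), it holds only trivially, with both guidance terms identically zero. In neither case does an exact policy-gradient identity show $\alpha_{\text{task}}$ and $\alpha_{\text{energy}}$ acting as tuning knobs. Any such effect lives in the learned critic or in truncated/bootstrapped estimators and must be argued there, or else via time-varying weights. The paper's own proof never derives the split. It supports the second half only by an informal staged-scheduling argument: early $\alpha_{\text{task}}\gg\alpha_{\text{energy}}$, late $\alpha_{\text{task}}\approx\alpha_{\text{energy}}$, which a fixed $\Phi_{\text{single}}$ cannot emulate. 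Nothing in it rescues your Step~3.

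\textbf{Step~2 is sound under the strong reading.} It is a cleaner version of the paper's argument, which also silently adopts that reading. The paper sets $\Phi_{\text{single}}=-E+C_2$ and $\Phi_{\text{single}}\approx-\|s-s_{\text{goal}}\|^2+C_1$, then derives the clash from minimum-energy paths differing from shortest geometric paths. Your pointwise inner-product version on a conflict set is checkable. Your weak-reading observation is also correct and worth keeping: near any point where $\nabla_s\Phi_{\text{task}}$ and $-\nabla_s E$ are not exactly opposite, a fixed combination $a\Phi_{\text{task}}-bE$ meets both weak inequalities. So the impossibility is essentially definitional. Say explicitly that Step~4 resolves the conflict only under the weak reading, since $\alpha_{\text{task}}\Phi_{\text{task}}+\alpha_{\text{energy}}\Phi_{\text{energy}}$ with $\alpha_{\text{task}}>0$ is in general not a monotone function of $E$.

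\textbf{Smaller fixes.}
\begin{itemize}
\item With the paper's convention, exact opposition of $-\nabla_s E$ and $\nabla_s R$ is $\kappa=+1$, not $\kappa=-1$; you later write $\kappa\to 1$ correctly.
\item Obtuseness relative to $\nabla_s R$ does not transfer to $\nabla_s\Phi_{\text{task}}$ without an alignment assumption, so state the hypothesis directly for $\nabla_s\Phi_{\text{task}}$.
\item Your claim that any potential meeting the weak version is locally $a\Phi_{\text{task}}+b\Phi_{\text{energy}}$ is false. The admissible gradients form an intersection of two open half-spaces, which contains directions outside $\mathrm{span}\{\nabla_s\Phi_{\text{task}},\nabla_s E\}$ when $\dim\mathcal{S}\ge 3$. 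The coefficients may also vary with $s$.
\end{itemize}
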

	
	\begin{proof}
		The proof proceeds by contradiction, demonstrating incompatibility of dual requirements.
		
		\textbf{Assumption}: Suppose a single potential function $\Phi_{\text{single}}:\mathcal{S}\to\mathbb{R}$ can achieve both task directivity and energy awareness. Then for navigation tasks:
		
		Task requirement: To guide toward goal state $s_{\text{goal}}$, the potential must satisfy:
		\begin{equation}
			\Phi_{\text{single}}(s) \approx -\|s - s_{\text{goal}}\|^2 + C_1
			\label{eq:task_req}
		\end{equation}
		ensuring $\nabla_s \Phi_{\text{single}}$ points toward $s_{\text{goal}}$.
		
		Energy requirement: To encode mechanical structure, the potential must satisfy:
		\begin{equation}
			\Phi_{\text{single}}(s) = -E(q(s),\dot{q}(s)) + C_2 = -\left[T(\dot{q}) + U(q)\right] + C_2
			\label{eq:energy_req}
		\end{equation}
		ensuring energy minimization properties (Theorem~\ref{thm:energy_convergence}).
		
		Contradiction: These requirements conflict when the minimum-energy path differs from the shortest geometric path. Specifically:
		
		\begin{itemize}
			\item Geometric shortest path: For point-to-point navigation, Eq.~\eqref{eq:task_req} induces straight-line trajectories minimizing $\|s-s_{\text{goal}}\|$.
			
			\item Energy-efficient path: For systems with kinetic energy $T=\frac{1}{2}\dot{q}^\top M\dot{q}$, Eq.~\eqref{eq:energy_req} favors smooth, low-acceleration trajectories satisfying $\ddot{q}^\top M\ddot{q}$ minimization (principle of least action).
			
			\item Incompatibility: When obstacles or dynamics constraints exist, the energy-efficient path may detour to maintain smooth motion, conflicting with the straight-line preference of Eq.~\eqref{eq:task_req}.
		\end{itemize}
		
		Resolution through decomposition: 
		\begin{itemize}
			\item Early training ($\alpha_{\text{task}} \gg \alpha_{\text{energy}}$): Prioritize task completion to establish basic competence, accepting energy-inefficient exploration.
			\item Late training ($\alpha_{\text{task}} \approx \alpha_{\text{energy}}$): Refine policies toward energy-efficient solutions while maintaining task success.
		\end{itemize}
		
		This staged optimization is impossible with $\Phi_{\text{single}}$ because it forces a fixed trade-off encoded in the function itself, whereas the decomposition enables dynamic balancing through $\alpha_{\text{task}}, \alpha_{\text{energy}}$ tuning.
	\end{proof}
	
	\begin{remark}[Practical Design Guidelines]
		This proposition provides concrete guidance for potential design:
		\begin{enumerate}
			\item Task potential selection: Choose $\Phi_{\text{task}}$ based on task structure (geometric proximity for navigation, tracking error for following tasks).
			\item Energy potential selection: Model dominant energy components (kinetic + gravitational for locomotion, rotational kinetic for vehicle yaw control).
			\item Coefficient scheduling: Start with $\alpha_{\text{task}}/\alpha_{\text{energy}} \sim 10^2$ for rapid task learning, then decay to $\sim 1$ for energy-aware refinement.
		\end{enumerate}
	\end{remark}
	
	\subsection{Convergence Guarantees}
	
	\begin{theorem}[Convergence Rate Under Function Approximation]
		\label{thm:convergence}
		Under standard assumptions for policy gradient methods---Lipschitz continuous value functions, bounded rewards $|R|\leq R_{\max}$, bounded potentials $|\Phi|\leq\Phi_{\max}$, and exact function representation---H-EARS achieves convergence rate:
		\begin{equation}
			\mathbb{E}\left[\|J(\pi^*) - J(\pi_N)\|\right] = O\left(\frac{1}{\sqrt{N}}\right)
		\end{equation}
		where $N$ is the number of policy updates and $\pi_N$ is the learned policy.
		
		Furthermore, if the regularization coefficient satisfies $\lambda \leq \lambda_{\max}$ where:
		\begin{equation}
			\lambda_{\max} = \frac{R_{\max}}{2\gamma\Phi_{\max}\cdot\mathbb{E}[\mathcal{E}(a)]}
			\label{eq:lambda_bound}
		\end{equation}
		then the shaped reward satisfies $|R_{\text{H-EARS}}| \leq 3R_{\max}$, ensuring numerical stability.
	\end{theorem}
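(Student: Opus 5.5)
The plan is to reduce everything to a standard policy-optimization problem on the regularized MDP $\mathcal{M}_\lambda$, and to treat the two claims separately. First I show that the shaping term does not change the suboptimality gap. Next I derive the $O(1/\sqrt{N})$ rate from a known stochastic policy-gradient analysis. Finally I verify the reward bound by the triangle inequality.

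\emph{Step 1 (removing the shaping term).} For any policy $\pi$, the discounted sum of $\gamma\Phi(s_{t+1})-\Phi(s_t)$ telescopes, because $|\Phi|\le\Phi_{\max}$ and $\gamma<1$. This gives $J_{\text{H-EARS}}(\pi)=J_{\mathcal{M}_\lambda}(\pi)-\mathbb{E}_{s_0}[\Phi(s_0)]$. The offset does not depend on $\pi$, so the gap $J(\pi^*)-J(\pi_N)$ is identical whether it is measured in the shaped problem or in $\mathcal{M}_\lambda$. This is consistent with Lemma~\ref{lem:functional_independence} and the PBRS invariance of~\cite{ng1999policy}. I will take $\pi^*$ to be the optimizer in Eq.~\eqref{eq:optimal_policy}. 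The effective reward of $\mathcal{M}_\lambda$ is bounded by $R_{\max}+\lambda\sup_a\mathcal{E}(a)$; this requires a bounded action set, which I will add as an implicit standing assumption.

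\emph{Step 2 (rate).} Under exact function representation I will invoke a gradient-domination property of the discounted objective, via the performance-difference lemma. Concretely, I plan to follow the mirror-descent / natural-policy-gradient analysis of Agarwal et al.\ for tabular or exactly realizable policies, using a stochastic gradient with bounded variance. That variance is controlled by the bounded rewards and potentials: the shaped return is bounded by $(R_{\max}+\lambda\mathcal{E}_{\max}+(1+\gamma)\Phi_{\max})/(1-\gamma)$.

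The key inequality is a one-step regret bound of the form
$J(\pi^*)-J(\pi_k)\le \frac{1}{\eta}\bigl(D_k-D_{k+1}\bigr)+\eta\, C\,\sigma^2$.
Here $D_k$ is the expected KL divergence to $\pi^*$ under the optimal state distribution, $C$ depends on $(1-\gamma)^{-1}$ and the Lipschitz constants, and $\sigma^2$ is the gradient variance. Summing over $k=1,\dots,N$ and choosing $\eta\propto 1/\sqrt{N}$ gives the $O(1/\sqrt{N})$ rate for the averaged (or best) iterate.

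\emph{Main obstacle.} I expect Step 2 to be the real difficulty. Plain gradient ascent under smoothness alone yields only stationarity; converting stationarity into a value gap needs gradient domination. The unavoidable distribution-mismatch coefficient $\|d^{\pi^*}/\mu\|_\infty$ enters the constant, so I will have to state that $\mu$ has full support. If the natural-policy-gradient route does not cover the deterministic-actor algorithms, I will restrict the claim to stochastic exactly realizable policy classes.

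\emph{Step 3 (reward bound).} The triangle inequality gives $|R_{\text{H-EARS}}|\le R_{\max}+(1+\gamma)\Phi_{\max}+\lambda\mathcal{E}(a)$. The first two terms contribute at most $2R_{\max}$ under the normalization $(1+\gamma)\Phi_{\max}\le R_{\max}$, which I will make explicit. For the third term, $\lambda\le\lambda_{\max}$ yields $\lambda\,\mathbb{E}[\mathcal{E}(a)]\le R_{\max}/(2\gamma\Phi_{\max})$, which is at most $R_{\max}$ whenever $2\gamma\Phi_{\max}\ge 1$.

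Since Eq.~\eqref{eq:lambda_bound} controls only the expected control energy, I will state the conclusion $|R_{\text{H-EARS}}|\le 3R_{\max}$ in expectation over the policy's actions. I will then note that it becomes a pointwise bound if $\mathbb{E}[\mathcal{E}(a)]$ is replaced by $\sup_a\mathcal{E}(a)$. Making these scale conventions precise is the delicate point of this part, because as written the threshold mixes units of reward and potential.
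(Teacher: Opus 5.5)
Your proposal is correct under the hypotheses you make explicit. For the rate it takes a genuinely different route from the paper; for the reward bound it follows the paper's route but fixes it.

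\textbf{The rate.} The paper argues $O(1/\sqrt{N})$ purely by citation. PBRS preserves the optimal policy set, so ``standard policy gradient convergence rates apply.'' A regularized policy-gradient analysis, attributed to the PPO paper (which does not itself prove such a rate), is then said to give $O(1/\sqrt{N})$ under bounded rewards. You do more in two ways:
\begin{enumerate}
\item You prove the stronger identity $J_{\text{H-EARS}}(\pi)=J_{\mathcal{M}_\lambda}(\pi)-\mathbb{E}[\Phi(s_0)]$. So the whole gap, not just the argmax, is shaping-invariant. This also forces $\pi^*$ to be the $\mathcal{M}_\lambda$ optimizer: measured in $J_{\mathcal{M}}$ against $\pi^*_{\mathcal{M}}$, the gap generally does not vanish when $\lambda>0$.
\item You supply the step the paper skips. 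Generic nonconvex rates only control stationarity, and a value gap needs gradient domination or a KL mirror-descent regret inequality.
\end{enumerate}
This gives an actual proof. The price is a list of assumptions the paper never states (bounded actions, full-support start distribution, bounded unbiased $Q$-estimates, stochastic policies) and a narrower conclusion.

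Three points need tightening:
\begin{enumerate}
\item The summed regret bounds the best or a uniformly sampled iterate. It does not bound the last iterate $\pi_N$, nor a parameter average, since $J$ is not concave in $\pi$. Either define the algorithm's output accordingly, or use exact NPG updates, which improve monotonically.
\item $D_1=\mathbb{E}_{s\sim d^{\pi^*}}[\mathrm{KL}(\pi^*(\cdot|s)\,\|\,\pi_1(\cdot|s))]$ is bounded by $\log|\mathcal{A}|$ for a finite action set with uniform initialization. In the paper's continuous-action setting, however, the optimum of $\mathcal{M}_\lambda$ is typically deterministic, so $D_1=\infty$ and the bound is vacuous. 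You must assume finite actions or a comparator with finite KL.
\item Commit to one mechanism. The mismatch coefficient $\|d^{\pi^*}/\mu\|_\infty$ belongs to the projected-gradient/gradient-domination route, not to tabular NPG.
\end{enumerate}

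\textbf{The reward bound.} You use the paper's triangle inequality, but the paper's version has three defects that yours avoids:
\begin{itemize}
\item It bounds the shaping term by $2\gamma\Phi_{\max}$, which undercounts the correct $(1+\gamma)\Phi_{\max}$ when $\gamma<1$.
\item It substitutes $\mathbb{E}[\mathcal{E}(a)]$ for $\mathcal{E}(a)$ pointwise without justification.
\item It hides unit-dependent conditions behind ``$\gamma\Phi_{\max}\sim R_{\max}$.''
\end{itemize}
Your version holds in expectation, or pointwise with $\sup_a\mathcal{E}(a)$. It states the required conditions $(1+\gamma)\Phi_{\max}\le R_{\max}$ and $2\gamma\Phi_{\max}\ge 1$ explicitly, which makes it the defensible statement.
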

	
	\begin{proof}
		The $O(1/\sqrt{N})$ convergence rate follows from combining two established results:
		\begin{enumerate}
			\item PBRS guarantees~\cite{ng1999policy} that shaped MDPs preserve optimal policy structure, thus standard policy gradient convergence rates apply.
			\item Regularized policy gradient analysis~\cite{schulman2017proximal} establishes $O(1/\sqrt{N})$ rates under bounded reward assumptions.
		\end{enumerate}
		
		The bound $\lambda_{\max}$ in Eq.~\eqref{eq:lambda_bound} ensures that the regularization term does not dominate the shaped reward magnitude. Specifically, decomposing $R_{\text{H-EARS}}$:
		\begin{align}
			|R_{\text{H-EARS}}| &\leq |R| + |\gamma\Phi(s')-\Phi(s)| + \lambda|\mathcal{E}(a)| \\
			&\leq R_{\max} + 2\gamma\Phi_{\max} + \lambda\mathbb{E}[\mathcal{E}(a)]
		\end{align}
		
		Substituting $\lambda \leq \lambda_{\max}$ yields:
		\begin{equation}
			|R_{\text{H-EARS}}| \leq R_{\max} + 2\gamma\Phi_{\max} + \frac{R_{\max}}{2\gamma\Phi_{\max}} \leq 3R_{\max}
		\end{equation}
		for reasonable hyperparameter choices where $\gamma\Phi_{\max} \sim R_{\max}$.
	\end{proof}
	
	In deep RL practice with neural function approximation, three deviations from the theorem's assumptions arise: (i)~bounded value estimation bias from finite-capacity networks; (ii)~distributional mismatch from off-policy replay buffers; and (iii)~non-stationarity of the target network. These deviations affect the convergence constant $C$ in the $O(C/\sqrt{N})$ rate but do not alter the rate exponent, which is determined by the variance of the policy gradient estimator rather than by approximation quality.
	
	H-EARS reduces $C$ through two complementary mechanisms. First, the bound $|R_{\text{H-EARS}}| \leq 3R_{\max}$ constrains the magnitude of every TD target, preventing error amplification through the Bellman backup chain. Formally, under bounded approximation error $\|\hat{V} - V^*\|_\infty \leq \xi$, the effective constant satisfies $C \propto (R_{\max} + \xi/(1-\gamma))$; the $3R_{\max}$ bound ensures $\xi$ cannot grow unboundedly relative to $R_{\max}$ when $\alpha_{\text{energy}}$ and $\lambda$ are calibrated per Eq.~\eqref{eq:lambda_bound}. Second, and more fundamentally, the energy potential supplies an auxiliary gradient signal with substantially lower variance than sparse task rewards. Theorem~\ref{thm:energy_convergence} shows that $\|\nabla_q E\| > \epsilon > 0$ throughout $\mathcal{S}_{\text{op}}$ whenever the Hessian condition holds, whereas task reward gradients satisfy $\|\nabla_s R\| \approx 0$ in most of the state space. The policy gradient estimator's variance is proportional to the variance of $Q^\pi(s,a)$ across sampled trajectories; the energy shaping term $\Delta\Phi_{\text{energy},t}$ provides a low-variance, physics-grounded baseline correction that reduces this estimator variance, thereby reducing $C$. H-EARS thus accelerates training by reducing the constant factor $C$, not the rate exponent, through lower variance in the policy gradient estimate.
	
	\subsection{Parameter Continuity and Sensitivity Analysis}
	
	\begin{theorem}[Parameter Continuity]
		\label{thm:parameter_continuity}
		The optimal policy $\pi^*_\lambda$ exhibits continuity with respect to regularization coefficient $\lambda$. Specifically, for small perturbations $\delta\lambda$:
		\begin{equation}
			\|J(\pi^*_{\lambda+\delta\lambda}) - J(\pi^*_\lambda)\| \leq C\cdot|\delta\lambda|\cdot\mathbb{E}_{\pi^*_\lambda}[\mathcal{E}(a)]
		\end{equation}
		where $C$ is a constant depending on the MDP structure.
	\end{theorem}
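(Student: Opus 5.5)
The plan is to work with the regularized objective $J_\lambda(\pi) \triangleq J_{\mathcal{M}}(\pi) - \lambda\,G(\pi)$. Here $G(\pi) \triangleq \mathbb{E}_{\tau\sim\pi}\bigl[\sum_t \gamma^t \mathcal{E}(a_t)\bigr]$ is the discounted control energy, which matches the $\mathbb{E}_\pi[\mathcal{E}(a)]$ of Eq.~\eqref{eq:optimal_policy} up to the factor $1/(1-\gamma)$ when it is read as a normalized occupancy expectation. By Lemma~\ref{lem:functional_independence}, the potential term does not change $\pi^*_\lambda$, so it can be dropped and only $\mathcal{M}_\lambda$ needs to be analyzed. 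I interpret $J$ in the statement as the regularized optimal value $J^*(\lambda) \triangleq J_\lambda(\pi^*_\lambda) = \max_\pi J_\lambda(\pi)$. The key observation is that $J^*(\lambda)$ is a pointwise maximum of functions that are affine in $\lambda$, so it is convex, nonincreasing, and Lipschitz with envelope-type derivatives.

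\textbf{Step 1 (sandwich).} Use optimality twice. Optimality of $\pi^*_\lambda$ for $J_\lambda$ gives
\[
J^*(\lambda+\delta\lambda) \ge J_{\lambda+\delta\lambda}(\pi^*_\lambda) = J^*(\lambda) - \delta\lambda\, G(\pi^*_\lambda).
\]
Symmetrically, optimality of $\pi^*_{\lambda+\delta\lambda}$ gives
\[
J^*(\lambda) \ge J^*(\lambda+\delta\lambda) + \delta\lambda\, G(\pi^*_{\lambda+\delta\lambda}).
\]
Together these yield $|J^*(\lambda+\delta\lambda)-J^*(\lambda)| \le |\delta\lambda|\max\{G(\pi^*_\lambda),G(\pi^*_{\lambda+\delta\lambda})\}$.

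\textbf{Step 2 (remove the perturbed policy).} The bound must be stated with $G(\pi^*_\lambda)$ only. For $\delta\lambda>0$ this is immediate: the standard monotonicity argument, obtained by adding the two optimality inequalities, gives $\delta\lambda\,(G(\pi^*_\lambda)-G(\pi^*_{\lambda+\delta\lambda}))\ge 0$. The energy is therefore nonincreasing in $\lambda$, so the maximum in Step 1 is $G(\pi^*_\lambda)$ and the bound holds with $C=1/(1-\gamma)$ after normalization.

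For $\delta\lambda<0$ I need $G(\pi^*_{\lambda+\delta\lambda}) \le C' G(\pi^*_\lambda)$ for small $|\delta\lambda|$. I would obtain this from boundedness: actions lie in a bounded set, so $\mathcal{E}(a)\le \|Q\|a_{\max}^2$. I would also assume, as a standard regularity condition, that $G(\pi^*_\lambda)$ is bounded below by a positive constant $g_{\min}$ whenever it is nonzero, or alternatively that it is continuous in $\lambda$. In either case the ratio $G(\pi^*_{\lambda+\delta\lambda})/G(\pi^*_\lambda)$ is bounded, and that bound is absorbed into $C$, which then depends on $\gamma$, $Q$, $\mathcal{A}$, i.e.\ on the MDP structure.

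\textbf{Step 3 (unregularized return).} If $J$ is instead read as the original return $J_{\mathcal{M}}$, I would write
\[
J_{\mathcal{M}}(\pi^*_\mu) = J^*(\mu) + \mu\, G(\pi^*_\mu).
\]
The difference then splits into the Step 1 term plus $\lambda\,(G(\pi^*_{\lambda+\delta\lambda})-G(\pi^*_\lambda)) + \delta\lambda\, G(\pi^*_{\lambda+\delta\lambda})$. The middle term requires Lipschitz continuity of $G(\pi^*_\lambda)$ in $\lambda$. I would get this from uniqueness and stability of the maximizer, for instance under entropy regularization (as in SAC), where the soft-optimal policy is a smooth function of $\lambda$ via the contraction of the soft Bellman operator: the perturbation of the fixed point is bounded by $|\delta\lambda|\,\|\mathcal{E}\|_\infty/(1-\gamma)$.

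\textbf{Main obstacle.} The hard step is Step 2 for $\delta\lambda<0$, and more seriously Step 3. Without strict concavity, optimal policies can jump as $\lambda$ varies, and only the value $J^*$ is genuinely Lipschitz. I would therefore present the result for $J^*$ rigorously and state the additional smoothness assumption that the $J_{\mathcal{M}}$ version requires.
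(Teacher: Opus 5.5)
Your route differs from the paper's and is more careful than it. The paper applies the envelope theorem to get $dV^*/d\lambda=-\mathbb{E}_{\pi^*_\lambda}[\mathcal{E}(a)]$, integrates over $[\lambda,\lambda+\delta\lambda]$, and then uses Lipschitz continuity of $\mathbb{E}_\pi[\mathcal{E}(a)]$ in $\pi$ to freeze the integrand at $\lambda$. That argument presupposes two things it never establishes:
\begin{itemize}
\item differentiability of $V^*$ (essentially a unique maximizer);
\item controlled dependence of $\pi^*_\mu$ on $\mu$.
\end{itemize}
Even granting both, it only yields $|\delta\lambda|\,G(\pi^*_\lambda)+O(\delta\lambda^2)$.

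Your two-sided optimality sandwich is the derivative-free version of the same envelope identity. It buys you three things. First, the $\delta\lambda>0$ case holds with $C=1$ (in discounted units) from $G\ge 0$ alone. Second, $|J^*(\lambda+\delta\lambda)-J^*(\lambda)|\le|\delta\lambda|\sup_\pi G(\pi)$ holds unconditionally. Third, for $\delta\lambda<0$ and $G(\pi^*_\lambda)>0$, mere continuity of $\mu\mapsto G(\pi^*_\mu)$ at $\lambda$ already gives $C=2$ for small $|\delta\lambda|$.

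Your reading of $J$ as the regularized optimal value is the paper's reading, so Step~3 is not needed. It would also hit the same additive-versus-multiplicative obstruction, since $\lambda L|\delta\lambda|$ is not a multiple of $G(\pi^*_\lambda)$. One justification should change: you may drop $\Phi$ not because of Lemma~\ref{lem:functional_independence} (policy invariance), but because every policy's shaped return equals its unshaped return minus $\mathbb{E}[\Phi(s_0)]$. That constant cancels in the difference.

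The step that fails is your claim in Step~2 that, for $\delta\lambda<0$, either assumption bounds the ratio $G(\pi^*_{\lambda+\delta\lambda})/G(\pi^*_\lambda)$. Neither does when $G(\pi^*_\lambda)=0$, and in that case the statement itself is false, so no regularity assumption of this type can close the gap.

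Take a single-state MDP with $\mathcal{A}=[0,1]$, $\mathcal{E}(a)=a^2$, $R(a)=c\,a^2-a^4$, $c>0$. The maximizer is unique, $a^2=(c-\mu)_+/2$, and for $\mu\ge c-2$
\[
J^*(\mu)=\frac{(c-\mu)_+^2}{4(1-\gamma)},\qquad G(\pi^*_\mu)=\frac{(c-\mu)_+}{2(1-\gamma)},
\]
so $G(\pi^*_\mu)$ is even Lipschitz in $\mu$. Yet at $\lambda=c$, $J^*(c-\eta)-J^*(c)=\eta^2/(4(1-\gamma))>0=C\,\eta\,G(\pi^*_c)$ for every $C$ and every $\eta\in(0,2]$.

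With ties the violation is first order and survives your $g_{\min}$ condition. For $R(a)=c\,a^2$ with the admissible selection $\pi^*_c\equiv 0$, $G$ takes only the values $0$ and $1/(1-\gamma)$, yet $J^*(c-\eta)-J^*(c)=\eta/(1-\gamma)$.

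So you should either add $G(\pi^*_\lambda)>0$ as an explicit hypothesis, or state the bound with $\max\{G(\pi^*_\lambda),G(\pi^*_{\lambda+\delta\lambda})\}$ in place of $G(\pi^*_\lambda)$. The paper's integrated bound has the stated multiplicative form only under that same hypothesis, which it never states.
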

	
	\begin{proof}
		By the envelope theorem for parametric optimization, the value function $V^*(\lambda) = J(\pi^*_\lambda)$ satisfies:
		\begin{equation}
			\frac{dV^*(\lambda)}{d\lambda} = \frac{\partial}{\partial\lambda}\mathbb{E}_{\pi^*_\lambda}\left[\sum_t \gamma^t R_{\text{H-EARS}}(s_t,a_t,s_{t+1})\right] = -\mathbb{E}_{\pi^*_\lambda}[\mathcal{E}(a)]
		\end{equation}
		
		Integrating over $[\lambda, \lambda+\delta\lambda]$ and applying Lipschitz continuity of $\mathbb{E}_\pi[\mathcal{E}(a)]$ with respect to $\pi$ yields the stated bound with $C$ determined by the Lipschitz constant of the energy functional.
	\end{proof}
	
	\subsection{Approximate Potential Error Bounds}
	
	Theorem~\ref{thm:parameter_continuity} establishes that performance changes are bounded by $C|\delta\lambda|\,\mathbb{E}[\mathcal{E}(a)]$, providing the theoretical basis for a practical scheduling strategy. Based on the alignment coefficient $\kappa(s)$ defined in Proposition~\ref{prop:regularization_necessity} and the gradient enrichment ratio $\rho_{\text{info}}$ from Theorem~\ref{thm:energy_convergence}, the following guidelines emerge for initializing and scheduling the three hyperparameters:
	
	(i)Initialization: Set $\alpha_{\text{task}}$ to match the scale of the task reward signal (e.g., $\alpha_{\text{task}} \approx 0.1$ for rewards in $[0, 10^3]$). Set $\alpha_{\text{energy}} \ll \alpha_{\text{task}}$ initially, as early training requires goal-directed exploration before energy shaping becomes beneficial. Set $\lambda$ to a small value (e.g., $10^{-3}$--$10^{-2}$) for systems where $\kappa(s) < 0$ predominantly; set larger $\lambda$ for systems with oscillatory gait dynamics or $\kappa(s) > 0$ regions (e.g., Hopper-v5, vehicle control). In all cases a strictly positive $\lambda$ is used in practice to maintain numerical stability of the action regularization term.
	
	(ii)Scheduling: As training progresses and the policy stabilizes, $\alpha_{\text{energy}}$ can be increased to refine energy efficiency while maintaining task performance. The Lipschitz bound from Theorem~\ref{thm:parameter_continuity} guarantees that small increments $\delta\alpha_{\text{energy}}$ produce bounded performance changes, enabling safe incremental adjustment. The ratio $\alpha_{\text{task}} / \alpha_{\text{energy}}$ controls the balance between task-directed exploration and energy-efficient refinement; values used in this work are reported in Table~\ref{tab:hyperparams_standard} and validated through the systematic sensitivity analysis in Section~\ref{sec:sensitivity}.
	
	(iii) Domain-specific calibration. For systems where $\kappa(s) > 0$ in task-critical regions (Case~2 of Proposition~\ref{prop:regularization_necessity}), $\lambda$ should be selected to suppress oscillatory artifacts while preserving sufficient action range for task completion. The TruckSim vehicle experiments demonstrate this calibration: $\lambda = 0.20$ under normal conditions; $\lambda = 0.35$--$0.40$ under stricter safety budgets, with performance degradation bounded by Theorem~\ref{thm:parameter_continuity}.
	
	A critical question for engineering deployment is: How much approximation error can energy potentials tolerate before performance degrades significantly? The following lemma quantifies this trade-off.
	
	\begin{lemma}[Performance Bounds for Approximate Energy Potentials]
		\label{lem:approx_potential}
		Consider an approximate energy potential $\hat{\Phi}_{\text{energy}}(s)$ satisfying:
		\begin{equation}
			\|\Phi^*_{\text{energy}}(s) - \hat{\Phi}_{\text{energy}}(s)\| \leq \delta \quad \forall s\in\mathcal{S}
		\end{equation}
		where $\Phi^*_{\text{energy}}$ is the true complete energy function. Define the approximation error ratio:
		\begin{equation}
			\epsilon_{\text{approx}} = \frac{\delta}{\|\Phi^*_{\text{energy}}\|_\infty}
		\end{equation}
		
		Then the performance gap satisfies:
		\begin{equation}
			\frac{|J(\pi^*_{\text{complete}}) - J(\pi^*_{\text{approx}})|}{|J(\pi^*_{\text{complete}})|} \leq \frac{2\gamma\alpha_{\text{energy}}\epsilon_{\text{approx}}}{(1-\gamma)(1-\epsilon_{\text{approx}})}
			\label{eq:performance_bound}
		\end{equation}
	\end{lemma}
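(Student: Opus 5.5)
The plan is to exploit the fact that the approximate and exact shaping terms differ only by another potential-based term, and then to control the size of that residual term directly.

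\textbf{Step 1: isolate the residual shaping term.} Write $e(s) \triangleq \alpha_{\text{energy}}\bigl(\hat{\Phi}_{\text{energy}}(s)-\Phi^*_{\text{energy}}(s)\bigr)$, so that $|e(s)|\le \alpha_{\text{energy}}\delta$ for all $s$. The H-EARS reward built from $\hat{\Phi}$ equals the reward built from $\Phi^*$ plus $\gamma e(s')-e(s)$. This residual is itself potential-based, so by Lemma~\ref{lem:functional_independence} and PBRS~\cite{ng1999policy} the optimal policy of $\mathcal{M}_\lambda$ is unchanged in the idealized exact-optimization setting. The lemma is therefore only nontrivial once $J$ is read as the shaped objective that the learner actually optimizes and reports. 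I would fix that interpretation explicitly, since Eq.~\eqref{eq:performance_bound} is otherwise trivially satisfied with left-hand side $0$.

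\textbf{Step 2: bound the absolute gap.} I would argue in one of two ways.
\begin{itemize}
\item \emph{Telescoping.} For any fixed $\pi$, $\sum_t\gamma^t(\gamma e(s_{t+1})-e(s_t)) = -e(s_0)$, hence $|J_{\hat\Phi}(\pi)-J_{\Phi^*}(\pi)|\le\alpha_{\text{energy}}\delta$ uniformly in $\pi$. The usual $\sup$-difference argument then gives the same bound for the gap between the two optimal values.
\item \emph{Per-step.} Bound $|\gamma e(s')-e(s)|\le(1+\gamma)\alpha_{\text{energy}}\delta$ and sum the geometric series, which gives $(1+\gamma)\alpha_{\text{energy}}\delta/(1-\gamma)$. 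This is the route whose form matches the stated $\frac{2\gamma\,\cdot}{1-\gamma}$ factor.
\end{itemize}
I would present the per-step route, possibly bounding the $s'$-contribution by $\gamma\alpha_{\text{energy}}\delta$ after using the Step~1 cancellation of the common $e(s_t)$ terms across consecutive steps. I would also note that the telescoping route gives the sharper $\alpha_{\text{energy}}\delta$, which is dominated by the stated bound whenever $2\gamma\ge 1-\gamma$, i.e.\ $\gamma\ge 1/3$.

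\textbf{Step 3: normalize.} Substitute $\delta=\epsilon_{\text{approx}}\|\Phi^*_{\text{energy}}\|_\infty$. To obtain the relative form I need a lower bound of the type
\[
|J(\pi^*_{\text{complete}})| \ge (1-\epsilon_{\text{approx}})\,\|\Phi^*_{\text{energy}}\|_\infty .
\]
The factor $(1-\epsilon_{\text{approx}})$ arises naturally from the reverse triangle inequality $\|\hat{\Phi}_{\text{energy}}\|_\infty\ge(1-\epsilon_{\text{approx}})\|\Phi^*_{\text{energy}}\|_\infty$. That is exactly the step where one passes from the scale of the true potential to the scale of the approximate one used in training.

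\textbf{Main obstacle.} The hard part is Step~3. Nothing in the hypotheses ties the return scale $|J(\pi^*_{\text{complete}})|$ to the potential scale $\|\Phi^*_{\text{energy}}\|_\infty$. I would first try to introduce an explicit scale assumption, namely that the shaped optimal return is at least the magnitude of the (approximate) energy potential, consistent with the calibration $\gamma\Phi_{\max}\sim R_{\max}$ used in Theorem~\ref{thm:convergence}, and state it as a standing hypothesis of the lemma. If that turns out to be too strong, the fallback is to prove the absolute bound of Step~2 rigorously and present the relative form as a corollary under that calibration.
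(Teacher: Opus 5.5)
Your Step~3 obstacle is genuine, and the paper does not resolve it. Its proof follows essentially your per-step route. It decomposes via the Bellman equation, bounds the per-step shaping discrepancy by $2\gamma\delta$, accumulates this through the value-iteration operator to $\|V^*_{\text{complete}}-V^*_{\text{approx}}\|_\infty\le 2\gamma\alpha_{\text{energy}}\delta/(1-\gamma)$, and then obtains the relative form in one sentence, ``normalizing by the true value function magnitude.'' No relation between $|J(\pi^*_{\text{complete}})|$ and $\|\Phi^*_{\text{energy}}\|_\infty$ is derived, and the factor $(1-\epsilon_{\text{approx}})$ is never justified.

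Nor can it be derived from the stated hypotheses, which constrain only the potentials. Adding a constant $c$ to $R$ leaves unchanged $\delta$, $\epsilon_{\text{approx}}$, every optimal policy, and the numerator (in the shaped reading the numerator is a start-state term $\alpha_{\text{energy}}|\mathbb{E}[\hat{\Phi}_{\text{energy}}(s_0)-\Phi^*_{\text{energy}}(s_0)]|$, independent of $R$). Yet it moves $J(\pi^*_{\text{complete}})$ arbitrarily close to $0$, so the left-hand side is unbounded while the right-hand side is fixed. In the literal reading, as your Step~1 correctly observes, PBRS invariance makes the left-hand side $0$; the paper's decomposition obscures this by putting $R$ inside the $\max$ and the shaping terms outside it. So the scale hypothesis $|J(\pi^*_{\text{complete}})|\ge(1-\epsilon_{\text{approx}})\|\Phi^*_{\text{energy}}\|_\infty$, with $\epsilon_{\text{approx}}<1$, must be added. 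Your condition $|J(\pi^*_{\text{complete}})|\ge\|\hat{\Phi}_{\text{energy}}\|_\infty$ implies it via the reverse triangle inequality.

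Where you should change course is Step~2: present the telescoping argument, not the per-step one. The per-step route gives $(1+\gamma)\alpha_{\text{energy}}\delta/(1-\gamma)$, which exceeds $2\gamma\alpha_{\text{energy}}\delta/(1-\gamma)$ for every $\gamma<1$, so it cannot deliver the stated constant. The paper reaches $2\gamma\delta$ per step only through the inequality $|\gamma x'-x|\le 2\gamma\delta$ for $|x|,|x'|\le\delta$, which fails at $x'=\delta$, $x=-\delta$.

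Your proposed patch (cancel the common $e(s_t)$ terms across consecutive steps, then sum a per-step $\gamma\alpha_{\text{energy}}\delta$) is not a coherent argument. Cancelling across steps \emph{is} telescoping, and it leaves the single boundary term $-e(s_0)$, not a geometric series.

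The telescoping bound $\alpha_{\text{energy}}\delta$ is tight: the constant shift $\hat{\Phi}_{\text{energy}}=\Phi^*_{\text{energy}}-\delta$ attains it at every state. It implies the stated absolute constant exactly when $\gamma\ge 1/3$. For $\gamma<1/3$, the same example shows that, in the shaped reading, the paper's intermediate sup-norm bound is false. Combined with the scale hypothesis, your route gives a relative gap of at most $\alpha_{\text{energy}}\epsilon_{\text{approx}}/(1-\epsilon_{\text{approx}})$. This is at most the claimed right-hand side whenever $\gamma\ge1/3$, so it is a sharper and correctly derived version of the lemma.
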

	
	\begin{proof}
		Decompose the value function difference using the Bellman equation:
		\begin{align}
			&|V^*_{\text{complete}}(s) - V^*_{\text{approx}}(s)| \notag\\
			&= \left|\max_a \mathbb{E}[R + \gamma V^*_{\text{complete}}(s')] - \max_a \mathbb{E}[R + \gamma V^*_{\text{approx}}(s')]\right| \notag\\
			&\quad + \Big|\mathbb{E}\big[\gamma(\Phi^*_{\text{energy}}(s') - \Phi^*_{\text{energy}}(s))\big] \notag\\
			&\qquad - \mathbb{E}\big[\gamma(\hat{\Phi}_{\text{energy}}(s') - \hat{\Phi}_{\text{energy}}(s))\big]\Big|
			\label{eq:value_diff}
		\end{align}
		
		By triangle inequality and the assumption $\|\Phi^* - \hat{\Phi}\| \leq \delta$:
		\begin{equation}
			\left|\mathbb{E}[\gamma(\Phi^*_{\text{energy}}(s') - \hat{\Phi}_{\text{energy}}(s')) - (\Phi^*_{\text{energy}}(s) - \hat{\Phi}_{\text{energy}}(s))]\right| \leq 2\gamma\delta
		\end{equation}
		
		Applying this recursively through the value iteration operator $\mathcal{T}$, the fixed-point error satisfies:
		\begin{equation}
			\|V^*_{\text{complete}} - V^*_{\text{approx}}\|_\infty \leq \frac{2\gamma\alpha_{\text{energy}}\delta}{1-\gamma}
		\end{equation}
		
		Normalizing by the true value function magnitude and substituting $\epsilon_{\text{approx}}=\delta/\|\Phi^*\|_\infty$ yields Eq.~\eqref{eq:performance_bound}.
		
		For $\epsilon_{\text{approx}} \leq 0.2$ and typical values $\gamma=0.99$, $\alpha_{\text{energy}}=0.01$, this bound predicts $<5\%$ performance loss, validating the feasibility of simplified energy modeling.
	\end{proof}
	
	\subsection{Energy-Based Shaping as Stability-Oriented Guidance}
	
	\begin{proposition}[Energy Potentials as Lyapunov-Inspired Heuristic]
		\label{prop:lyapunov_heuristic}
		For mechanical systems where the energy function $E(q,\dot{q}) = T(\dot{q}) + U(q)$ satisfies:
		\begin{enumerate}
			\item[\textit{(L1)}] \textit{Positive definiteness}: $E(q,\dot{q}) - E(q^*,0) \geq 0$ for all $(q,\dot{q})$, with equality only at the equilibrium $(q^*,0)$.
			\item[\textit{(L2)}] \textit{Radial unboundedness}: $E(q,\dot{q}) \to \infty$ as $\|(q-q^*,\dot{q})\| \to \infty$.
			\item[\textit{(L3)}] \textit{Dissipation in expectation}: the learned policy satisfies $\mathbb{E}_{\pi}[\Delta E_t \mid s_t] < 0$ for states outside a neighborhood of $(q^*,0)$.
		\end{enumerate}
		maximizing $\Phi_{\text{energy}}=-E$ heuristically guides policies toward Lyapunov-stable behaviors.
	\end{proposition}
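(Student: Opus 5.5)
The plan is to read the Proposition as a Lyapunov-type statement. I make it precise as follows: take $V(s)\triangleq E(q,\dot{q})-E(q^*,0)$ as a candidate Lyapunov function for the closed-loop Markov chain induced by $\pi$. Then I show that (L1)--(L3) give the discrete-time stochastic Lyapunov drift conditions. I also show that maximizing $\Phi_{\text{energy}}=-E$ is exactly what rewards this drift.

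\emph{Step 1} is the shaping–drift link. By the definition $\Delta\Phi_{\text{energy},t}=\gamma\Phi_{\text{energy}}(s_{t+1})-\Phi_{\text{energy}}(s_t)$, we have
\begin{equation*}
\mathbb{E}_\pi[\Delta\Phi_{\text{energy},t}\mid s_t]=-\mathbb{E}_\pi[\Delta E_t\mid s_t]+(1-\gamma)E(s_{t+1})\text{-type correction}.
\end{equation*}
For $\gamma\to1$, or after shifting $E$ by the constant $E(q^*,0)$ and absorbing the $(1-\gamma)V$ term, this is the negative of the expected one-step Lyapunov increment. Combined with Eq.~\eqref{eq:potential_gradient}, $\Delta\Phi_{\text{energy},t}\approx-\dot{E}\,\Delta t$. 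Hence a policy receives positive expected shaping exactly when it produces negative drift of $V$. This justifies the word ``guides.''

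\emph{Step 2} is the drift argument. By (L1), $V\ge0$ with equality only at $(q^*,0)$. By (L2), the sublevel sets $\{V\le c\}$ are compact. By (L3), outside a neighborhood $\mathcal{N}$ of equilibrium we have $\mathbb{E}[V(s_{t+1})-V(s_t)\mid s_t]<0$. To obtain a uniform margin I will restrict to a compact annulus $\{V\le c\}\setminus\mathcal{N}$ and use continuity, giving $\le-\eta<0$. A Foster--Lyapunov / supermartingale argument then shows the following. $V(s_t)$ stopped at the hitting time of $\mathcal{N}$ is a nonnegative supermartingale, so trajectories remain in $\{V\le V(s_0)\}$ in expectation/probability. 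The expected hitting time of $\mathcal{N}$ is bounded by $V(s_0)/\eta$. Consequently we obtain practical (ultimate-boundedness) stability of the neighborhood, in the sense of stochastic Lyapunov theory.

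\emph{Step 3} is to delimit the claim, which explains why it is stated as a heuristic. PBRS with $\Phi_{\text{energy}}$ does not change the optimal policy of $\mathcal{M}_\lambda$ (Lemma~\ref{lem:functional_independence}). Nothing therefore forces (L3) to hold. It is an assumption on the learned policy that shaping merely encourages through the gradient term of Theorem~\ref{thm:energy_convergence}. The conclusion is accordingly conditional: \emph{if} learning attains (L3), then the Lyapunov conclusions of Step 2 follow.

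The main obstacle is Step 2's uniformity and noise. (L3) gives only pointwise strict negativity, and stochastic transitions may exit sublevel sets. I will first try compactness from (L2) plus continuity of the expected increment to get $\eta$. If transitions are unbounded, I would fall back to the weaker statement of recurrence to $\mathcal{N}$ rather than stability in probability.
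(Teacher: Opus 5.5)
Your Steps 1 and 3 follow the paper's own justification closely. The paper links shaping to dissipation through $\Delta E_t\approx \dot E|_{s_t,a_t}\Delta t+O(\Delta t^2)$ and observes that $\dot E<0$ yields a positive shaped reward. It then lists caveats: decrease holds only in expectation, closed-loop gains and disturbances matter, task and energy must be aligned, and saturation can break the argument.

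The genuine difference is Step 2. The paper never derives a stability conclusion from (L1)--(L3); (L1) and (L2) are declared structural and play no role. You instead use all three hypotheses in a discrete-time supermartingale/Foster--Lyapunov argument on the closed-loop chain: (L1) for nonnegativity, (L2) for compact sublevel sets, (L3) for drift. This gives an actual conditional result: recurrence to, or practical stability of, $\mathcal{N}$ rather than Lyapunov stability of $(q^*,0)$. That is the correct downgrade, since (L3) says nothing inside $\mathcal{N}$. Because (L3) is already a discrete expected increment, Step 2 also needs no continuous-time approximation. Your remark that PBRS leaves the optimal policy of $\mathcal{M}_\lambda$ unchanged is valuable too: it shows (L3) is a hypothesis on the learned policy, not a consequence of shaping. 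That explains more sharply than the paper why the statement can only be heuristic. The price is regularity the statement does not supply.

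Two points need tightening. In Step 1, after the shift the correction is $+(1-\gamma)\,\mathbb{E}[V(s_{t+1})\mid s_t]\ge 0$, and it cannot be absorbed. For $\gamma<1$, negative drift implies positive expected shaping, but not conversely. What holds exactly is this: at fixed $s_t$ the shaping equals $V(s_t)-\gamma\,\mathbb{E}[V(s_{t+1})\mid s_t,a_t]$, a strictly decreasing function of the action's drift. So it ranks actions by drift, which is what enters the advantage. The paper sidesteps the issue by writing $\Delta\Phi_{\text{energy},t}=-\Delta E_t$.

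In Step 2, compactness plus upper semicontinuity of $s\mapsto\mathbb{E}[\Delta E_t\mid s_t=s]$ gives only a margin $\eta_c$ on each annulus $\{V\le c\}\setminus\mathcal{N}$. That semicontinuity is itself an extra Feller-type assumption on kernel and policy, with $\mathcal{N}$ open. Since $\eta_c$ may degrade as $c\to\infty$, the bound $\mathbb{E}[\tau_{\mathcal{N}}]\le V(s_0)/\eta$ is unjustified without a global margin. Let $\sigma_c$ be the exit time from $\{V\le c\}$. What you do get is $\mathbb{E}[\tau_{\mathcal{N}}\wedge\sigma_c]\le V(s_0)/\eta_c$, together with the maximal inequality $\Pr(\sup_{t\le\tau_{\mathcal{N}}}V(s_t)>c)\le V(s_0)/c$. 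Letting $c\to\infty$ gives $\tau_{\mathcal{N}}<\infty$ almost surely. That inequality is informative only for $c>V(s_0)$, so it does not keep trajectories in $\{V\le V(s_0)\}$ with any nontrivial probability. Finally, ultimate boundedness needs control of excursions out of $\mathcal{N}$, which (L3) alone does not give. For example, you would need $\mathbb{E}[V(s_{t+1})\mid s_t=s]\le V(s)-\eta+b\,\mathbf{1}_{\mathcal{N}}(s)$ for all $s$, with a global $\eta>0$ and $b<\infty$.
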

	
	\begin{proof}[Heuristic Justification]
		Conditions (L1) and (L2) are structural properties of mechanical energy, verifiable from the system's mass distribution and potential energy landscape. Condition (L3) is the binding requirement in discrete-time RL. H-EARS promotes $\mathbb{E}[\Delta E_t]<0$ by assigning positive shaped reward to energy-reducing transitions; dissipation holds in expectation, not point-wise at every step.
		
		The argument relies on continuous-time approximations. For a dynamical system $\dot{s}=f(s,a)$, the energy derivative at state $s$ under action $a$ satisfies:
		\begin{equation}
			\frac{dE}{dt}\Big|_{s,a} = \frac{\partial E}{\partial q}\cdot\dot{q} + \frac{\partial E}{\partial\dot{q}}\cdot\ddot{q} = \nabla_s E \cdot f(s,a)
		\end{equation}
		where all quantities are evaluated at the current state-action pair.
		
		In discrete time, the energy change from $s_t$ to $s_{t+1}$ under action $a_t$ can be approximated:
		\begin{equation}
			\Delta E_t = E(s_{t+1}) - E(s_t) \approx \frac{dE}{dt}\Big|_{s_t,a_t} \cdot \Delta t + O(\Delta t^2)
		\end{equation}
		
		The potential shaping term is computed after observing $s_{t+1}$:
		\begin{equation}
			\Delta\Phi_{\text{energy},t} = -\Delta E_t = -[E(s_{t+1}) - E(s_t)]
		\end{equation}
		
		Since $\Delta E_t \approx \frac{dE}{dt}\big|_{s_t,a_t} \cdot \Delta t$, the shaping reward approximately equals:
		\begin{equation}
			\Delta\Phi_{\text{energy},t} \approx -\frac{dE}{dt}\Big|_{s_t,a_t} \cdot \Delta t
		\end{equation}
		
		When $\frac{dE}{dt}\big|_{s_t,a_t} < 0$ (energy dissipation predicted at $s_t$ under $a_t$), the actual observed shaping reward $\Delta\Phi_{\text{energy},t} > 0$ will be positive, encouraging such actions.
		
		The following considerations circumscribe the scope of this heuristic argument. First, formal Lyapunov stability in discrete time requires $V(x_{t+1}) - V(x_t) \leq -\alpha(\|x_t\|)$ point-wise; H-EARS incentivizes $\mathbb{E}[\Delta E_t]<0$ in expectation, and the continuous-time approximation $\Delta E_t\approx\dot{E}\Delta t$ introduces $O(\Delta t^2)$ errors bounded by $<3\%$ of the dominant gradient term at 20~Hz control frequency. Second, closed-loop stability depends on feedback gains and disturbances beyond $\Phi_{\text{energy}}$ alone. Third, the heuristic is most effective when energy efficiency correlates with task success (locomotion, vehicle stability) and provides limited benefit when they conflict (aggressive maneuvers). Fourth, actuator saturation or external disturbances alter $\dot{E}$ from its nominal value, potentially disrupting the dissipation property locally; in the RL+MPC hierarchical architecture (Section~\ref{sec:vehicle}), the lower MPC layer enforces hard constraints and partially restores dissipation guarantees under disturbance, which is why the Lyapunov interpretation holds empirically in TruckSim.
	\end{proof}
	
	\subsection{Actor-Critic Integration with H-EARS}
	\label{sec:ac_integration}
	
	H-EARS integrates with actor-critic algorithms by modifying the reward signal used in temporal difference (TD) learning, while preserving the original algorithmic structure. The core integration modifies the critic's TD target:
	\begin{equation}
		y_t = R_{\text{H-EARS}}(s_t, a_t, s_{t+1}) + \gamma V_{\theta'}(s_{t+1})
		\label{eq:ac_td_target}
	\end{equation}
	where $V_{\theta'}(s_{t+1})$ represents the target value estimate (Q-function for off-policy methods, value baseline for on-policy methods), and the shaped reward is:
	\begin{equation}
		\begin{split}
			R_{\text{H-EARS}}(s_t, a_t, s_{t+1}) = \; &r_t + \gamma\Phi(s_{t+1}) - \Phi(s_t) - \lambda\mathcal{E}(a_t) \\
			\Phi(s) = \; &\alpha_{\text{task}}\Phi_{\text{task}}(s) + \alpha_{\text{energy}}\Phi_{\text{energy}}(s)
		\end{split}
		\label{eq:hears_reward}
	\end{equation}
	
	The potential function $\Phi(s)$ and action regularization $\mathcal{E}(a) = a^\top Q a$ are computed during environment interaction and stored alongside standard transition tuples. Critically, the actor update remains unchanged:
	\begin{equation}
		\nabla_\phi J(\phi) = \mathbb{E}_{(s,a) \sim \mathcal{D}}\left[\nabla_\phi \log \pi_\phi(a|s) \cdot A(s,a)\right]
		\label{eq:actor_gradient}
	\end{equation}
	where the advantage function $A(s,a)$ is computed from Q-values or value estimates trained on $R_{\text{H-EARS}}$. This separation ensures H-EARS guidance propagates through learned value functions rather than direct policy modifications, preserving algorithmic properties such as entropy regularization, trust regions, or deterministic gradients. 
	
	Algorithm~\ref{alg:hears_ac} presents the general integration template. Lines 11-15 constitute H-EARS-specific computations; all other components follow standard AC training procedures.
	
	\begin{algorithm}[htbp]
		\caption{Actor-Critic Algorithm with H-EARS Integration}
		\label{alg:hears_ac}
		\begin{algorithmic}[1]
			\STATE \textbf{Input}: Environment $\mathcal{M}$, H-EARS hyperparameters $\{\alpha_{\text{task}}, \alpha_{\text{energy}}, \lambda\}$
			\STATE \textbf{Initialize}: Actor network $\pi_\phi$, Critic network $Q_\theta$ (or $V_\theta$)
			\STATE \textbf{Initialize}: Target networks $\pi_{\phi'}$, $Q_{\theta'}$ (if off-policy)
			\STATE \textbf{Initialize}: Experience buffer $\mathcal{D}$
			\FOR{episode $= 1, 2, \ldots$}
			\STATE Observe initial state $s_0$
			\FOR{$t = 0, 1, \ldots, T_{\max}$}
			\STATE Sample action: $a_t \sim \pi_\phi(\cdot | s_t)$
			\STATE Execute action, observe reward $r_t$ and next state $s_{t+1}$
			\STATE \textcolor{blue}{// Computed shaping after observing $s_{t+1}$}
			\STATE Compute potential at current state: $\Phi(s_t) = \alpha_{\text{task}}\Phi_{\text{task}}(s_t) + \alpha_{\text{energy}}\Phi_{\text{energy}}(s_t)$
			\STATE Compute potential at next state: $\Phi(s_{t+1}) = \alpha_{\text{task}}\Phi_{\text{task}}(s_{t+1}) + \alpha_{\text{energy}}\Phi_{\text{energy}}(s_{t+1})$
			\STATE Compute action regularization: $\mathcal{E}(a_t) = a_t^\top Q a_t$
			\STATE Compute shaped reward for transition $(s_t, a_t, s_{t+1})$: 
			\STATE \quad $R_{\text{H-EARS}} = r_t + \gamma\Phi(s_{t+1}) - \Phi(s_t) - \lambda\mathcal{E}(a_t)$
			\STATE 
			\STATE Store transition: $\mathcal{D} \leftarrow \mathcal{D} \cup \{(s_t, a_t, R_{\text{H-EARS}}, s_{t+1})\}$
			\STATE 
			\IF{update condition satisfied}
			\STATE Sample minibatch $\mathcal{B} \sim \mathcal{D}$
			\STATE \textcolor{blue}{// Critic update}
			\FOR{$(s, a, R_{\text{H-EARS}}, s') \in \mathcal{B}$}
			\STATE Compute TD target: $y = R_{\text{H-EARS}} + \gamma V_{\theta'}(s')$
			\STATE Update critic: $\theta \leftarrow \theta - \eta_Q \nabla_\theta (Q_\theta(s,a) - y)^2$
			\ENDFOR
			\STATE \textcolor{blue}{// Actor update}
			\STATE Compute advantages $A(s,a)$ from updated Q-values
			\STATE Update actor: $\phi \leftarrow \phi + \eta_\pi \nabla_\phi J(\phi)$ \textcolor{gray}{// Eq.~\eqref{eq:actor_gradient}}
			\STATE 
			\STATE Update target networks: $\theta' \leftarrow \tau\theta + (1-\tau)\theta'$, $\phi' \leftarrow \tau\phi + (1-\tau)\phi'$
			\ENDIF
			\ENDFOR
			\ENDFOR
			\STATE \textbf{Output}: Trained policy $\pi_\phi$
		\end{algorithmic}
	\end{algorithm}
	
	Figure~\ref{fig:framework} provides a visual overview of the construction pipeline.
	
	\begin{figure*}[!t]
		\centering
		\includegraphics[width=0.86\textwidth]{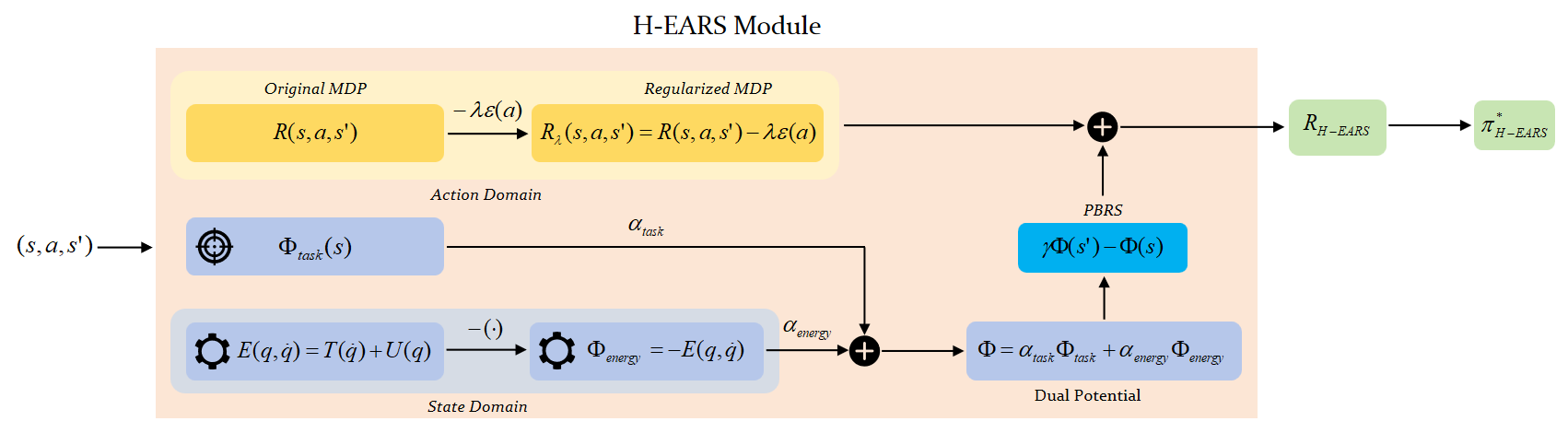}
		\caption{H-EARS construction pipeline. Left branch (action domain): action regularization defines the regularized MDP, intentionally modifying the optimization objective to trade task performance for energy efficiency (Proposition~\ref{prop:regularization_necessity}). Right branch (state domain): physical energy knowledge and task specification are combined into the dual potential $\Phi$; gradient enrichment under locally positive-definite $\partial^2 E/\partial q^2$ is established in Theorem~\ref{thm:energy_convergence}. Potential-based reward shaping (PBRS) is applied to regularized MDP using $\Phi$, yielding $R_{\text{H-EARS}}$. Since the two branches act on disjoint argument domains, $\alpha_{\text{task}}$, $\alpha_{\text{energy}}$, and $\lambda$ serve as independently tunable design parameters (Lemma~\ref{lem:functional_independence}).}
		\label{fig:framework}
	\end{figure*}
	
	\section{Standard Environment Experiments}
	
	\subsection{Baseline Comparison}
	
	The experimental validation employs four standard continuous control environments from Gymnasium: Ant-v5 (243-dimensional observation, 8-dimensional action), Hopper-v5 (11-dimensional observation, 3-dimensional action), LunarLander-v3 (8-dimensional observation, 2-dimensional action), and Humanoid-v5 (348-dimensional observation, 17-dimensional action). These environments span diverse complexity levels and physical characteristics, enabling comprehensive framework evaluation.
	
	\begin{figure*}[!t]
		\centering
		\subfigure[DDPG in Ant-v5]{
			\includegraphics[width=0.23\textwidth]{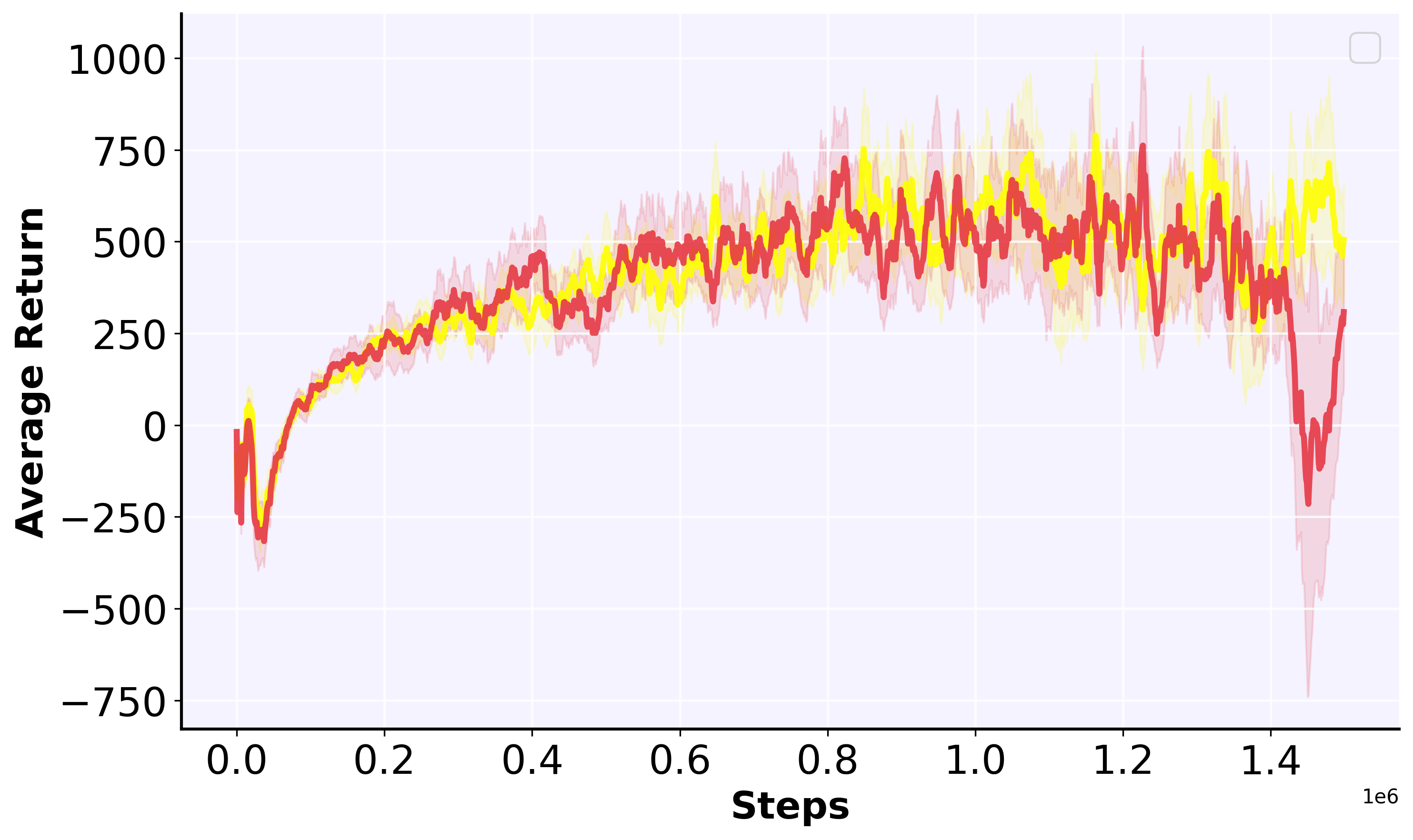}
			\label{fig:DDPG_ant}
		}
		\hspace{-0.2cm}
		\subfigure[PPO in Ant-v5]{
			\includegraphics[width=0.23\textwidth]{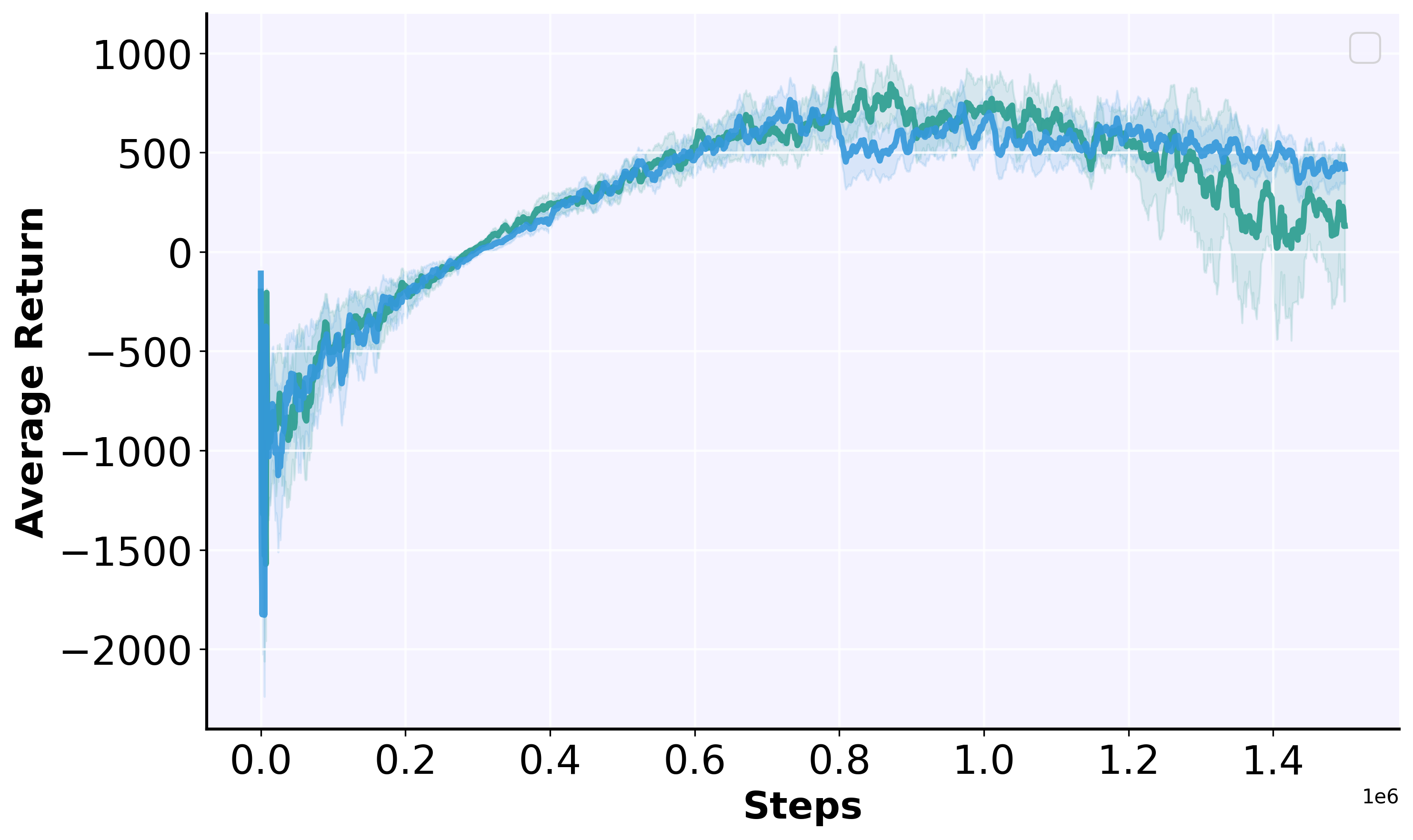}
			\label{fig:PPO_ant}
		}
		\hspace{-0.2cm}
		\subfigure[TD3 in Ant-v5]{
			\includegraphics[width=0.23\textwidth]{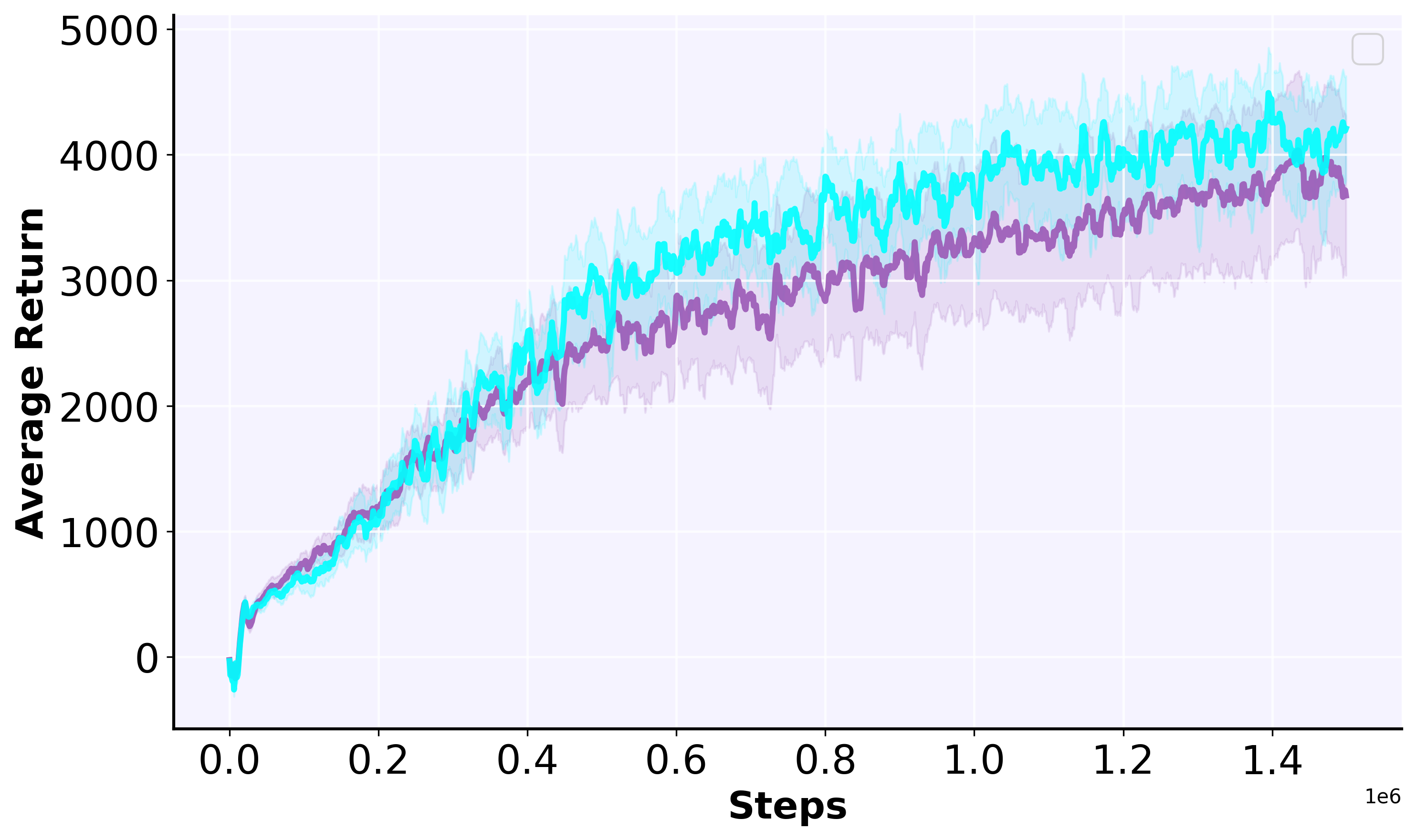}
			\label{fig:TD3_ant}
		}
		\hspace{-0.2cm}
		\subfigure[SAC in Ant-v5]{
			\includegraphics[width=0.23\textwidth]{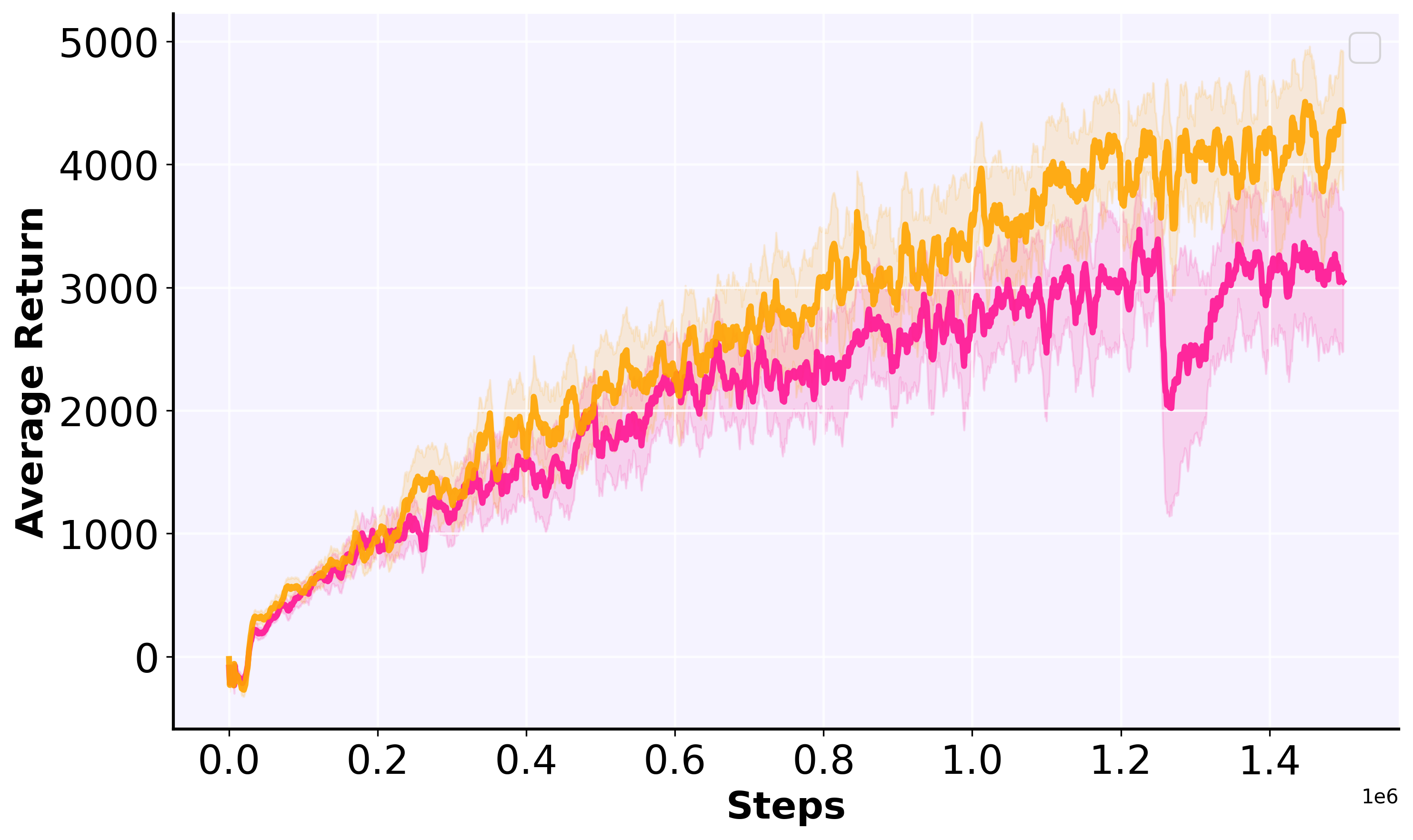}
			\label{fig:SAC_ant}
		}
		
		\subfigure[DDPG in Humanoid-v5]{
			\includegraphics[width=0.23\textwidth]{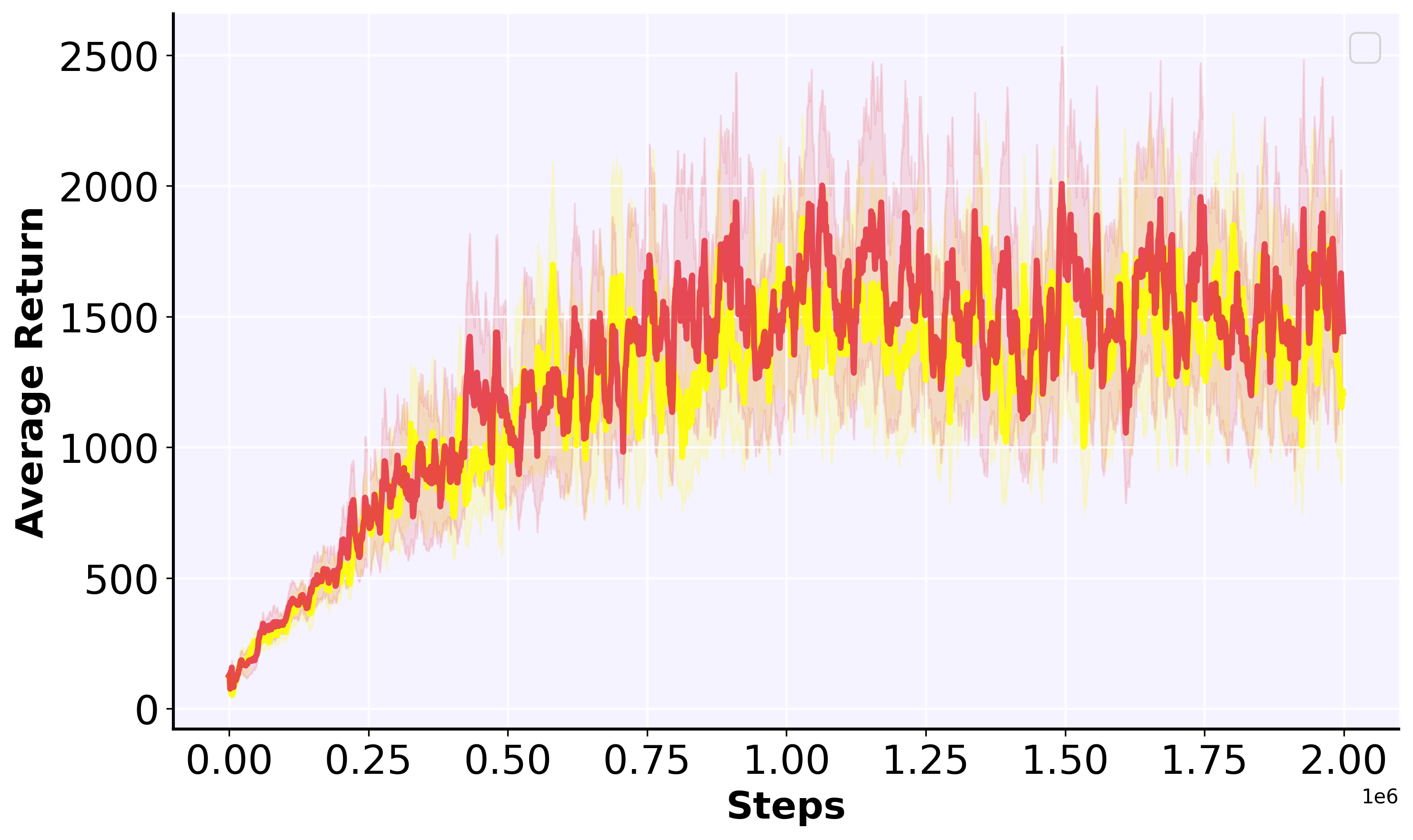}
			\label{fig:DDPG_humanoid}
		}
		\hspace{-0.2cm}
		\subfigure[PPO in Humanoid-v5]{
			\includegraphics[width=0.23\textwidth]{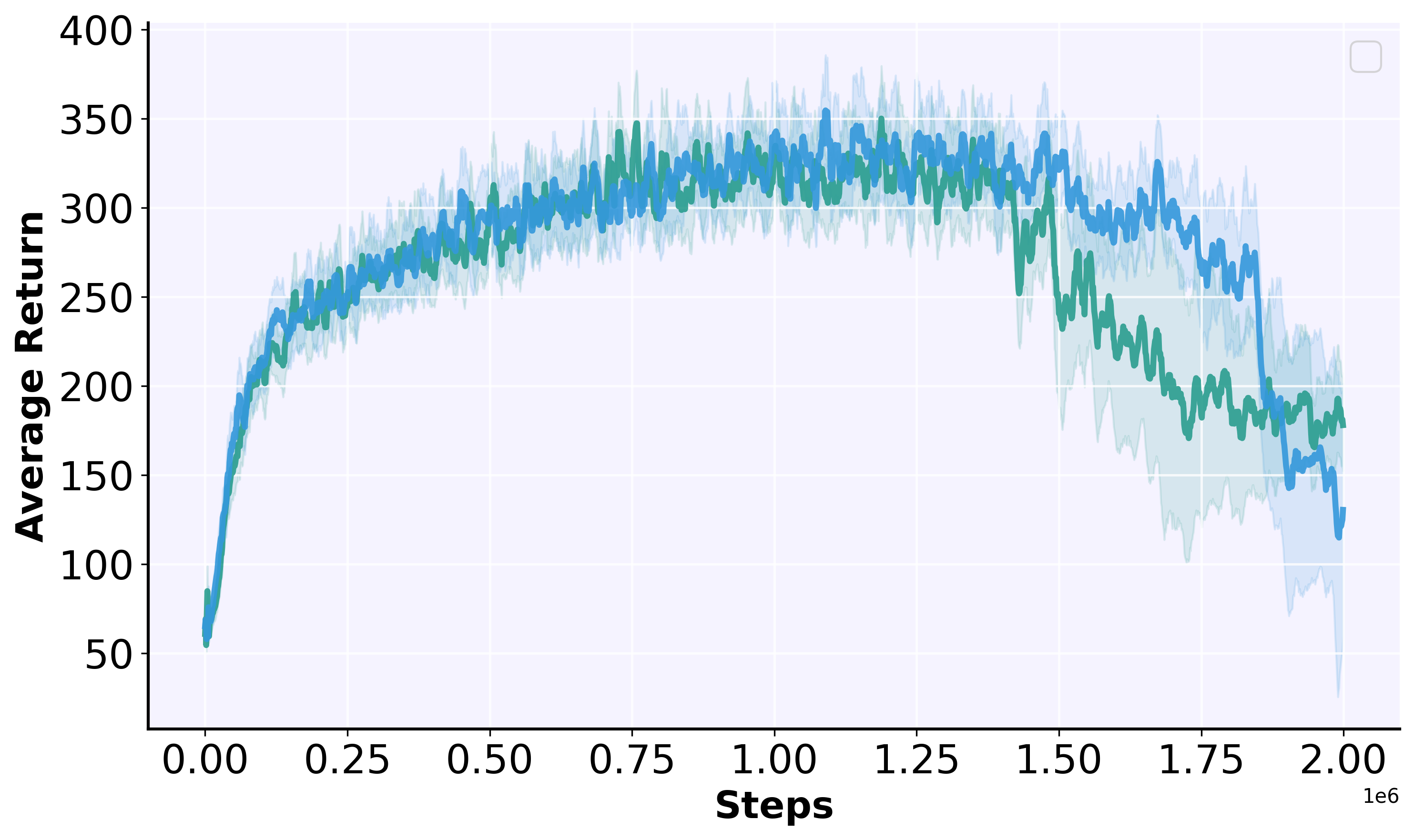}
			\label{fig:PPO_humanoid}
		}
		\hspace{-0.2cm}
		\subfigure[TD3 in Humanoid-v5]{
			\includegraphics[width=0.23\textwidth]{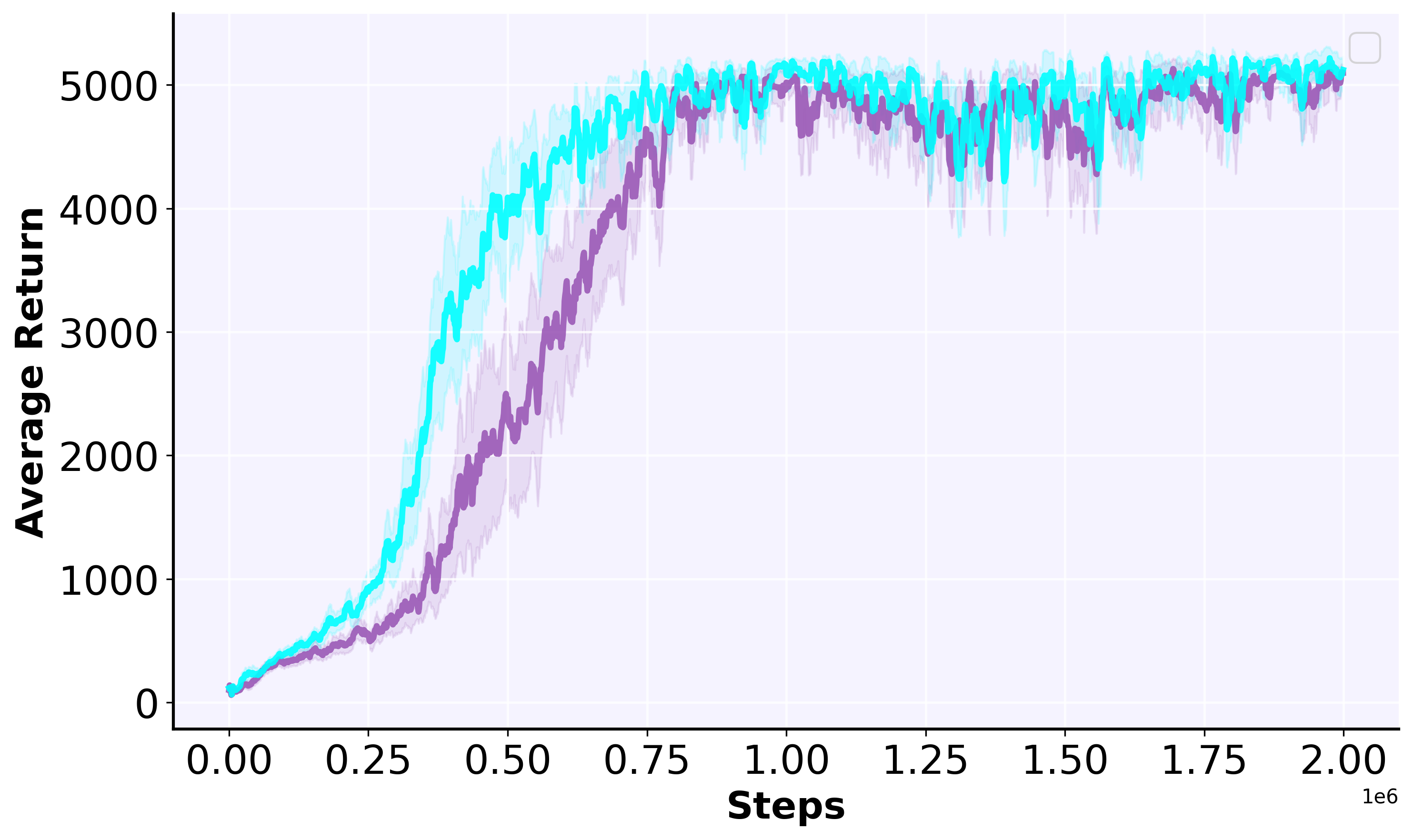}
			\label{fig:TD3_humanoid}
		}
		\hspace{-0.2cm}
		\subfigure[SAC in Humanoid-v5]{
			\includegraphics[width=0.23\textwidth]{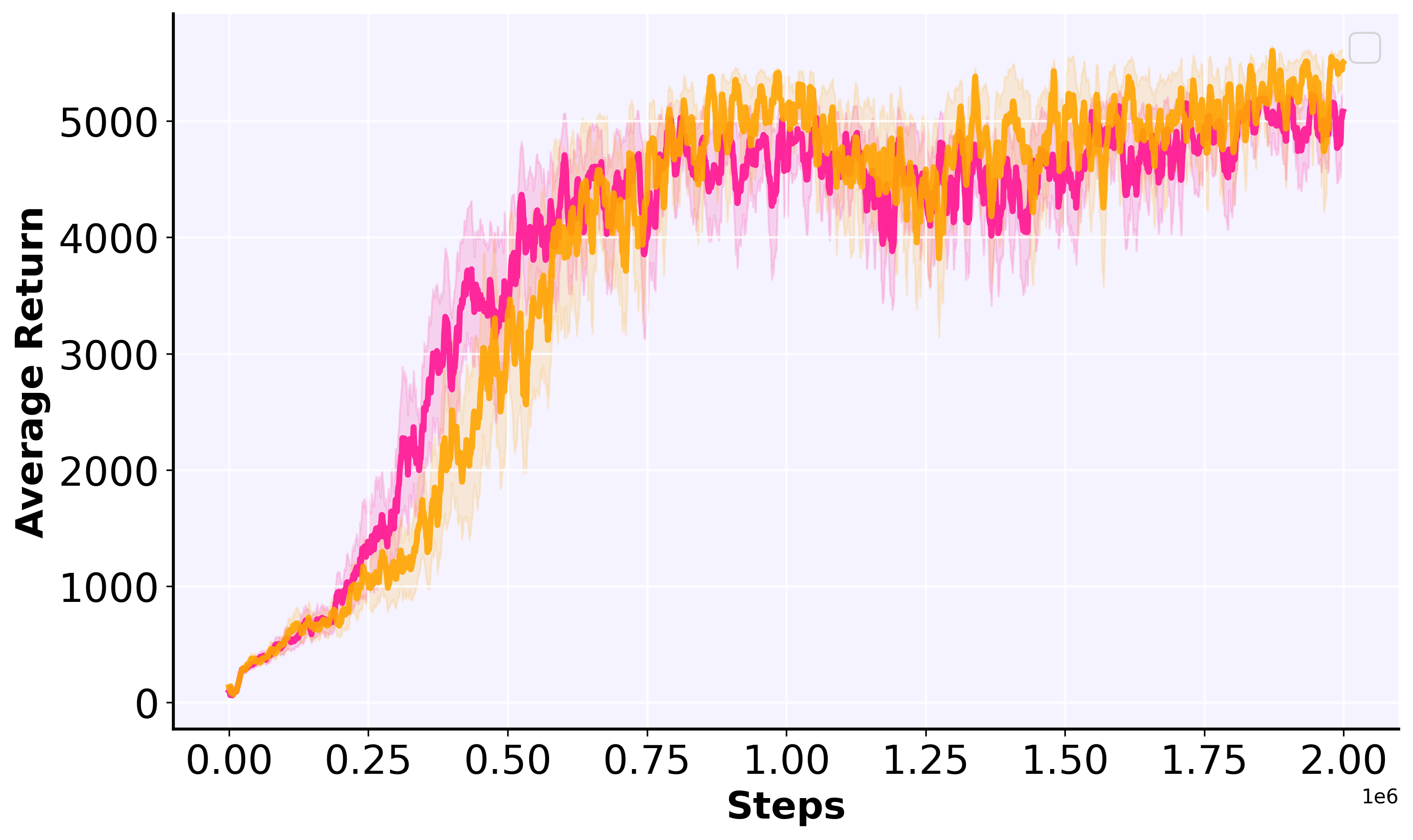}
			\label{fig:SAC_humanoid}
		}
		
		\subfigure[DDPG in Hopper-v5]{
			\includegraphics[width=0.23\textwidth]{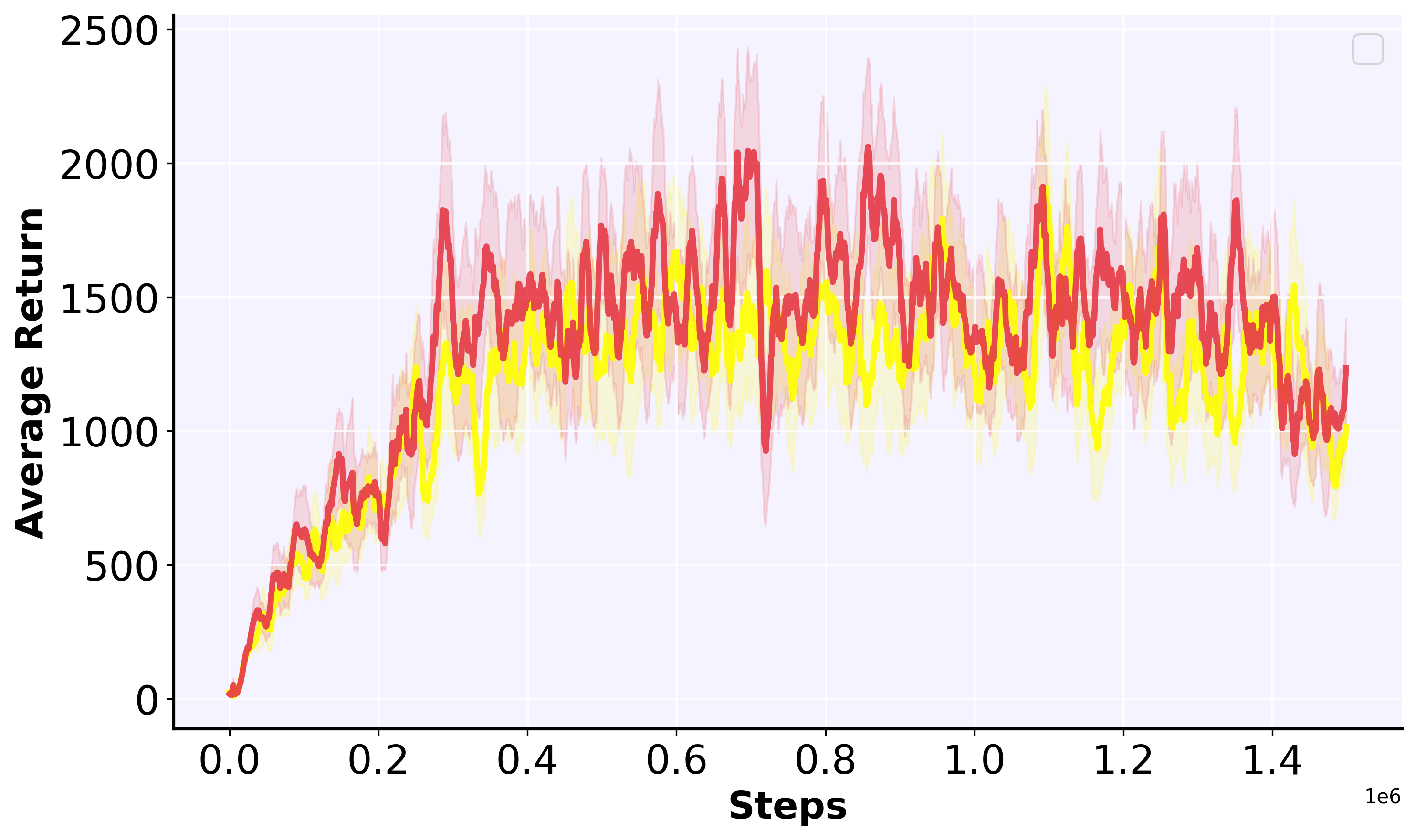}
			\label{fig:DDPG_hopper}
		}
		\hspace{-0.2cm}
		\subfigure[PPO in Hopper-v5]{
			\includegraphics[width=0.23\textwidth]{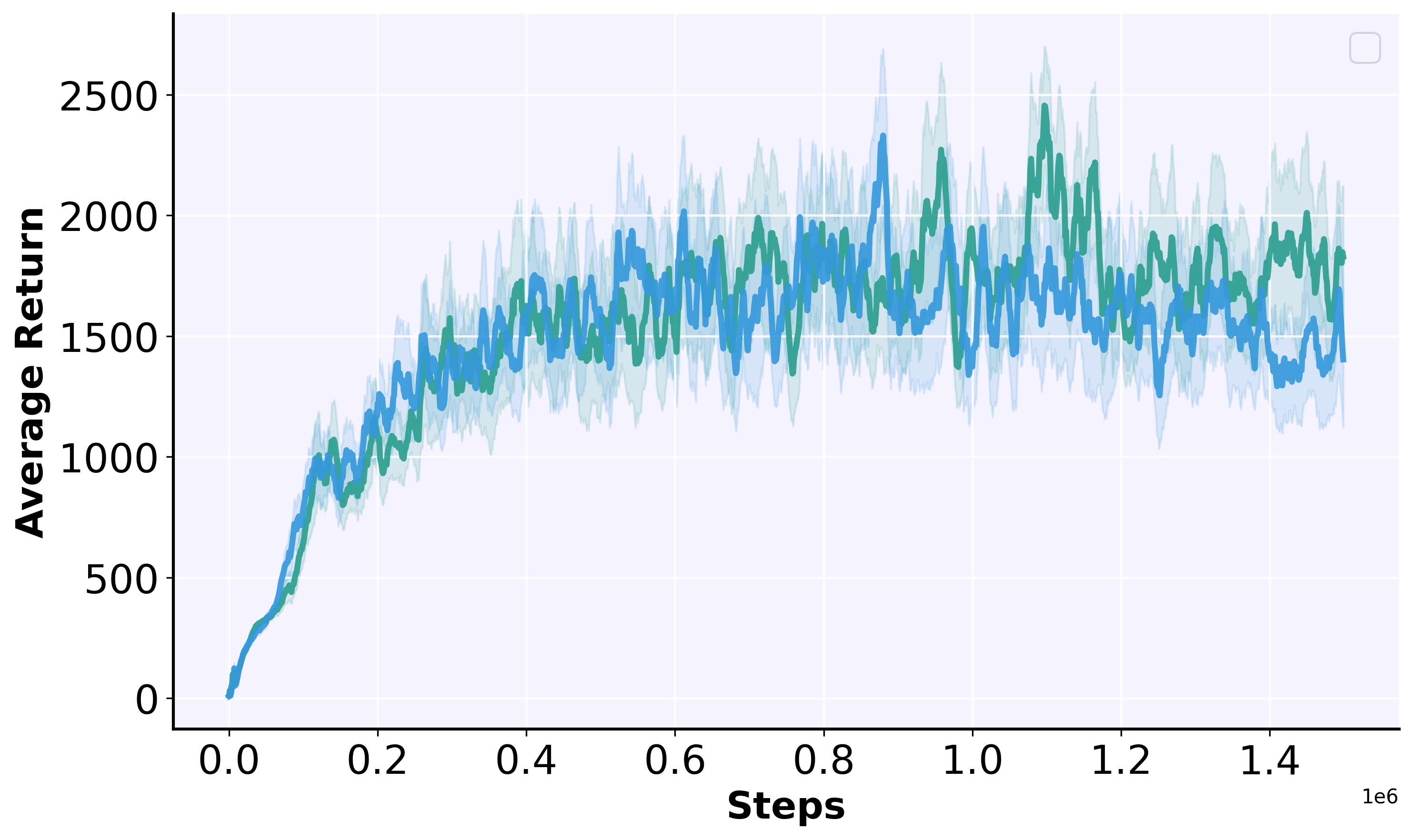}
			\label{fig:PPO_hopper}
		}
		\hspace{-0.2cm}
		\subfigure[TD3 in Hopper-v5]{
			\includegraphics[width=0.23\textwidth]{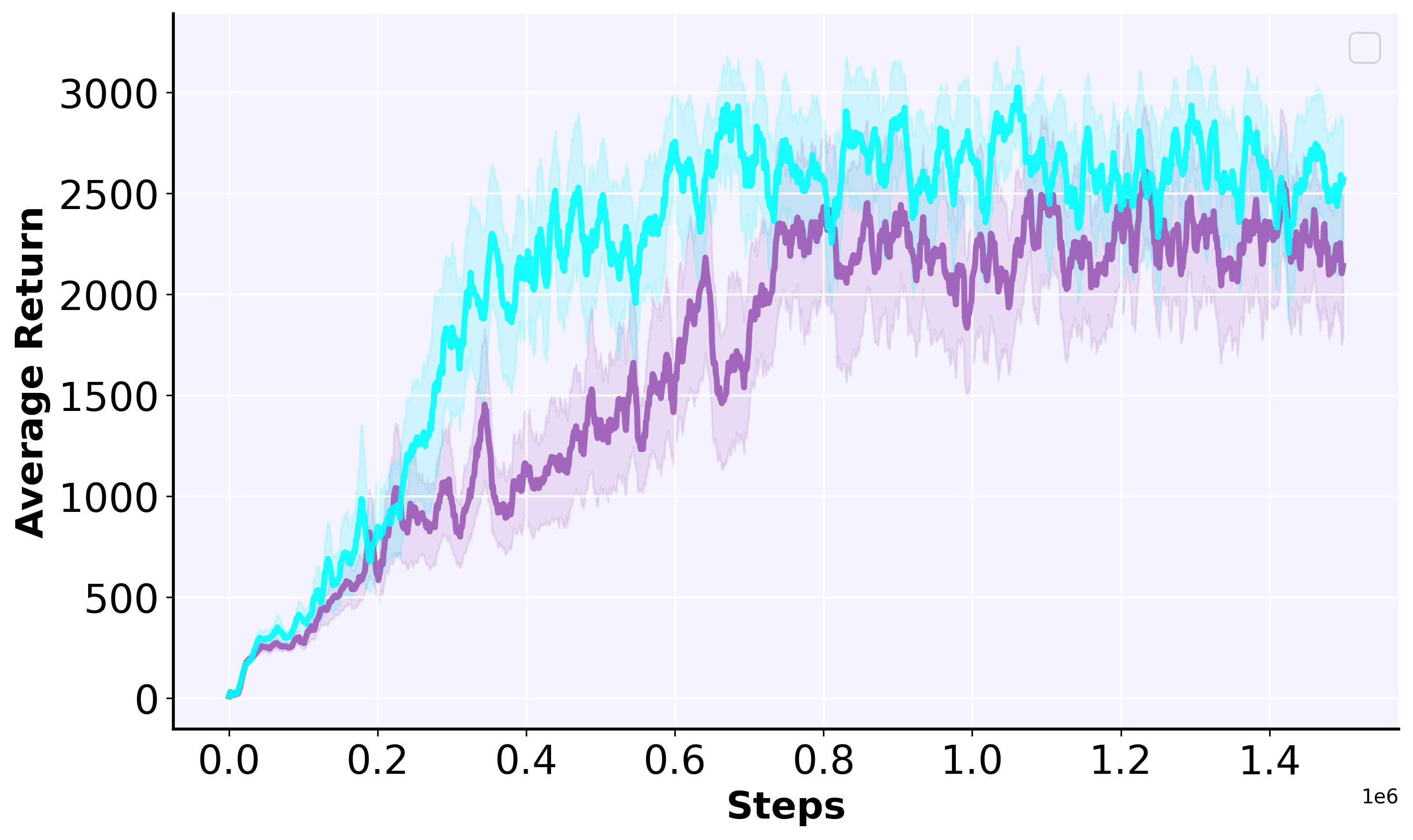}
			\label{fig:TD3_hopper}
		}
		\hspace{-0.2cm}
		\subfigure[SAC in Hopper-v5]{
			\includegraphics[width=0.23\textwidth]{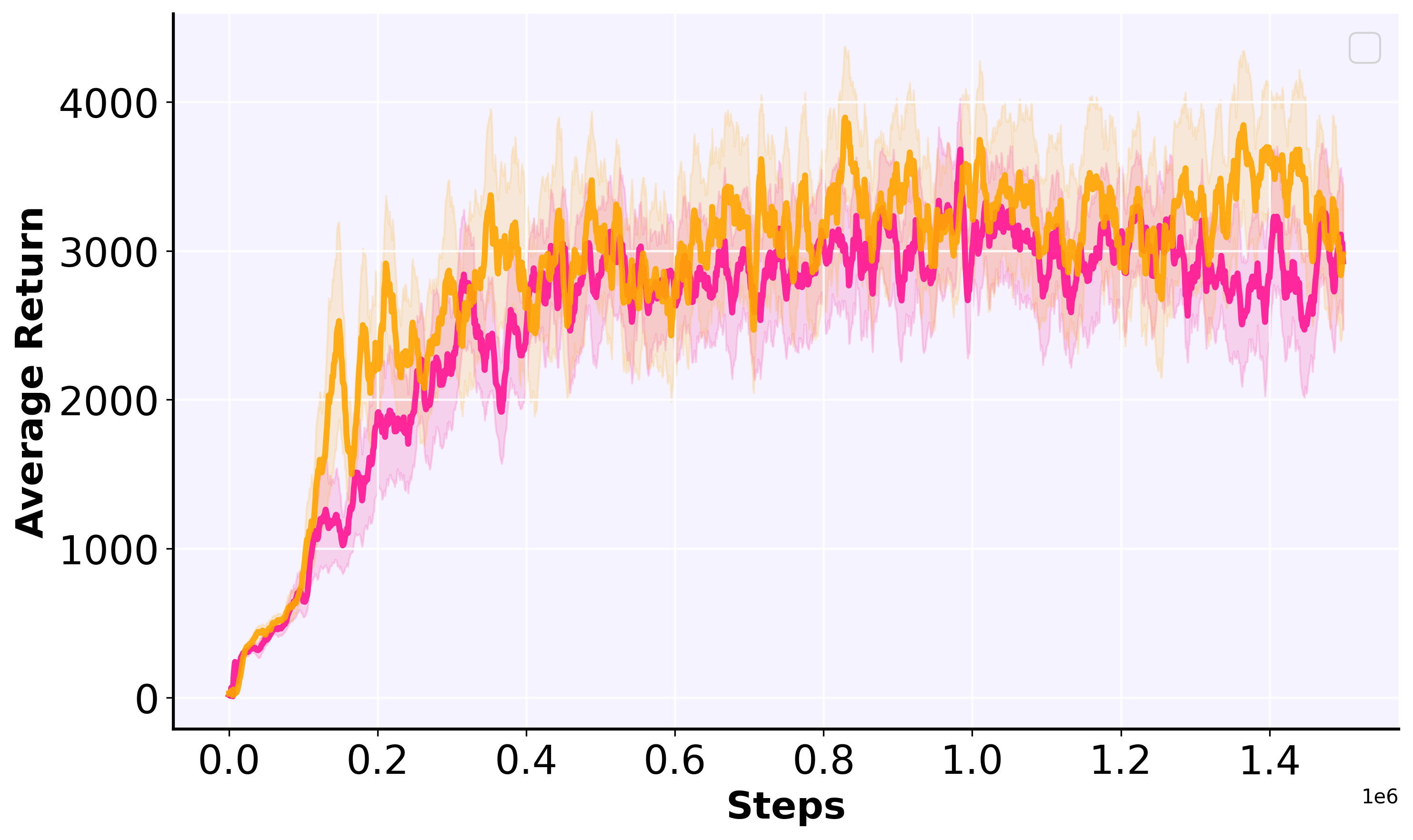}
			\label{fig:SAC_hopper}
		}
		
		\subfigure[DDPG in Lunarlander-v3]{
			\includegraphics[width=0.23\textwidth]{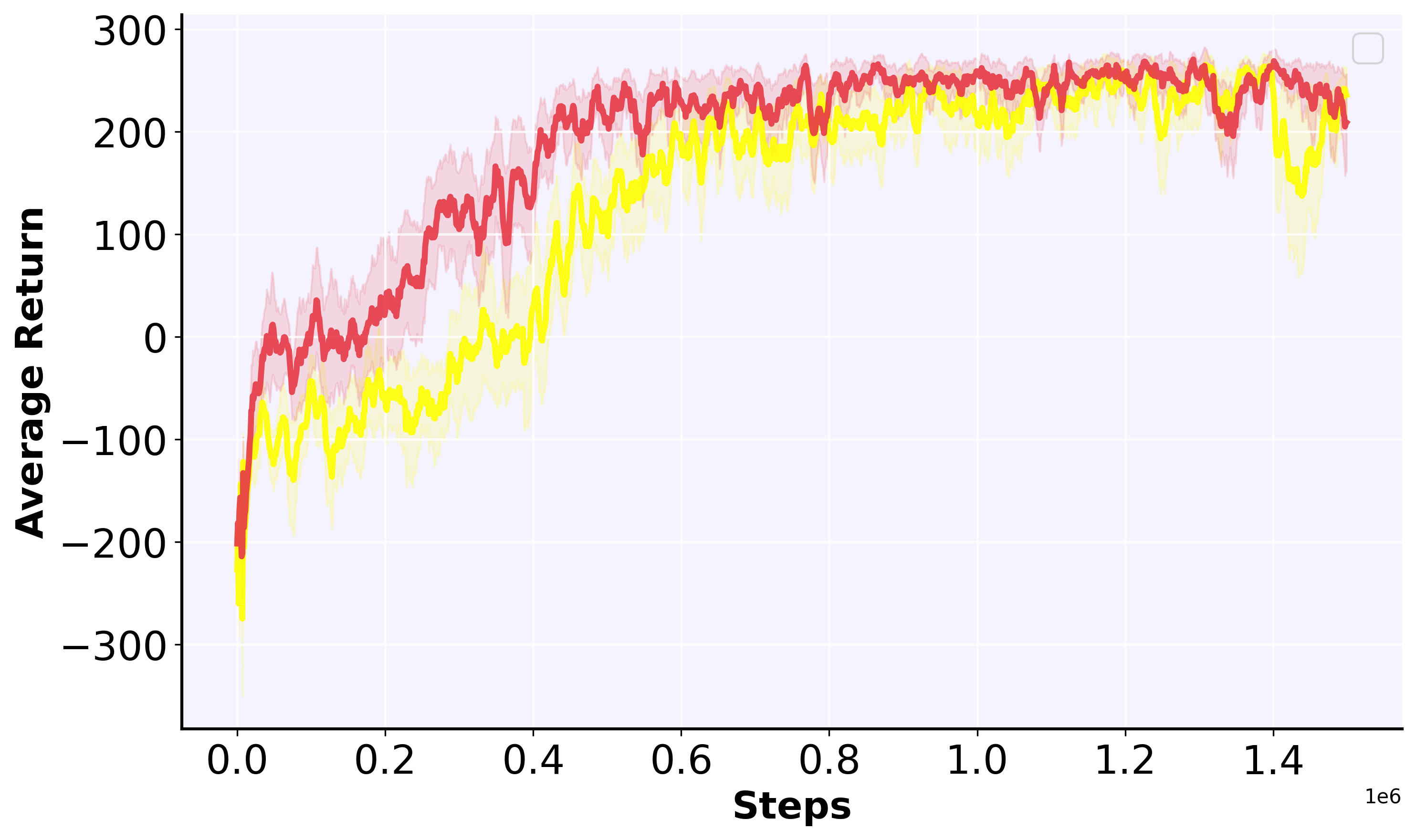}
			\label{fig:DDPG_lunarlander}
		}
		\hspace{-0.2cm}
		\subfigure[PPO in Lunarlander-v3]{
			\includegraphics[width=0.23\textwidth]{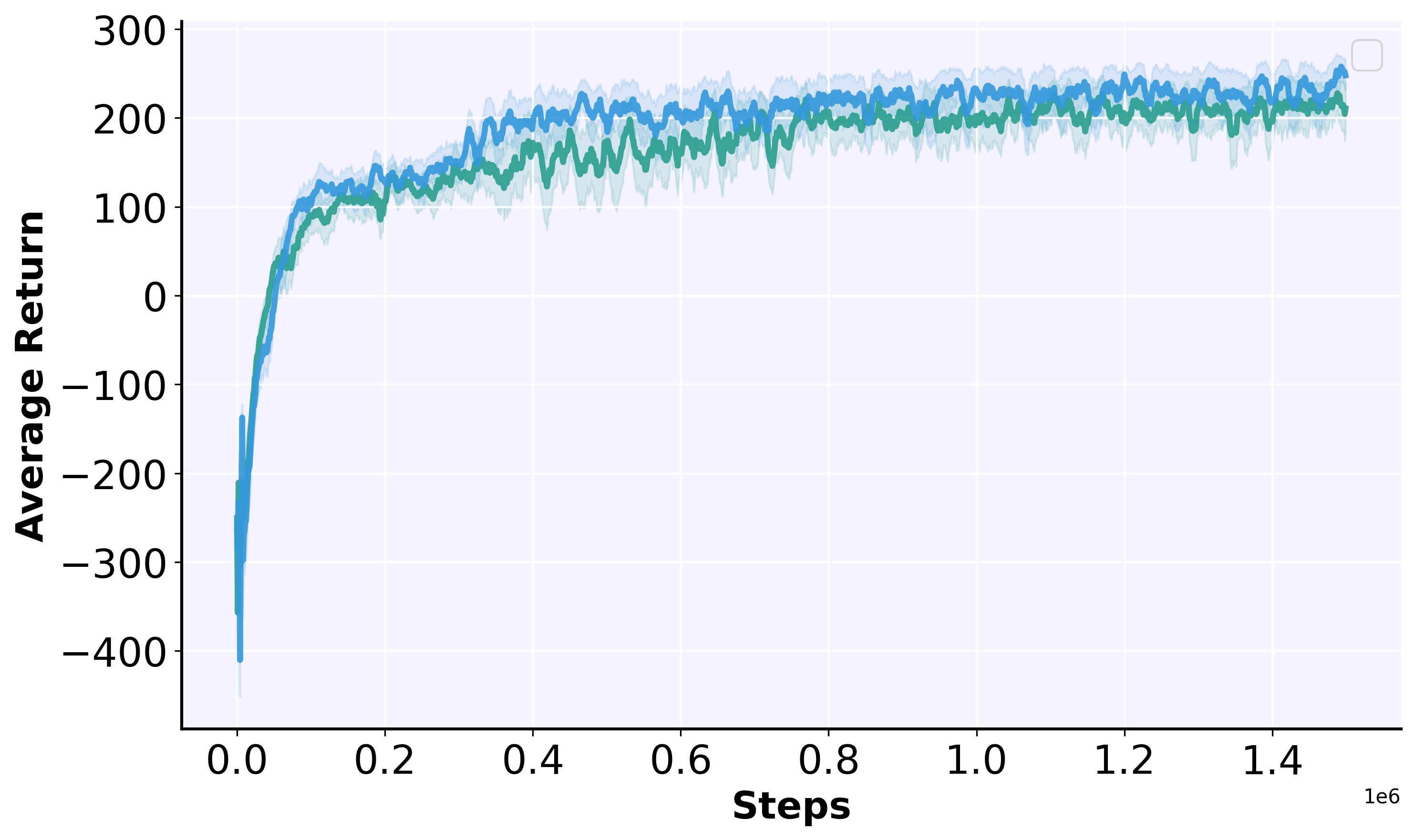}
			\label{fig:PPO_lunarlander}
		}
		\hspace{-0.2cm}
		\subfigure[TD3 in Lunarlander-v3]{
			\includegraphics[width=0.23\textwidth]{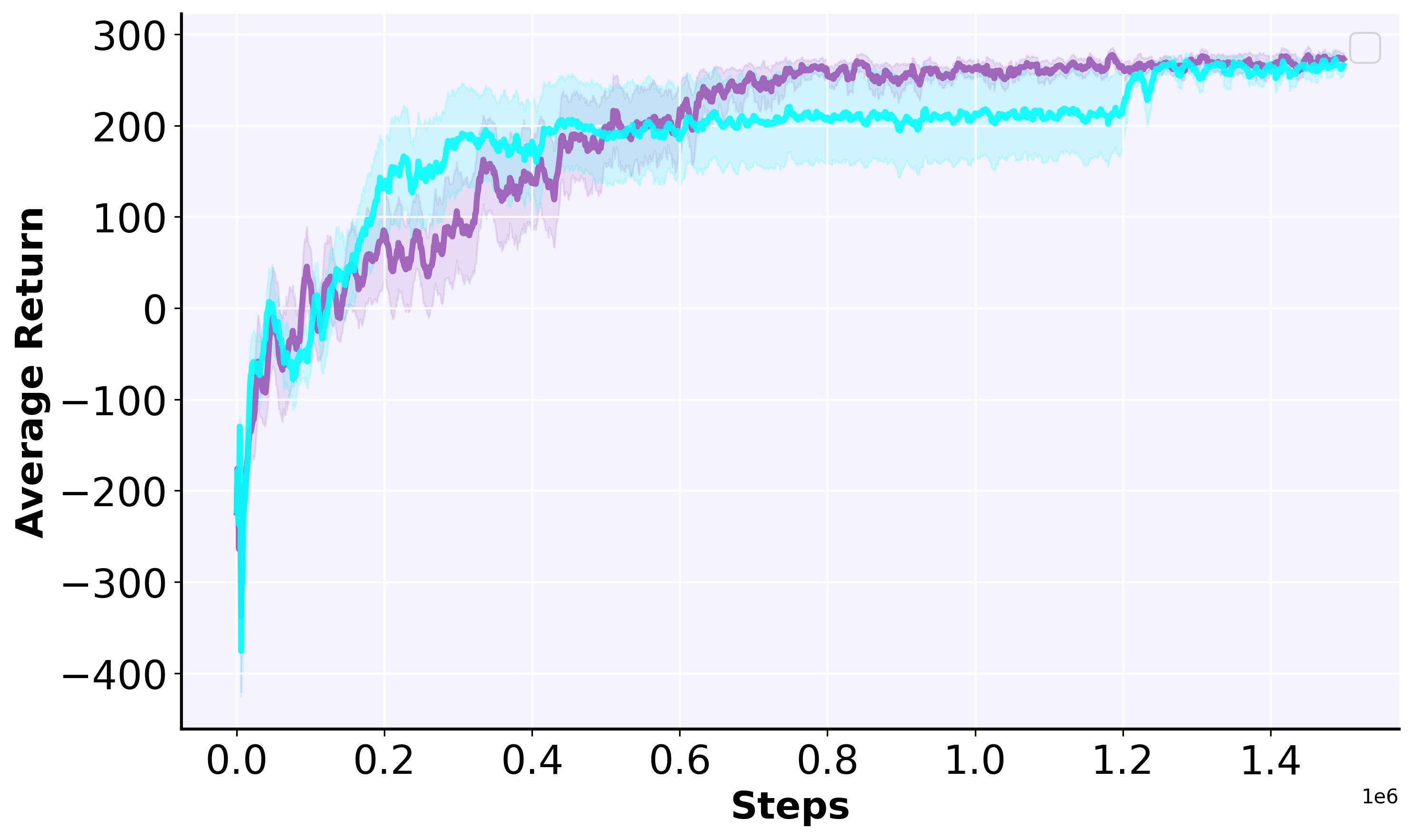}
			\label{fig:TD3_lunarlander}
		}
		\hspace{-0.2cm}
		\subfigure[SAC in Lunarlander-v3]{
			\includegraphics[width=0.23\textwidth]{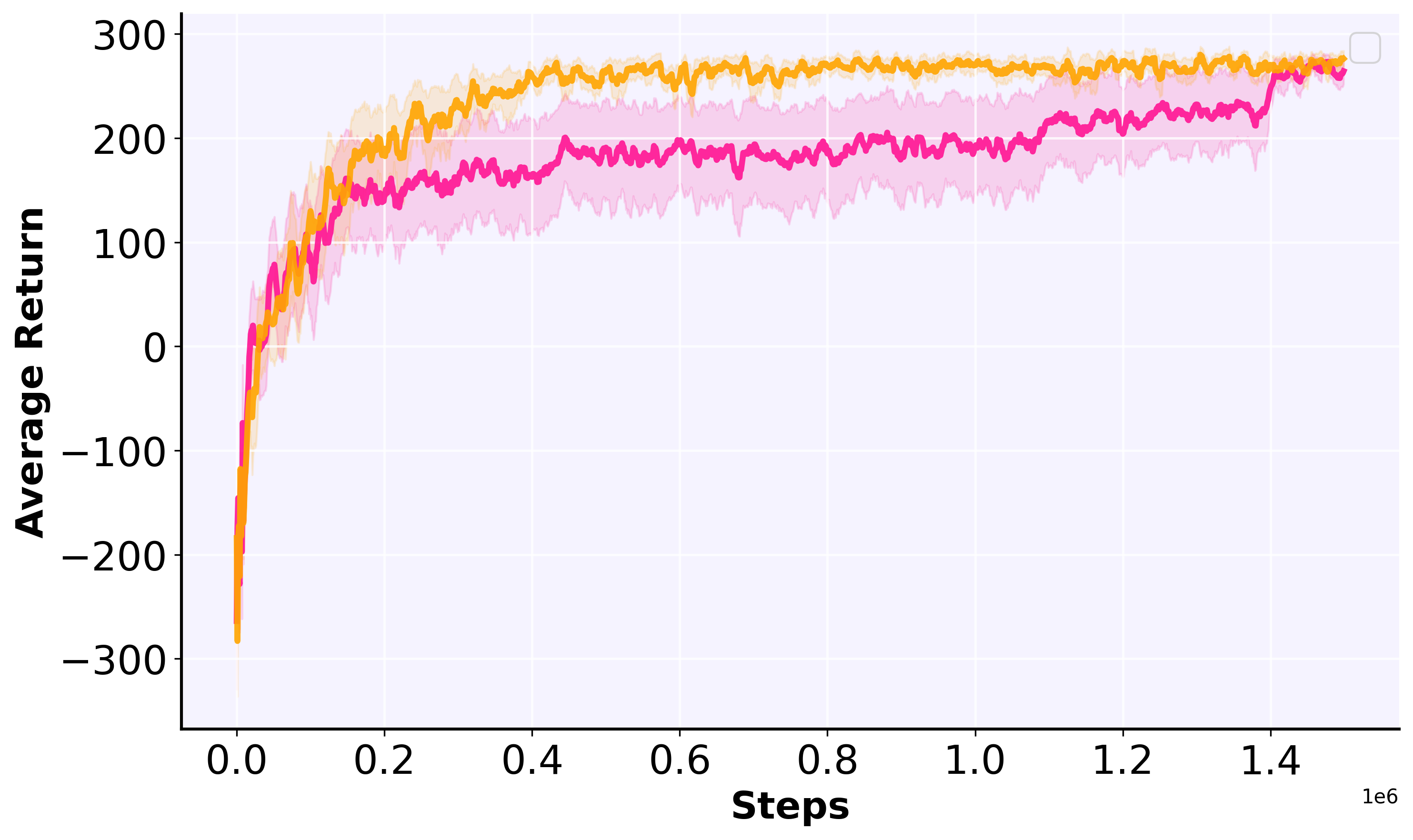}
			\label{fig:SAC_lunarlander}
		}
		
		\vspace{0.15cm}
		\centering
		\begin{tabular}{@{}l@{\hspace{1.5em}}l@{\hspace{1.5em}}l@{\hspace{1.5em}}l@{}}
			\colorbox{DDPG}{\rule{0pt}{1pt}\rule{8pt}{0pt}} \raisebox{-2.0pt}{\scriptsize DDPG-Vanilla} &
			\colorbox{PPO}{\rule{0pt}{1pt}\rule{8pt}{0pt}} \raisebox{-2.0pt}{\scriptsize PPO-Vanilla} &
			\colorbox{TD3}{\rule{0pt}{1pt}\rule{8pt}{0pt}} \raisebox{-2.0pt}{\scriptsize TD3-Vanilla} &
			\colorbox{SAC}{\rule{0pt}{1pt}\rule{8pt}{0pt}} \raisebox{-2.0pt}{\scriptsize SAC-Vanilla} \\[0.2em]
			\colorbox{DDPGHEARS}{\rule{0pt}{1pt}\rule{8pt}{0pt}} \raisebox{-2.0pt}{\scriptsize DDPG-H-EARS} &
			\colorbox{PPOHEARS}{\rule{0pt}{1pt}\rule{8pt}{0pt}} \raisebox{-2.0pt}{\scriptsize PPO-H-EARS} &
			\colorbox{TD3HEARS}{\rule{0pt}{1pt}\rule{8pt}{0pt}} \raisebox{-2.0pt}{\scriptsize TD3-H-EARS} &
			\colorbox{SACHEARS}{\rule{0pt}{1pt}\rule{8pt}{0pt}} \raisebox{-2.0pt}{\scriptsize SAC-H-EARS}
		\end{tabular}
		
		\caption{Benchmark performance comparison across four environments and four baseline algorithms. SAC and TD3 results validate Theorem~\ref{thm:energy_convergence} (energy-based gradient enrichment under positive-definite Hessian accelerates convergence) and Lemma~\ref{lem:functional_independence} (independent task and energy contributions). DDPG degradation in Ant-v5 instantiates the oscillatory-artifact failure mode analyzed in Proposition~\ref{prop:regularization_necessity}: DDPG's OU exploration is suppressed by energy regularization, creating local-minima traps in the 8-DOF shaped landscape.}
		\label{fig:standard_envs_benchmark}
	\end{figure*}
	
	All experiments use five fixed random seeds \{12345, 22345, 32345, 42345, 52345\} (Table~\ref{tab:hyperparams_standard}). Training runs on an NVIDIA RTX A5000 (24\,GB) with CUDA 11.8, Python 3.10, PyTorch 1.13. Full training scripts, H-EARS energy module implementations for all four environments, and evaluation pipelines are publicly available at \url{https://github.com/QiJLiao/Hybrid_Energy-Aware_Reward_Shaping} and provided in the supplementary materials. Eight algorithm configurations are evaluated: H-EARS+SAC, H-EARS+TD3, H-EARS+PPO, H-EARS+DDPG, and their vanilla counterparts (SAC, TD3, PPO, DDPG). All baseline implementations follow Stable-Baselines3 defaults (detailed in Table~\ref{benchmark_training_details}). 
	
	\textbf{Baseline fairness.} All baseline algorithms use Stable-Baselines3 defaults without joint tuning with H-EARS coefficients, ensuring H-EARS is evaluated as an add-on to unmodified baselines. Compute budgets are matched step-for-step; H-EARS per-step overhead is below 3\% due to the $O(n)$ energy computation. Exploration schedules are preserved: SAC's entropy coefficient, TD3's target policy smoothing noise, PPO's clipping ratio, and DDPG's Ornstein-Uhlenbeck noise parameters all remain at Stable-Baselines3 defaults. H-EARS hyperparameters $\{\alpha_{\text{task}}, \alpha_{\text{energy}}, \lambda\}$ are selected by grid search on a held-out validation seed (excluded from the five reported seeds) and reported in Table~\ref{tab:hyperparams_standard}. The degradation of DDPG+H-EARS on Ant-v5 (610$\to$456) is mechanistically predicted by the exploration-regularization conflict analyzed in Section~\ref{sec:ablation}: reducing $\lambda$ to zero for DDPG/Ant recovers vanilla performance, but this per-baseline exemption would undermine the general-purpose nature of the framework. This interaction is therefore reported as a principled applicability boundary rather than a tuning deficiency. 
	
	\textbf{Environment reward configuration.} All Gymnasium environments are run with their default reward configurations, which include built-in action cost terms. H-EARS' action regularization term $-\lambda\mathcal{E}(a)$ is therefore additive to these existing penalties. The distinction from simple L2 augmentation is threefold: (i) $\mathcal{E}(a)=a^\top Qa$ with a physics-motivated $Q$ acts on the same quadratic form but is coupled to the energy potential through the two-step MDP transformation (Section~\ref{sec:theoretical_framework}), providing theoretical coherence absent in ad-hoc regularization; (ii) the ablation study's Reg.~Only variant (Section~\ref{sec:ablation}), which applies $\lambda\mathcal{E}(a)$ alone without energy potentials, underperforms H-EARS by 15.8\% (Ant) and 14.1\% (Hopper), confirming that the gains are not attributable to regularization alone; (iii) the $\lambda$ values used (1e-2 to 1e-4) are one to three orders of magnitude smaller than the default cost weights, ensuring H-EARS does not simply dominate the existing penalty structure.
	
	Policy stability quantified via coefficient of variation (CV): $\text{CV} = \frac{\sigma}{\mu} \times 100\%$. Average returns across four environments shown in Figure~\ref{fig:standard_envs_benchmark}, with comprehensive results in Table~\ref{tab:comprehensive_results}: (a) final average returns, (b) episodes to performance threshold, (c) post-convergence CV.
	
	\begin{table*}[!t]
		\renewcommand{\arraystretch}{1.2}
		\centering
		\captionsetup{font=footnotesize, labelfont=footnotesize}
		\caption{Comprehensive Performance Comparison across Standard Environments}
		\label{tab:comprehensive_results}
		
		\newlength{\subtablewidth}
		\setlength{\subtablewidth}{0.7\textwidth}

		\textbf{(a) Average Returns (Mean $\pm$ Std)}\\[1mm]
		
		\begin{tabular*}{\subtablewidth}{@{\extracolsep{\fill}}lcccc@{}}
			\toprule
			\textbf{Algorithm} & \textbf{Ant-v5} & \textbf{Hopper-v5} & \textbf{LunarLander-v3} & \textbf{Humanoid-v5} \\
			\midrule
			SAC - Vanilla & $3157 \pm 182$ & $2520 \pm 405$ & $268 \pm 30$ & $4988 \pm 502$ \\
			SAC - H-EARS & $\mathbf{4183 \pm 174}$ & $\mathbf{3354 \pm 354}$ & $\mathbf{289 \pm 19}$  & $\mathbf{5228 \pm 470}$ \\
			\cmidrule(lr){1-5}
			TD3 - Vanilla & $3570 \pm 391$ & $2364 \pm 302$ & $\mathbf{279 \pm 15}$ & $4964 \pm 357$ \\
			TD3 - H-EARS & $\mathbf{4125 \pm 398}$ & $\mathbf{2641 \pm 298}$ & $277 \pm 9$ & $\mathbf{5190 \pm 325}$ \\
			\cmidrule(lr){1-5}
			PPO - Vanilla & $376 \pm 281$ & $\mathbf{1621 \pm 255}$ & $235 \pm 24$  & $\mathbf{197 \pm 89}$ \\
			PPO - H-EARS & $\mathbf{501 \pm 127}$ & $1568 \pm 203$ & $\mathbf{258 \pm 14}$ & $178 \pm 120$ \\
			\cmidrule(lr){1-5}
			DDPG - Vanilla & $\mathbf{610 \pm 231}$  & $1490 \pm 300$ & $231 \pm 30$ & $1524 \pm 488$ \\
			DDPG - H-EARS & $456 \pm 422$ & $\mathbf{1580 \pm 310}$ & $\mathbf{250 \pm 21}$ & $\mathbf{1620 \pm 523}$ \\
			\bottomrule
		\end{tabular*}
		
		\vspace{4mm}
		
		\textbf{(b) Training Episodes Required to Reach Performance Threshold}\\[1mm]
		
		\begin{tabular*}{\subtablewidth}{@{\extracolsep{\fill}}lcccc@{}}
			\toprule
			\textbf{Algorithm} & \textbf{Ant-v5 (2500)} & \textbf{Hopper-v5 (1500)} & \textbf{LunarLander-v3 (200)} & \textbf{Humanoid-v5 (4000)} \\
			\midrule
			SAC - Vanilla & $1240 \pm 120$& $1050 \pm 150$& $620 \pm 60$ & $\mathbf{2080 \pm 200}$ \\
			SAC - H-EARS & $\mathbf{890 \pm 70}$  & $\mathbf{830 \pm 65}$  & $\mathbf{290 \pm 35}$ & $2120 \pm 170$ \\
			\cmidrule(lr){1-5}
			TD3 - Vanilla & $1320 \pm 130$& $1470 \pm 120$& $540 \pm 55$ & $2850 \pm 220$ \\
			TD3 - H-EARS & $\mathbf{980 \pm 80}$  & $\mathbf{910 \pm 75}$  & $\mathbf{350 \pm 40}$ & $\mathbf{2220 \pm 180}$ \\
			\cmidrule(lr){1-5}
			PPO - Vanilla & -- & $\mathbf{1090 \pm 110}$& $780 \pm 80$ & -- \\
			PPO - H-EARS & -- & $1120 \pm 90$ & $\mathbf{560 \pm 60}$ & -- \\
			\cmidrule(lr){1-5}
			DDPG - Vanilla & -- & $1640 \pm 170$& $690 \pm 90$ & -- \\
			DDPG - H-EARS & -- & $\mathbf{1230 \pm 110}$& $\mathbf{570 \pm 70}$ & -- \\
			\bottomrule
		\end{tabular*}
		
		\vspace{4mm}

		\textbf{(c) Post-Convergence Performance Coefficient of Variation (\%)}\\[1mm]
		
		\begin{tabular*}{\subtablewidth}{@{\extracolsep{\fill}}lcccc@{}}
			\toprule
			\textbf{Algorithm} & \textbf{Ant-v5} & \textbf{Hopper-v5} & \textbf{LunarLander-v3} & \textbf{Humanoid-v5} \\
			\midrule
			SAC - Vanilla & $5.8$ & $16.1$ & $11.2$  & $10.1$ \\
			SAC - H-EARS & $\mathbf{4.2}$ & $\mathbf{10.6}$ & $\mathbf{6.6}$   & $\mathbf{9.0}$  \\
			\cmidrule(lr){1-5}
			TD3 - Vanilla & $11.0$ & $12.8$ & $5.4$   & $7.2$ \\
			TD3 - H-EARS & $\mathbf{9.6}$ & $\mathbf{11.3}$ & $\mathbf{3.2}$   & $\mathbf{6.3}$  \\
			\cmidrule(lr){1-5}
			PPO - Vanilla & $74.7$ & $15.7$ & $10.2$ & $\mathbf{45.2}$ \\
			PPO - H-EARS & $\mathbf{25.3}$ & $\mathbf{12.9}$ & $\mathbf{5.4}$   & $67.4$ \\
			\cmidrule(lr){1-5}
			DDPG - Vanilla & $\mathbf{37.9}$ & $20.1$ & $13.0$  & $\mathbf{32.0}$ \\
			DDPG - H-EARS & $92.5$ & $\mathbf{19.6}$ & $\mathbf{8.4}$   & $32.3$ \\
			\bottomrule
		\end{tabular*}
		
		\vspace{2mm}
		\footnotesize
		\textbf{Notes:} Bold values indicate best performance. ``--'' denotes failure to reach threshold.
	\end{table*}
	
	\textbf{SAC Integration}: SAC+H-EARS achieves the most consistent improvements. Ant-v5's 32.5\% performance gain (3157→4183) with 28.2\% convergence acceleration (1240→890 episodes) and 27.6\% variance reduction (CV: 5.8\%→4.2\%) validates Theorem~\ref{thm:energy_convergence}: energy potential's positive definite Hessian reshapes the value landscape, enabling faster convergence through improved gradient quality. LunarLander-v3 exhibits strongest impact—53.3\% faster convergence with 41.1\% variance reduction (CV: 11.2\%→6.6\%)—demonstrating energy-based constraints' natural alignment with geometric navigation tasks where mechanical energy defines optimal trajectories. Hopper-v5 shows 33.1\% improvement (2520→3354), confirming energy potential's stabilization effect in inherently unstable systems. Humanoid-v5's modest 4.8\% gain reflects Lemma~\ref{lem:approx_potential}: simplified O(n) energy models capture sufficient dominant dynamics in high-dimensional spaces, primarily enhancing stability (CV: 10.1\%→9.0\%) rather than asymptotic performance.
	
	\textbf{TD3 Integration}: TD3+H-EARS exhibits environment-selective benefits. Ant-v5 achieves 15.5\% improvement (3570→4125) with 25.8\% faster convergence, while Hopper-v5 gains 11.7\% (2364→2641). However, LunarLander-v3's minimal change (279→277) reveals critical insight: TD3's deterministic policy gradient inherently incorporates smoothness through target network updates, saturating marginal benefit from additional energy constraints. The retained 11.8\% variance reduction (CV: 12.8\%→11.3\%) in Hopper-v5 confirms energy potentials provide stability benefits even when performance gains saturate, validating Lemma~\ref{lem:functional_independence}'s independent contribution principle.
	
	\textbf{PPO Integration}: PPO+H-EARS reveals complex stability-performance tradeoffs. Ant-v5's dramatic rescue from collapse (376→501, CV: 74.7\%→25.3\%) validates Theorem~\ref{prop:regularization_necessity}: energy potential's implicit bounds prevent catastrophic divergence when dual potentials generate competing gradients. LunarLander-v3 shows 9.8\% improvement with 47.1\% variance reduction. However, Hopper-v5's regression (1621→1568) exposes fundamental limitation: PPO's clipped objective conflicts with energy shaping when rapid policy adjustments are required for dynamic stabilization. Humanoid-v5's delayed collapse (Vanilla fails at 1.0M steps, H-EARS at 1.5M) with lower final return (197→178) demonstrates energy constraints slow but cannot prevent instability propagation in critically unstable systems, consistent with Proposition~\ref{prop:lyapunov_heuristic}'s discrete-time approximation limits.
	
	\textbf{DDPG Integration}: DDPG+H-EARS establishes framework limitations. LunarLander-v3 achieves 8.2\% improvement with 35.4\% variance reduction (CV: 13.0\%→8.4\%), while Hopper-v5 gains 6.0\%. Yet Ant-v5's severe degradation (610$\to$456, CV: 37.9\%$\to$92.5\%) warrants a deeper mechanistic analysis. Three factors interact predictably:
	
	(i) Exploration-regularization conflict: DDPG relies on Ornstein-Uhlenbeck (OU) noise---temporally correlated, high-amplitude perturbations calibrated for 8-DOF joint-space exploration. The energy regularization term $\lambda\mathcal{E}(a_t) = \lambda a_t^\top Q a_t$ penalizes exactly these large-magnitude perturbations, reducing effective exploration coverage. Unlike SAC, whose entropy maximization maintains stochasticity in action selection independent of the reward signal, DDPG's deterministic policy has no intrinsic compensation mechanism for suppressed exploration amplitude.
	
	(ii) Local minima in the shaped landscape: For 8-DOF locomotion, $\Phi_{\text{energy}}$ creates gradient attraction toward passive, low-energy configurations (e.g., the ant crouching with minimal joint velocity). Without SAC-style stochastic escape, DDPG's deterministic gradient descent can converge to these energy-minimizing but task-suboptimal attractors. This is precisely the Class~II failure mode predicted by Theorem~\ref{prop:regularization_necessity}: Ant-v5 requires transient energy injection for effective locomotion, making strong energy constraints counterproductive when paired with a deterministic, exploration-limited algorithm.
	
	(iii) Amplitude-variance amplification: The CV jump from 37.9\% to 92.5\% reflects policy oscillation between passive (low-energy) and active (task-required) modes. When OU perturbations are suppressed, the policy alternates between over-exploiting low-energy attractors and recovering task performance through residual exploration---producing high variance without converging to either extreme.
	
	This failure is a principled boundary of H-EARS applicability, not a framework defect: algorithms that rely on high-magnitude action perturbations for exploration in high-DOF spaces are structurally incompatible with simultaneous strong energy minimization. This interaction is consistent with the general observation that framework-algorithm compatibility depends on the exploration mechanism employed, and is noted as a deployment consideration.
	
	Cross-algorithm analysis reveals physical characteristic dependencies. LunarLander-v3's universal improvement (SAC +7.8\%, PPO +9.8\%, DDPG +8.2\%) reflects natural energy formulation alignment—landing tasks inherently optimize kinetic/potential tradeoffs. Ant-v5 exhibits algorithm bifurcation: stochastic off-policy methods (SAC +32.5\%, TD3 +15.5\%) benefit from coordinated 8-DOF control, while aggressive exploration requirements (DDPG -25.2\%, PPO initially unstable) produce degradation, confirming Theorem~\ref{thm:energy_convergence}'s applicability requires sufficient stochastic exploration. Hopper-v5's mixed results (SAC +33.1\%, TD3 +11.7\%, PPO -3.3\%) reflect instability paradox: energy potentials stabilize through implicit Lyapunov constraints but over-constrain necessary rapid corrections. Humanoid-v5's modest gains validate Lemma~\ref{lem:approx_potential}: O(n) approximation suffices for high-dimensional systems, with stability enhancement (SAC CV -10.9\%, TD3 -12.5\%) exceeding performance gains. These results establish H-EARS' systematic patterns: strongest in energy-natural tasks, effective for stochastic off-policy methods (SAC, TD3), limited by algorithm incompatibility (DDPG high-DOF), bounded by physical characteristics (Humanoid complexity). Observed algorithm-environment interactions validate rather than contradict theoretical predictions, demonstrating principled framework behavior under varying conditions.
	
	\section{Ablation Experiments}
	\label{sec:ablation}
	
	To validate the necessity of each framework component, ablation studies are conducted in Ant-v5 (8-DoF quadruped requiring multi-joint coordination) and Hopper-v5 (3-DoF single-leg system with inherent instability). These environments expose distinct failure modes when components are absent, confirming theoretical predictions.
	
	\subsection{Ablation Configuration}
	
	Nine algorithm variants in two environments are evaluated, as detailed in Table~\ref{tab:ablation_config}; the eighth variant (Combined-Single-$\Phi$) is additionally analysed in isolation in Fig.~\ref{fig:decomp_ablation}. Each variant isolates specific framework components by selectively activating hyperparameters.
	
	\begin{table*}[htbp]
		\renewcommand{\arraystretch}{1.3}
		\captionsetup{font=footnotesize, labelfont=footnotesize}
		\caption{Hyperparameters for Ablation Variants in Ant-v5 and Hopper-v5}
		\label{tab:ablation_config}
		\centering
		\footnotesize
		\begin{tabular}{lcccccc}
			\toprule
			& \multicolumn{3}{c}{\textbf{Ant-v5}} & \multicolumn{3}{c}{\textbf{Hopper-v5}} \\
			\cmidrule(lr){2-4}\cmidrule(lr){5-7}
			\textbf{Variant} & $\alpha_{\text{task}}$ & $\alpha_{\text{energy}}$ & $\lambda$ & $\alpha_{\text{task}}$ & $\alpha_{\text{energy}}$ & $\lambda$ \\
			\midrule
			SAC - Vanilla             & 0     & 0     & 0     & 0     & 0     & 0 \\
			Energy Only             & 0     & 3e-2  & 0     & 0     & 1e-3  & 0 \\
			Task Only               & 5e-3  & 0     & 0     & 5e-1  & 0     & 0 \\
			Regularization Only     & 0     & 0     & 1e-2  & 0     & 0     & 5e-4 \\
			Without Regularization  & 5e-3  & 3e-2  & 0     & 5e-1  & 1e-3  & 0 \\
			Without Energy          & 5e-3  & 0     & 1e-2  & 5e-1  & 0     & 5e-4 \\
			Without Task            & 0     & 3e-2  & 1e-2  & 0     & 1e-3  & 5e-4 \\
			SAC - H-EARS           & 5e-3  & 3e-2  & 1e-2  & 5e-1  & 1e-3  & 5e-4 \\
			\midrule
			Combined-Single-$\Phi$  & ---   & $\alpha_{\Sigma}$=3.5e-2 & 0 & --- & $\alpha_{\Sigma}$=5.01e-1 & 0 \\
			\bottomrule
		\end{tabular}
	\end{table*}
	
	\begin{figure}[!t]
		\centering
		\subfigure[Ant-v5]{
			\includegraphics[width=0.4\textwidth]{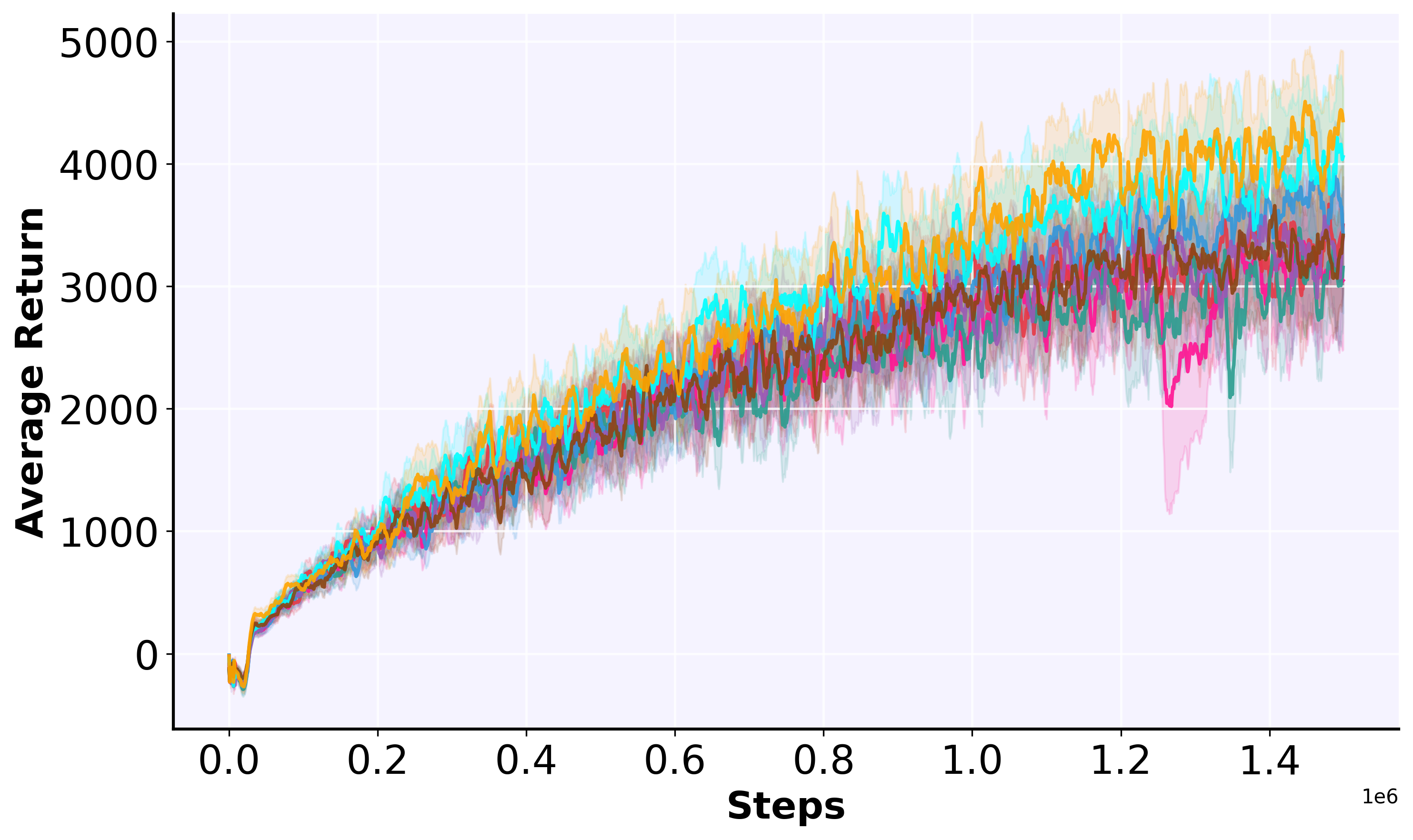}
			\label{fig:ablation_ant}
		}
		\hspace{0.7cm}
		\subfigure[Hopper-v5]{
			\includegraphics[width=0.4\textwidth]{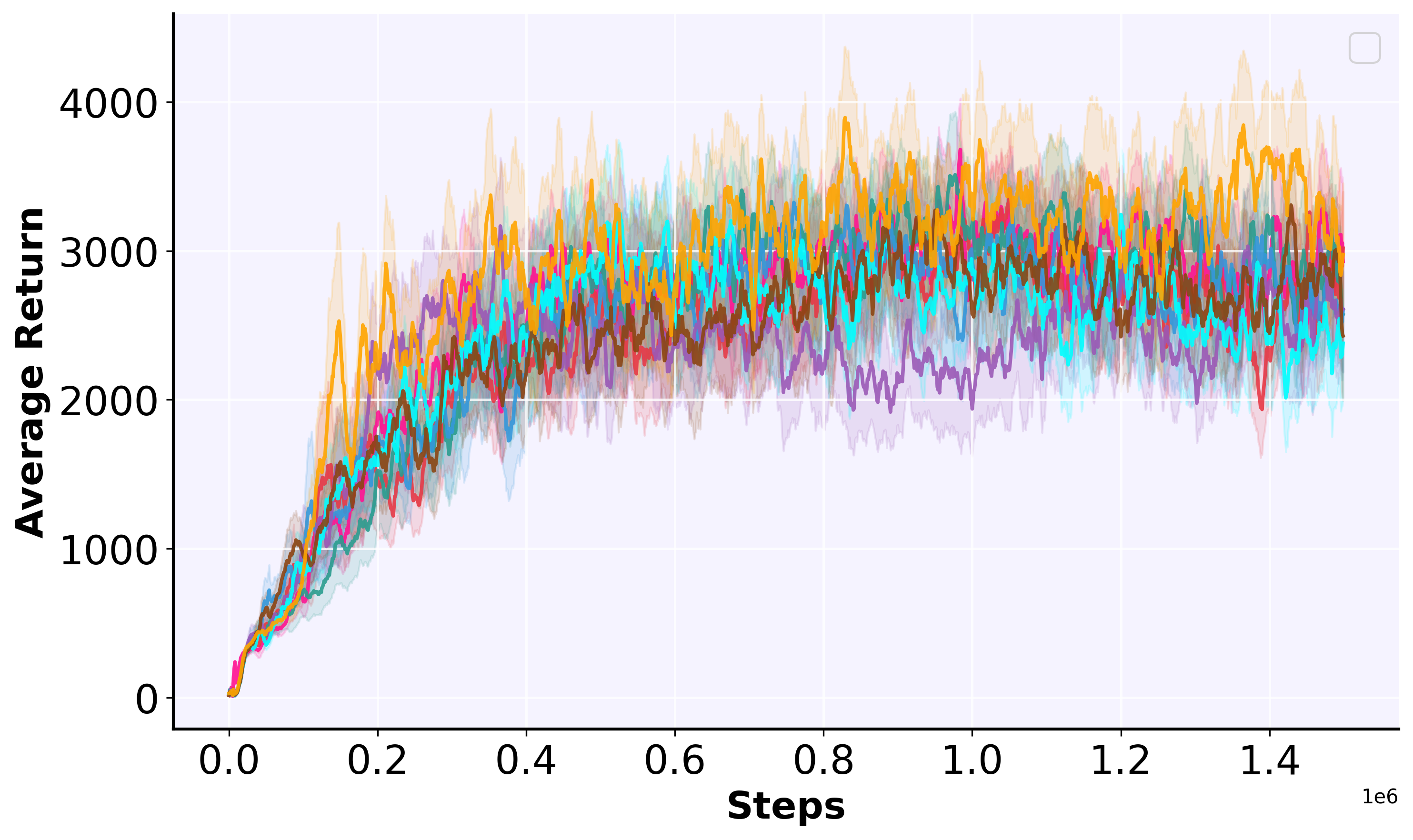}
			\label{fig:ablation_hopper}
		}
		
		\vspace{0.15cm}
		\centering
		\begin{tabular}{@{}l@{\hspace{0.9em}}l@{\hspace{0.9em}}l@{\hspace{0.9em}}l@{}}
			\colorbox{Vanilla}{\rule{0pt}{1pt}\rule{8pt}{0pt}} \raisebox{-2.0pt}{\scriptsize SAC-Vanilla} &
			\colorbox{TaskOnly}{\rule{0pt}{1pt}\rule{8pt}{0pt}} \raisebox{-2.0pt}{\scriptsize Task Only} &
			\colorbox{EnergyOnly}{\rule{0pt}{1pt}\rule{8pt}{0pt}} \raisebox{-2.0pt}{\scriptsize Energy Only} &
			\colorbox{RegOnly}{\rule{0pt}{1pt}\rule{8pt}{0pt}} \raisebox{-2.0pt}{\scriptsize Reg. Only} \\[0.2em]
			\colorbox{WithoutTask}{\rule{0pt}{1pt}\rule{8pt}{0pt}} \raisebox{-2.0pt}{\scriptsize Without Task} &
			\colorbox{WithoutEnergy}{\rule{0pt}{1pt}\rule{8pt}{0pt}} \raisebox{-2.0pt}{\scriptsize Without Energy} &
			\colorbox{WithoutReg}{\rule{0pt}{1pt}\rule{8pt}{0pt}} \raisebox{-2.0pt}{\scriptsize Without Reg.} &
			\colorbox{HEARS}{\rule{0pt}{1pt}\rule{8pt}{0pt}} \raisebox{-2.0pt}{\scriptsize H-EARS}
		\end{tabular}
		
		\caption{Ablation study in Ant-v5 and Hopper-v5 (SAC backbone). TaskOnly and EnergyOnly converge independently without gradient interference, supporting Lemma~\ref{lem:functional_independence}. The Without-Regularization gap validates Proposition~\ref{prop:regularization_necessity}.}
		\label{fig:ablation}
	\end{figure}
	
	\begin{figure}[htbp]
		\centering
		\subfigure[Ant-v5]{
			\includegraphics[width=0.4\textwidth]{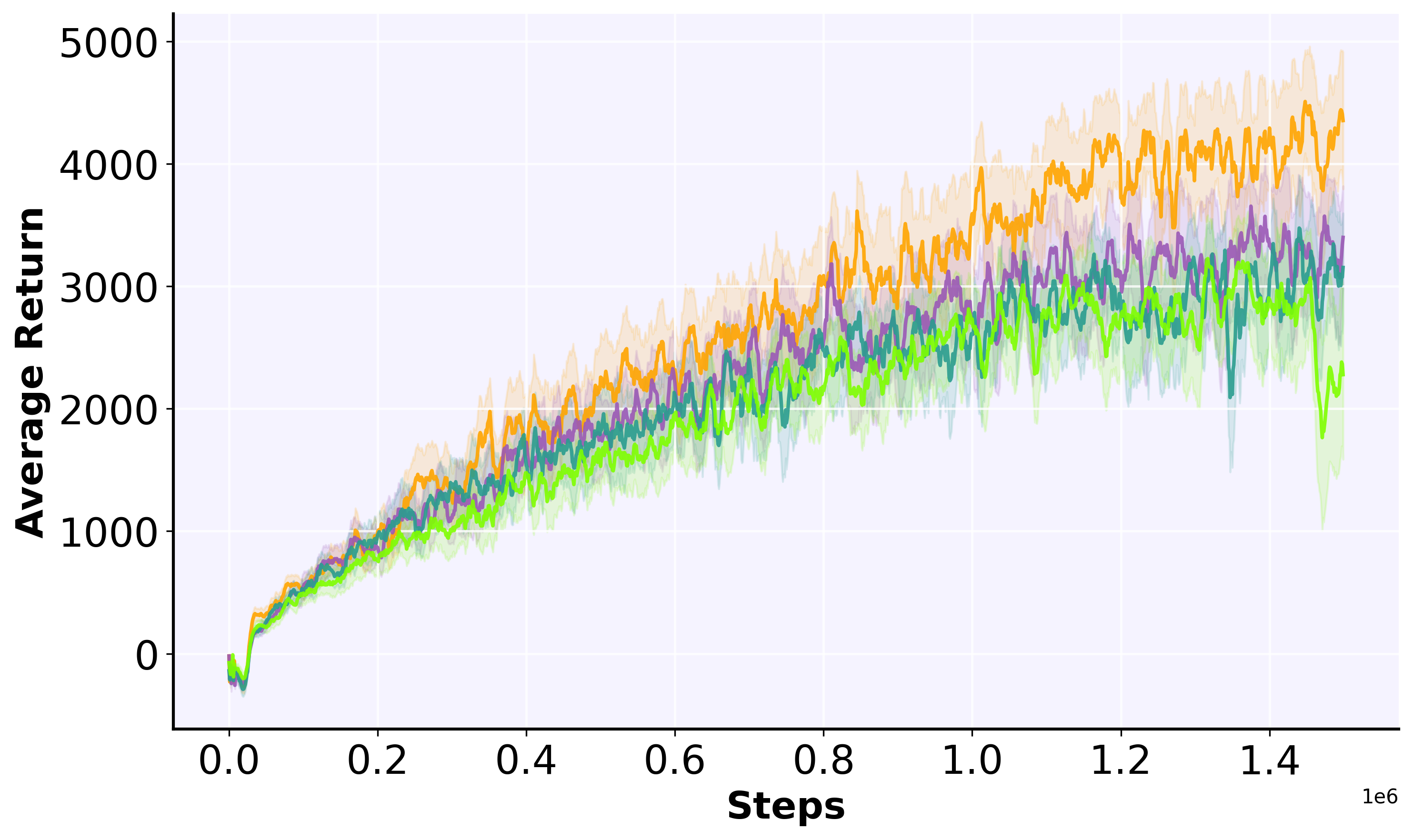}
			\label{fig:decom_ablation_ant}
		}
		\hspace{0.7cm}
		\subfigure[Hopper-v5]{
			\includegraphics[width=0.4\textwidth]{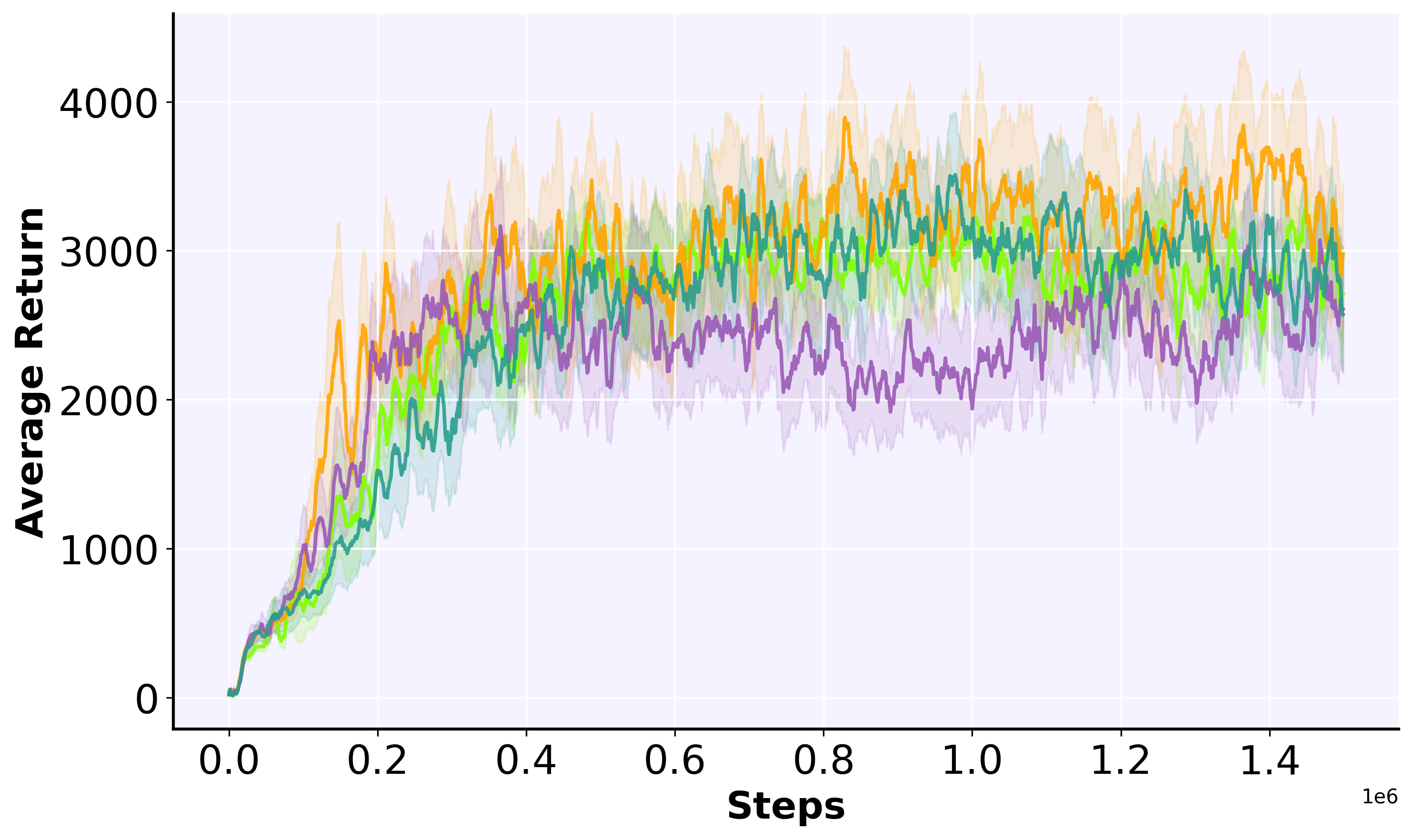}
			\label{fig:decom_ablation_hopper}
		}
		
		\vspace{0.15cm}
		\centering
		\begin{tabular}{@{}l@{\hspace{0.9em}}l@{\hspace{0.9em}}l@{\hspace{0.9em}}l@{}}
			\colorbox{TaskOnly}{\rule{0pt}{1pt}\rule{8pt}{0pt}} \raisebox{-2.0pt}{\scriptsize Task Only} &
			\colorbox{EnergyOnly}{\rule{0pt}{1pt}\rule{8pt}{0pt}} \raisebox{-2.0pt}{\scriptsize Energy Only} &
			\colorbox{Decombination}{\rule{0pt}{1pt}\rule{8pt}{0pt}} \raisebox{-2.0pt}{\scriptsize Single Potential}
			\colorbox{HEARS}{\rule{0pt}{1pt}\rule{8pt}{0pt}} \raisebox{-2.0pt}{\scriptsize H-EARS}
		\end{tabular}
		
		\caption{Dual-decomposition ablation: H-EARS vs.\ Combined-Single-$\Phi$ (same total weight $\alpha_{\text{task}}+\alpha_{\text{energy}}$, single potential), Task~$\Phi$~Only, and Energy~$\Phi$~Only. SAC backbone, 5~seeds; shading\,=\,$\pm$\,s.e. Combined-Single-$\Phi$ underperforms H-EARS by $\approx$33\% (Ant) and $\approx$21\% (Hopper) despite identical total weighting, validating Proposition~\ref{prop:dual_necessity}. In Hopper-v5 it falls below Energy~$\Phi$~Only, confirming that potential merging suppresses rather than combines individual contributions.}
		\label{fig:decomp_ablation}
	\end{figure}
	
	\subsection{Component-Specific Analysis}
	
	Figure~\ref{fig:ablation} presents ablation results. Three critical observations emerge, each validating specific theoretical claims:
	
	(1) Task Potential Necessity (Lemma~\ref{lem:functional_independence}):Comparing Without Task versus H-EARS reveals 22.2\% (Ant) and 17.2\% (Hopper) performance degradation. Task potential $\Phi_{\text{task}}$ provides gradient-based exploration guidance, manifested through rapid early convergence in Task Only variants. However, Task Only plateaus below H-EARS (8.6\% gap in Ant, 15.4\% in Hopper), confirming that task guidance alone lacks stabilizing mechanisms for asymptotic refinement. The functional independence property (Lemma~\ref{lem:functional_independence}) enables task and energy components to contribute without mutual interference, as confirmed by the ablation: TaskOnly and EnergyOnly curves converge independently without suppressing each other's training signal, providing direct empirical support for the decoupled-domain argument.
	
	(2) Energy Potential as Stability Constraint (Theorem~\ref{thm:energy_convergence}):Environment-specific energy effects validate Theorem~\ref{thm:energy_convergence}'s mechanical stability principle. In Hopper-v5 (inherently unstable single-leg system), Without Energy achieves only 92.6\% of H-EARS performance with 37.2\% higher variance, exposing critical instability under absent physical constraints. Energy Only demonstrates superior stability (CV: 7.1\%) despite 28.6\% lower returns, confirming that energy minimization provides Lyapunov-like convergence acceleration through $d^2E/dq^2 > 0$ properties. Conversely, Ant-v5's statically stable quadruped structure reduces energy dependence—Without Energy retains 90.7\% performance—indicating that energy potentials primarily benefit dynamically unstable systems requiring explicit stability enforcement.
	
	(3) Cross-environment Hessian validity and gain correlation. Theorem~\ref{thm:energy_convergence} predicts that H-EARS gains scale with the degree to which $\partial^2 E/\partial q^2 \succ 0$ holds over the task-relevant operating region $\mathcal{S}_{\text{op}}$. Table~\ref{tab:hessian_validity} verifies this prediction across all four standard environments using the gradient informativeness ratio $\rho_{\text{info}} = \|\nabla_q E\| / \|\nabla_s R\|_{\text{sparse}}$ estimated from training logs, alongside the observed average H-EARS performance gain (SAC backbone, averaged over all seeds).
	
	(4) Dual-Decomposition Necessity. To isolate the structural contribution of the dual-potential decomposition from the effect of total weighting budget, we introduce a Combined-Single-$\Phi$ variant that merges $\alpha_{\text{task}}\Phi_{\text{task}}+\alpha_{\text{energy}}\Phi_{\text{energy}}$ into a single potential weighted by $(\alpha_{\text{task}}+\alpha_{\text{energy}})$ with $\lambda$ held fixed, matching H-EARS~(Full) in total weight. Results are shown in Fig.~\ref{fig:decomp_ablation}.
	
	\begin{table*}[!t]
		\renewcommand{\arraystretch}{1.2}
		\captionsetup{font=footnotesize, labelfont=footnotesize}
		\caption{Hessian Condition Validity and Observed H-EARS Gains (SAC backbone)}
		\label{tab:hessian_validity}
		\centering
		\footnotesize
		\begin{tabular}{lccccc}
			\toprule
			\textbf{Environment} & \textbf{$\partial^2E/\partial q^2 \succ 0$} & \textbf{$\rho_{\text{info}}$ (est.)} & \textbf{Primary gain mechanism} & \textbf{Perf.\ gain (\%)} & \textbf{Theory consistent?} \\
			\midrule
			Ant-v5         & Global (rigid-body $M(q)\succ0$; 8-DOF) & $\sim10^2$--$10^3$ & Gradient enrichment             & $+32.5\%$ & \checkmark \\
			Hopper-v5      & Local near upright equilibrium           & $\sim10^1$--$10^2$ & Stability regularization        & $+33.1\%$ & \checkmark \\
			LunarLander-v3 & Partial (non-mechanical energy proxy)    & $\sim10^0$--$10^1$ & Action regularization           & $+7.8\%$  & \checkmark (partial) \\
			Humanoid-v5    & Global ($M(q)\succ0$; 21-DOF)           & $\sim10^1$--$10^2$$^\dagger$ & Stability (approx.\ gap limits perf.) & $+4.8\%$ & \checkmark \\
			\bottomrule
		\end{tabular}
		\vspace{1mm}
		
		\footnotesize{\textit{Note:} $^\dagger$Humanoid-v5's $\rho_{\text{info}}$ falls in the lower portion of the $10^1$--$10^2$ range. At 21-DOF, the O($n$) energy model captures a smaller fraction of the total energy than at 8-DOF (Ant), increasing the approximation error $\epsilon_{\text{approx}}$ and attenuating the effective gradient enrichment signal per Lemma~\ref{lem:approx_potential}. Gains therefore manifest primarily as stability improvement (CV: $10.1\%\to9.0\%$) rather than asymptotic performance increase.}
	\end{table*}
	
	In Ant-v5, H-EARS~(Full) achieves a final average return of approximately 4\,200 (consistent with the 5-seed evaluation mean of $4183\pm174$ in Table~\ref{tab:comprehensive_results}), while Combined-Single-$\Phi$ reaches only $\approx$\,2\,800---a gap of $\approx$\,33\% despite identical total weighting. In Hopper-v5, H-EARS~(Full) converges to a training-curve stable value of $\approx$\,3\,600 while Combined-Single-$\Phi$ reaches $\approx$\,2\,850, a gap of $\approx$\,21\%. The Hopper training-curve value ($\approx$\,3\,600) reflects the peak stable region of the dual-decomposition ablation figure (Fig.~\ref{fig:decomp_ablation}); the 5-seed evaluation mean reported in Table~\ref{tab:comprehensive_results} ($3354\pm354$) is lower due to cross-seed variance and end-of-training policy fluctuation rather than a discrepancy in framework performance. Notably, in Hopper-v5, Combined-Single-$\Phi$ falls below even the Energy~$\Phi$~Only variant ($\approx$\,3\,200)---a result that constitutes direct performance-level evidence of destructive gradient interference: if the two gradient directions were merely misaligned but not mutually suppressive, merging them with identical total weight should yield performance no worse than the stronger single component. That the merged variant underperforms even the weaker single component can only be explained by active cancellation of the two gradient signals, not additive combination. This behaviour arises because a single potential cannot simultaneously satisfy task directivity ($\nabla_s\Phi\propto\nabla_s R$) and energy awareness ($\Phi(s)=-E(q,\dot{q})$) when $\nabla_s\Phi_{\text{task}}\not\propto-\nabla_s E$ in task-critical regions, so the merged gradient $\nabla_s\Phi_{\text{single}}$ is deflected away from both individual objectives---precisely the mechanism characterised by Proposition~\ref{prop:dual_necessity}. The consistent performance gap in both environments confirms that the dual-decomposition structure, not merely the total weighting budget, is the source of H-EARS' advantage over single-potential baselines.
	
	(5) Action Regularization Necessity (Theorem~\ref{prop:regularization_necessity}):Without Regularization exhibits 17.2\% (Ant) and 20.7\% (Hopper) degradation with substantially increased variance. This validates Theorem~\ref{prop:regularization_necessity}'s theoretical prediction: dual potentials generate competing policy gradients ($\nabla\Phi_{\text{task}}$ encouraging rapid transitions versus $\nabla\Phi_{\text{energy}}$ favoring energy conservation), manifesting as oscillatory training dynamics. Action regularization $\lambda\|a\|^2$ mediates these conflicts by penalizing extreme control outputs that satisfy one potential while violating another. Reg. Only achieves 84.2\% (Ant) and 85.9\% (Hopper) of H-EARS performance, confirming regularization provides implicit smoothness constraints. However, convergence efficiency drops significantly—Reg. Only requires 28.4\% (Ant) and 32.4\% (Hopper) more steps to reach 90\% of final performance compared to H-EARS, demonstrating that explicit physics priors (potentials) complement but cannot be replaced by implicit constraints (regularization).
	
	The monotone relationship between Hessian condition strength and observed gain---global validity yielding the largest improvements, partial validity yielding more modest but still consistent gains---provides empirical support for Theorem~\ref{thm:energy_convergence}. For LunarLander-v3, where the energy proxy does not correspond to a standard mechanical Hessian, gains are smallest and derive primarily from the action regularization component rather than from gradient enrichment, consistent with the analytical prediction that $\rho_{\text{info}} \approx 1$ in this regime.
	
	\subsection{Statistical Significance}
	
	\begin{table}[htbp]
		\renewcommand{\arraystretch}{1.2}
		\captionsetup{font=footnotesize, labelfont=footnotesize}
		\caption{Statistical Significance (Welch's $t$-test)}
		\label{tab:statistical_significance}
		\centering
		\footnotesize
		\begin{tabular}{lcccc}
			\toprule
			\textbf{Environment} & $\Delta\mu$ & $95\%$ CI & $p$-value & Sig. \\
			\midrule
			Ant-v5    & $+1026$ & $[766,\;1286]$ & $<0.0001$ & *** \\
			Hopper-v5 & $+834$  & $[278,\;1390]$ & $0.0087$  & **  \\
			\bottomrule
			\multicolumn{5}{l}{\scriptsize ***$p<0.001$; **$p<0.01$ (two-tailed Welch's $t$-test, $\alpha=0.01$).}
		\end{tabular}
	\end{table}
	
	To verify robustness of the primary performance gains to random seed variation, we apply Welch's two-sample $t$-test (unequal variance, two-tailed) comparing SAC+H-EARS versus SAC-Vanilla on the two ablation environments, using the statistics from Table~\ref{tab:comprehensive_results}(a). The test statistic and Welch--Satterthwaite effective degrees of freedom are:
	\begin{equation}
		t = \frac{\bar{x}_{\text{H-EARS}} - \bar{x}_{\text{Vanilla}}}{\sqrt{\sigma_{\text{H-EARS}}^2/5\;+\;\sigma_{\text{Vanilla}}^2/5}}, \qquad
		\nu = \frac{\bigl(\sigma_1^2/5+\sigma_2^2/5\bigr)^2}{\frac{(\sigma_1^2/5)^2}{4}+\frac{(\sigma_2^2/5)^2}{4}}
	\end{equation}
	Table~\ref{tab:statistical_significance} reports the results for the SAC algorithm across two ablation environments:
	
	Both confidence intervals lie entirely in the positive region, confirming that the SAC+H-EARS improvement over SAC-Vanilla is statistically significant at the $1\%$ level in both ablation environments. Statistical testing is centered on these two environments, which provide the controlled experimental conditions (full 8-variant ablation design, both algorithms fully trained to convergence)~\cite{henderson2018deep}.
	
	\subsection{Hyperparameter Sensitivity Analysis}
	\label{sec:sensitivity}
	
	\begin{figure*}[htbp]
		\centering
		\captionsetup{font=footnotesize, labelfont=footnotesize}
		\subfigure[Task in Ant-v5]{
			\includegraphics[width=0.23\textwidth]{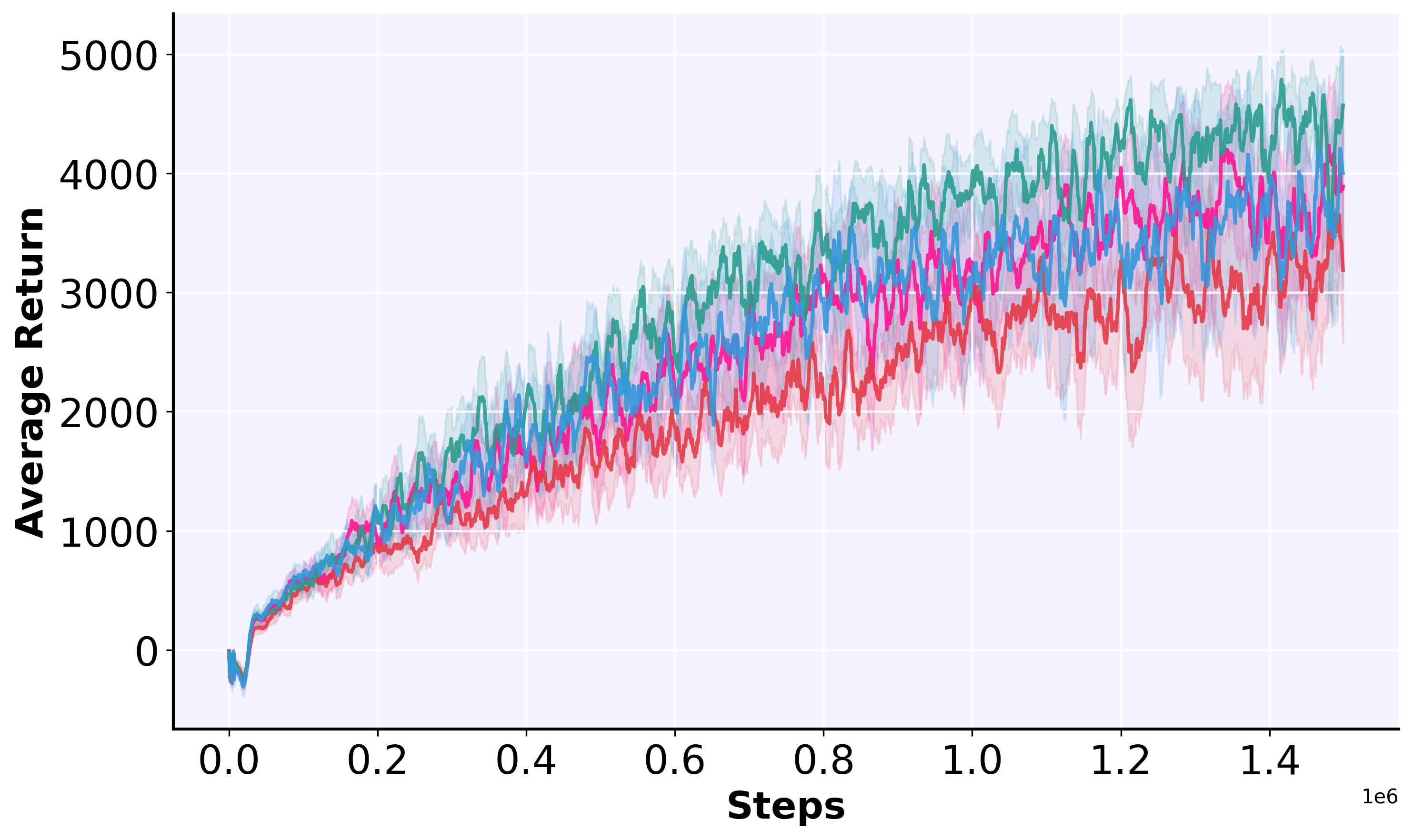}
			\label{fig:task_ant}
		}
		\hspace{-0.2cm}
		\subfigure[Task in Humanoid-v5]{
			\includegraphics[width=0.23\textwidth]{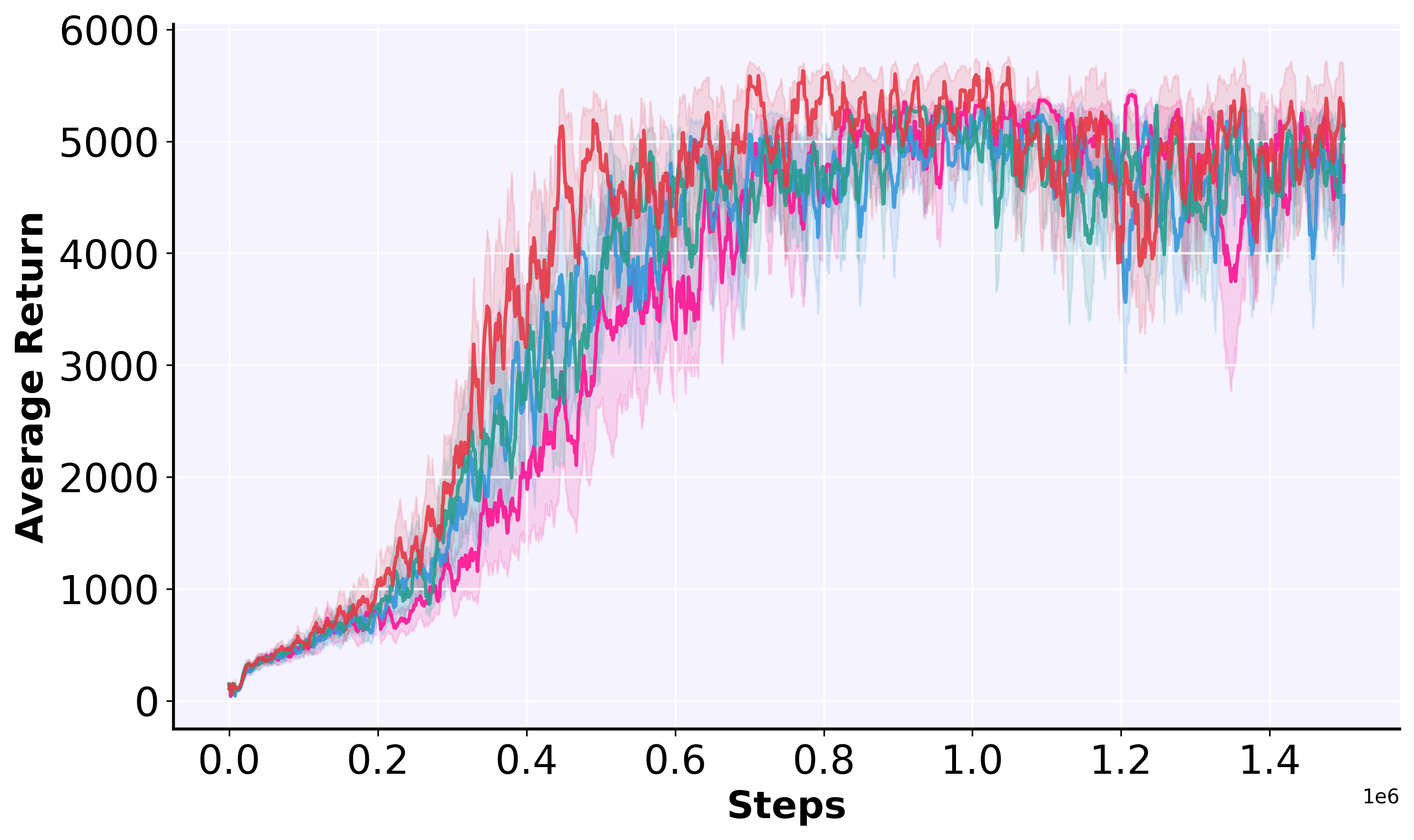}
			\label{fig:task_humanoid}
		}
		\hspace{-0.2cm}
		\subfigure[Task in Hopper-v5]{
			\includegraphics[width=0.23\textwidth]{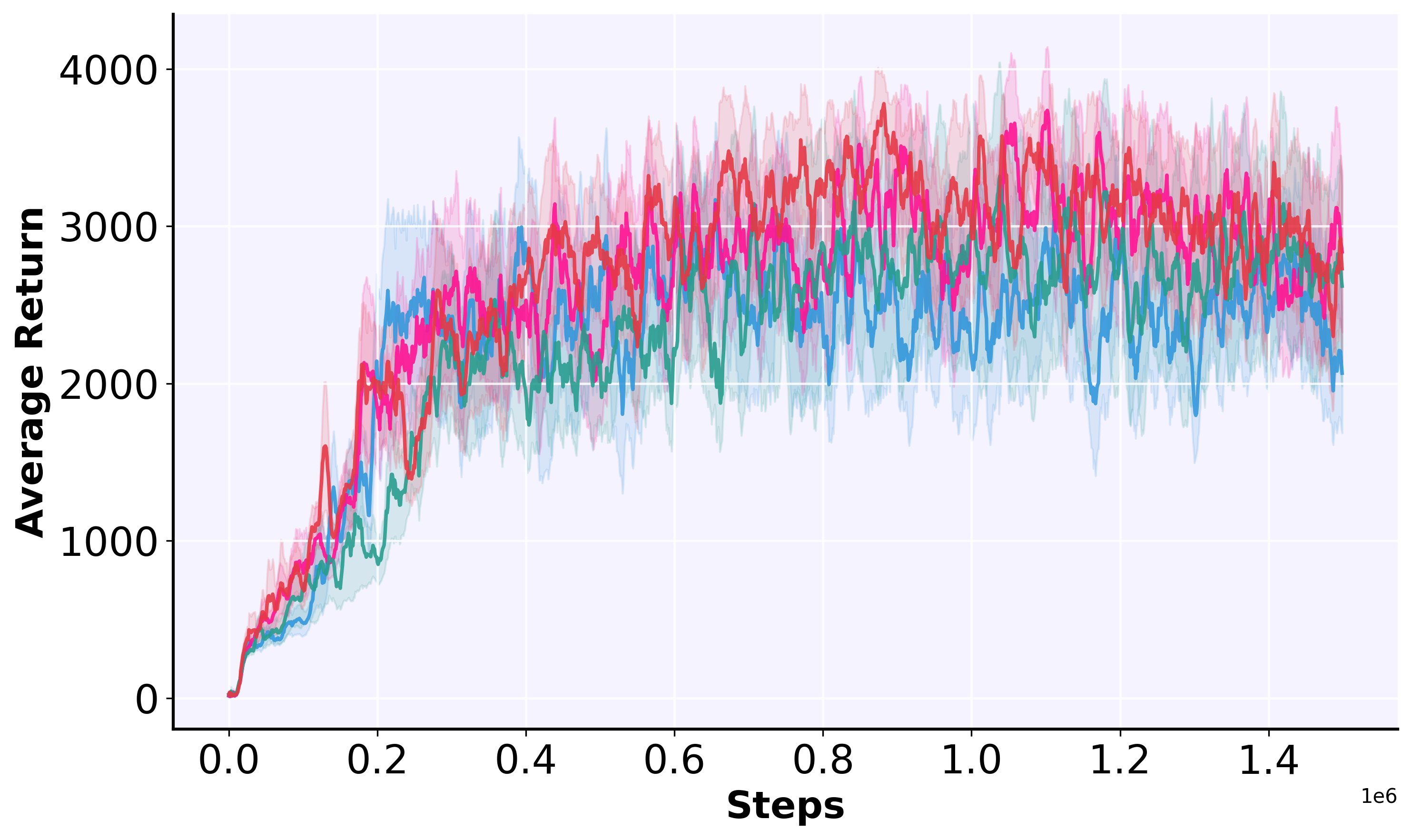}
			\label{fig:task_hopper}
		}
		\hspace{-0.2cm}
		\subfigure[Task in Lunarlander-v3]{
			\includegraphics[width=0.23\textwidth]{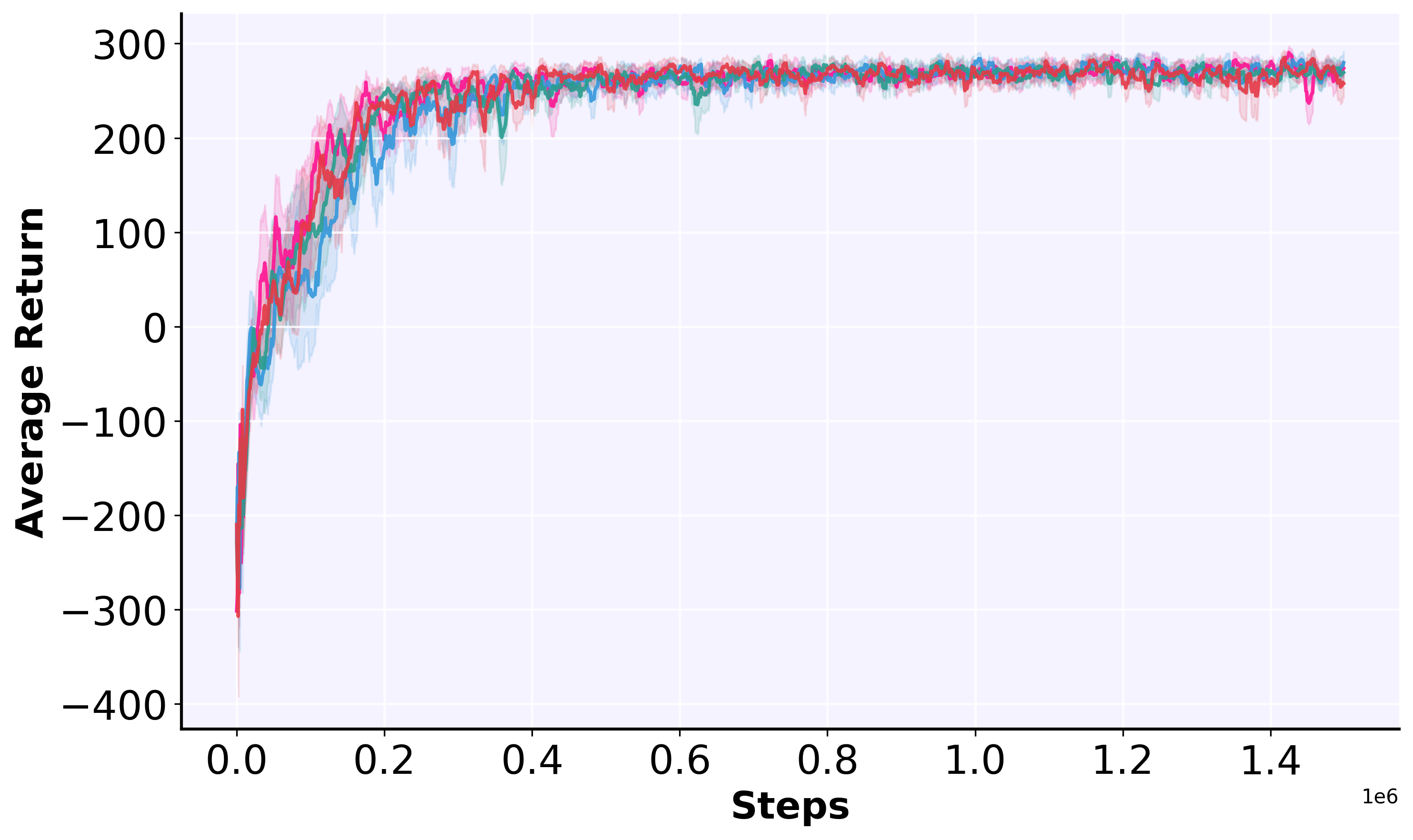}
			\label{fig:task_lunarlander}
		}
		
		\subfigure[Energy in Ant-v5]{
			\includegraphics[width=0.23\textwidth]{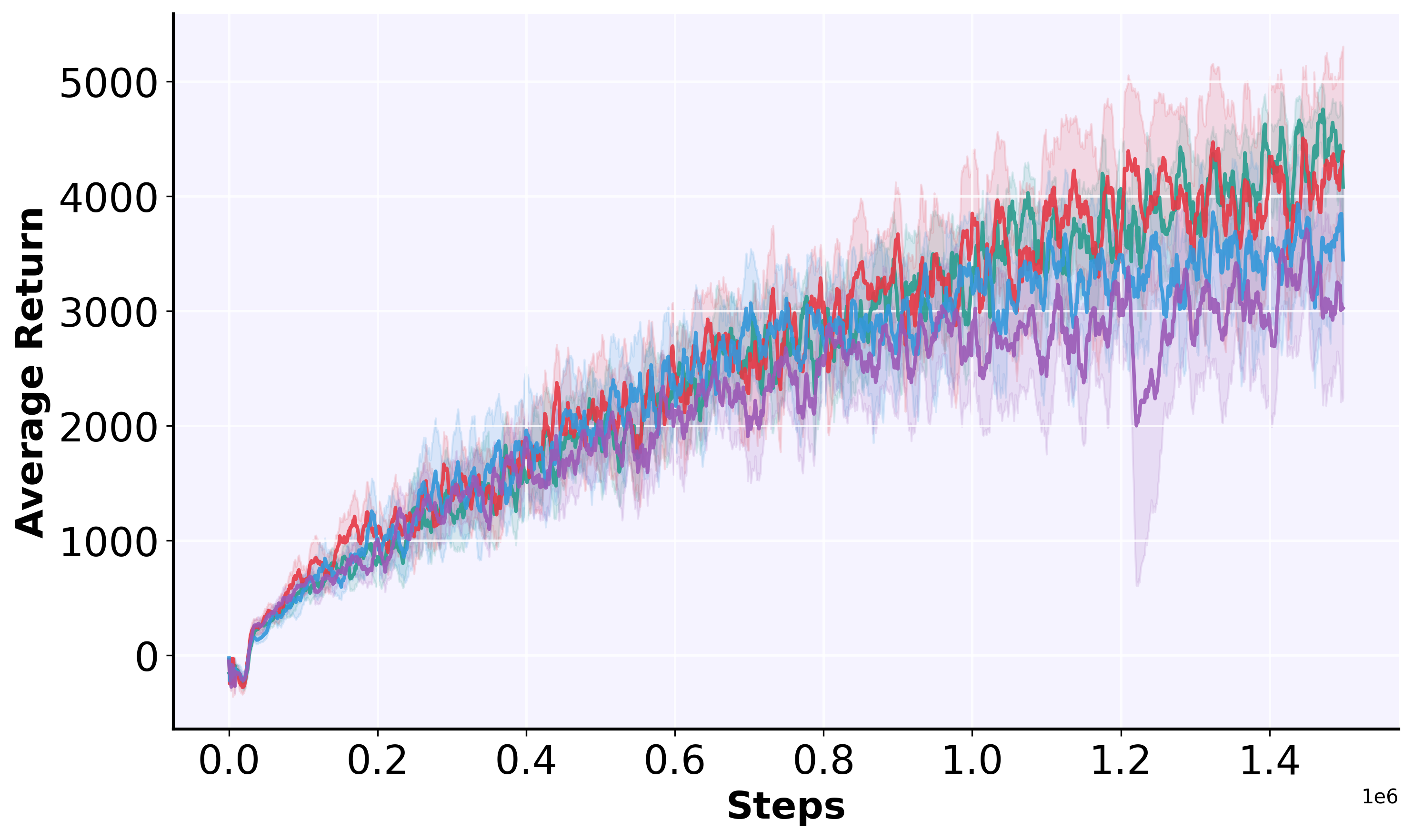}
			\label{fig:energy_ant}
		}
		\hspace{-0.2cm}
		\subfigure[Energy in Humanoid-v5]{
			\includegraphics[width=0.23\textwidth]{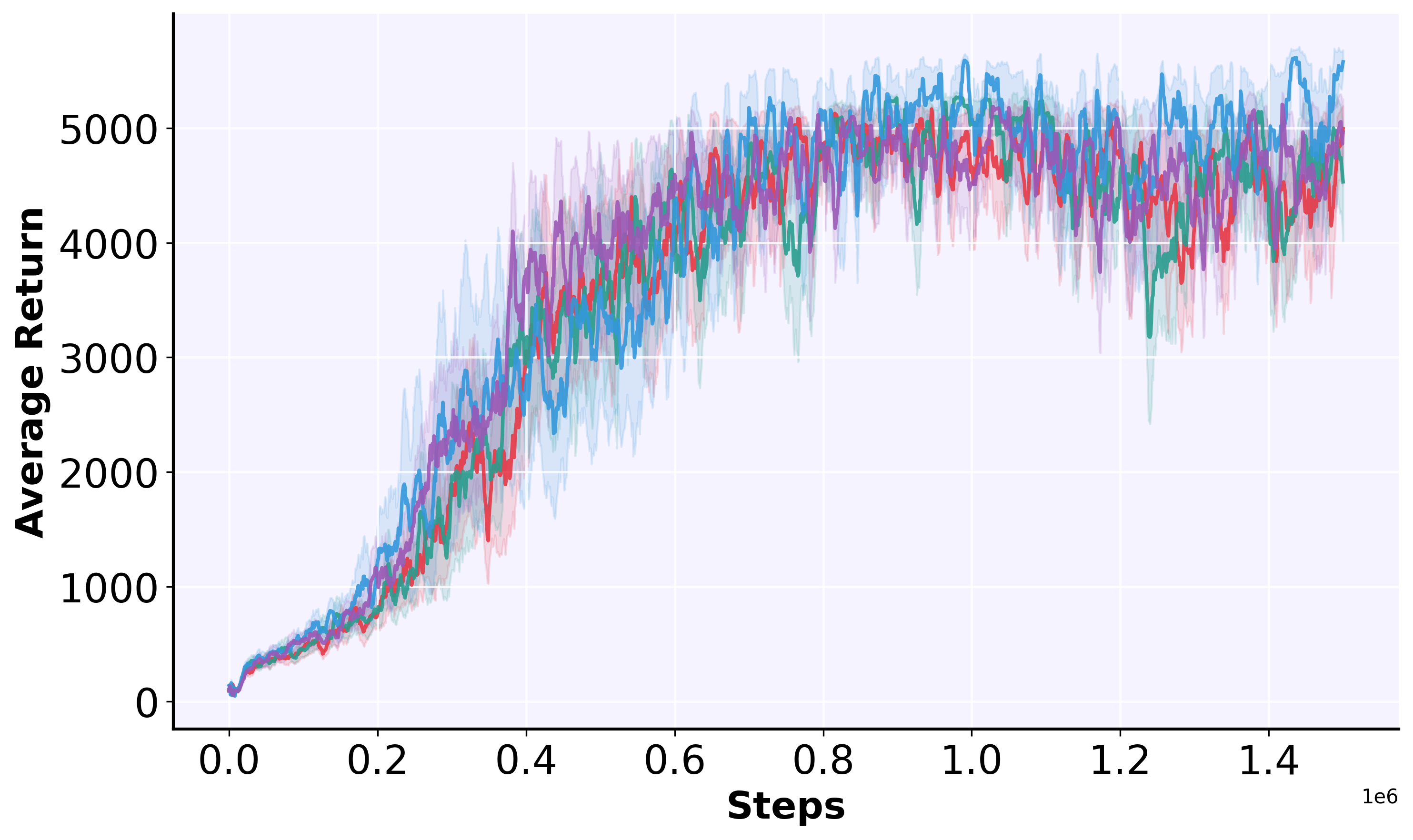}
			\label{fig:energy_humanoid}
		}
		\hspace{-0.2cm}
		\subfigure[Energy in Hopper-v5]{
			\includegraphics[width=0.23\textwidth]{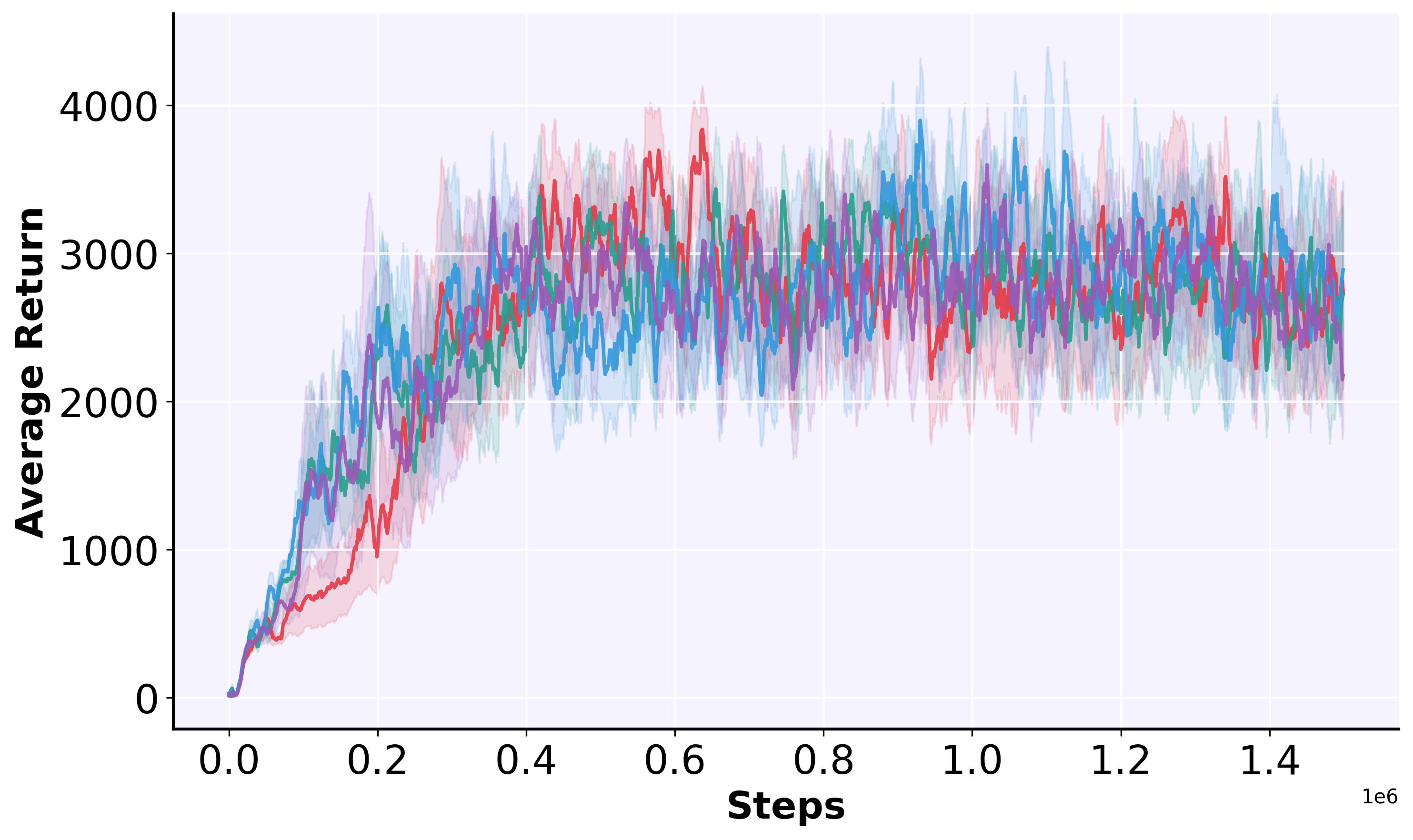}
			\label{fig:energy_hopper}
		}
		\hspace{-0.2cm}
		\subfigure[Energy in Lunarlander-v3]{
			\includegraphics[width=0.23\textwidth]{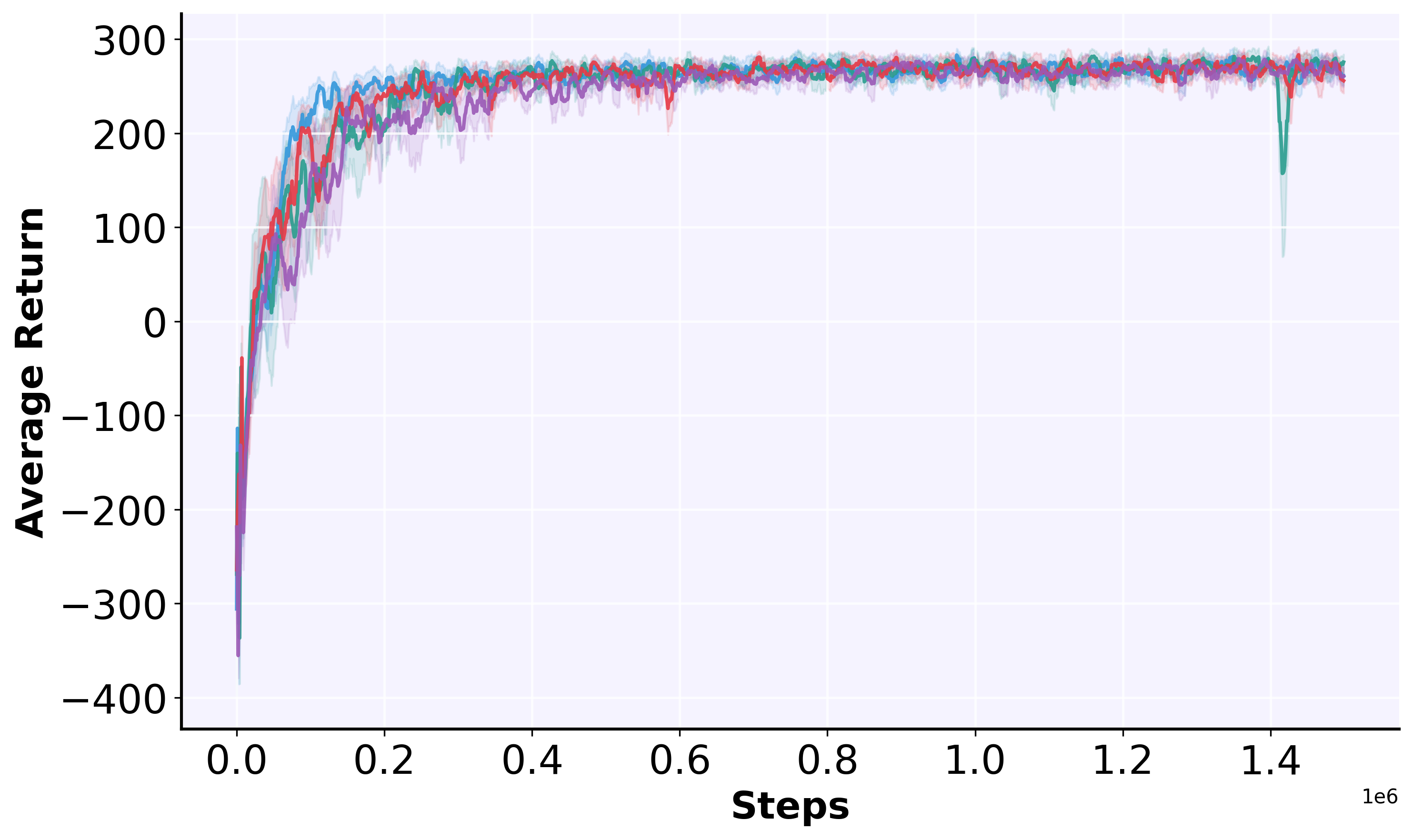}
			\label{fig:energy_lunarlander}
		}
		
		\subfigure[Regularization in Ant-v5]{
			\includegraphics[width=0.23\textwidth]{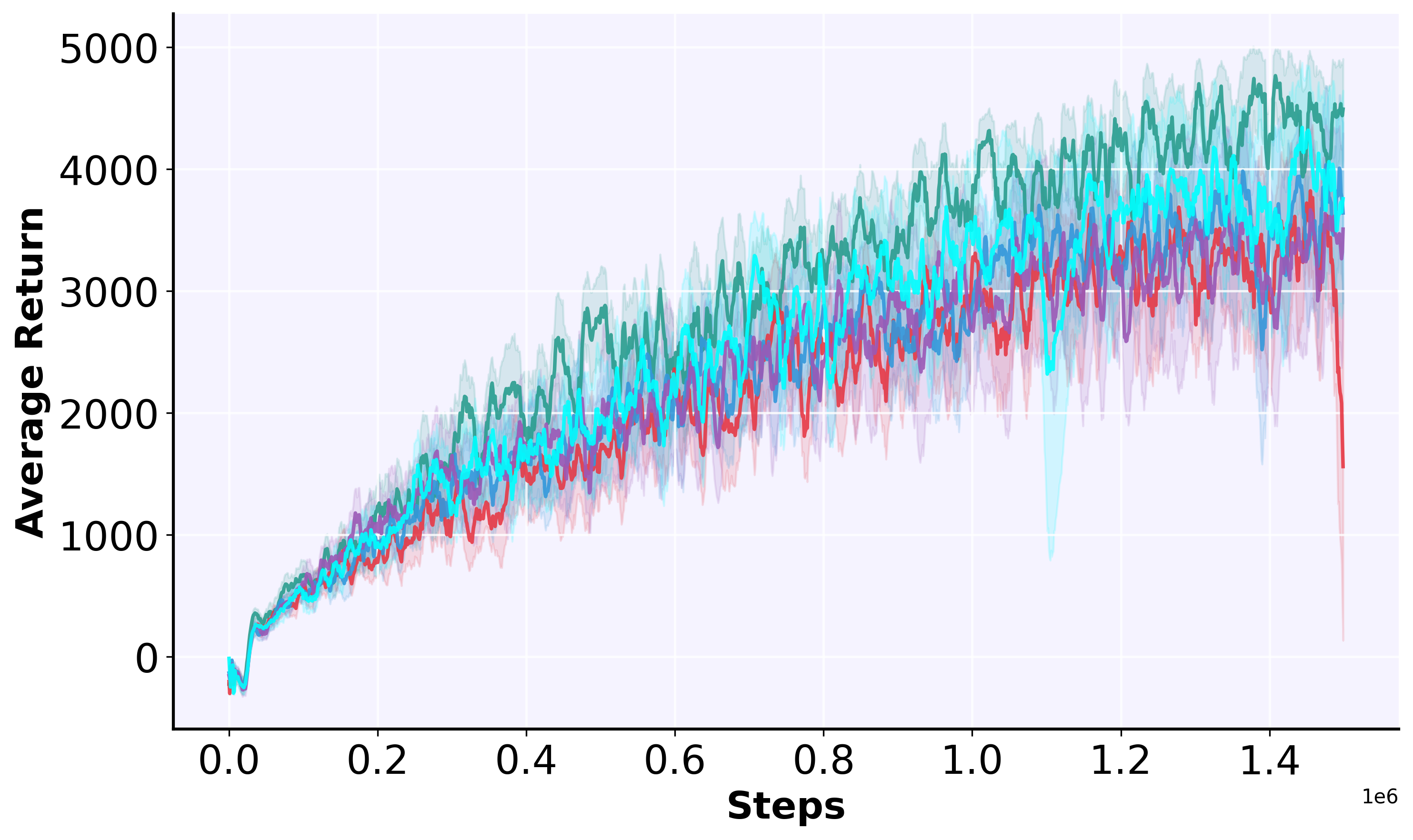}
			\label{fig:regularization_ant}
		}
		\hspace{-0.2cm}
		\subfigure[Regularization in Humanoid-v5]{
			\includegraphics[width=0.23\textwidth]{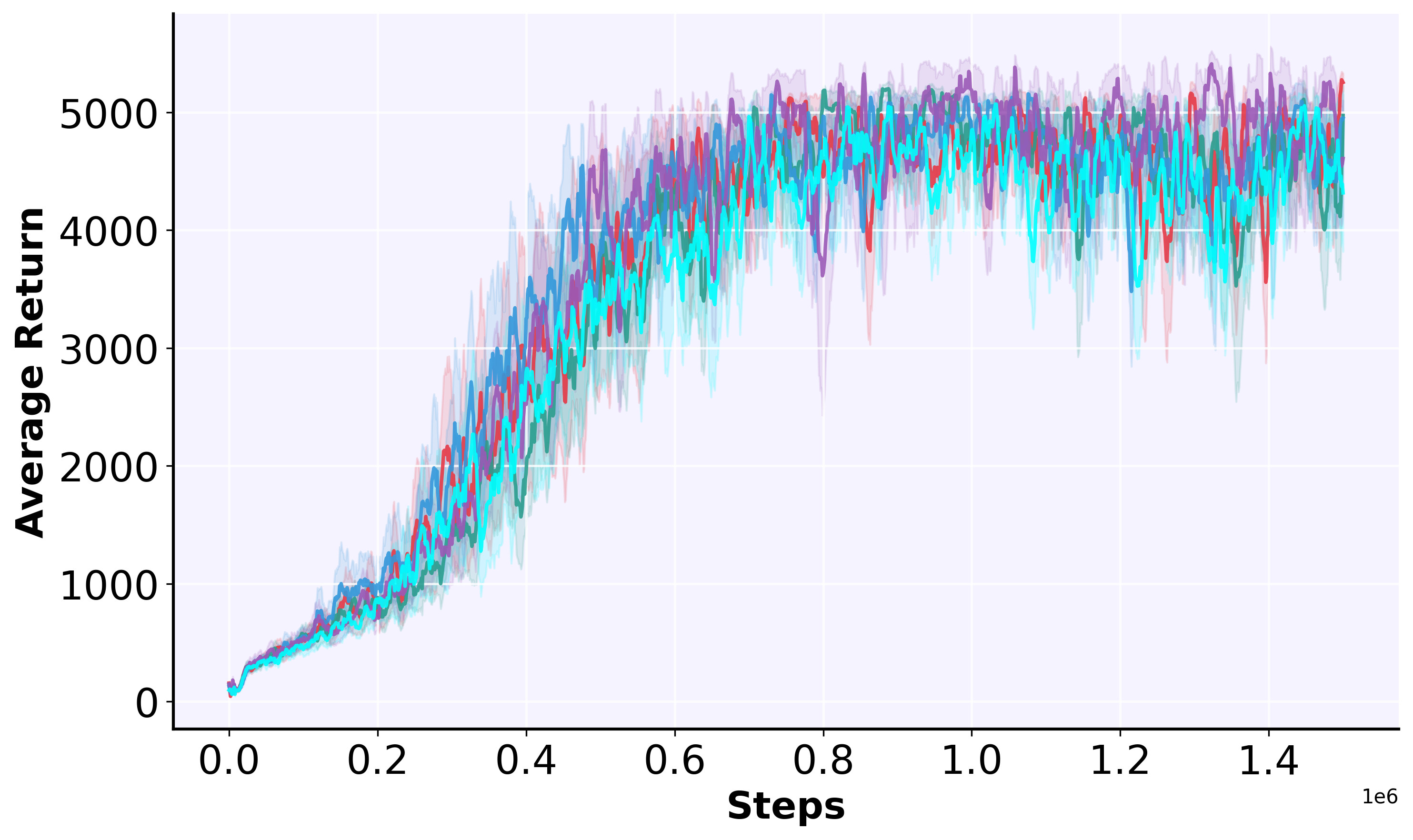}
			\label{fig:regularization_humanoid}
		}
		\hspace{-0.2cm}
		\subfigure[Regularization in Hopper-v5]{
			\includegraphics[width=0.23\textwidth]{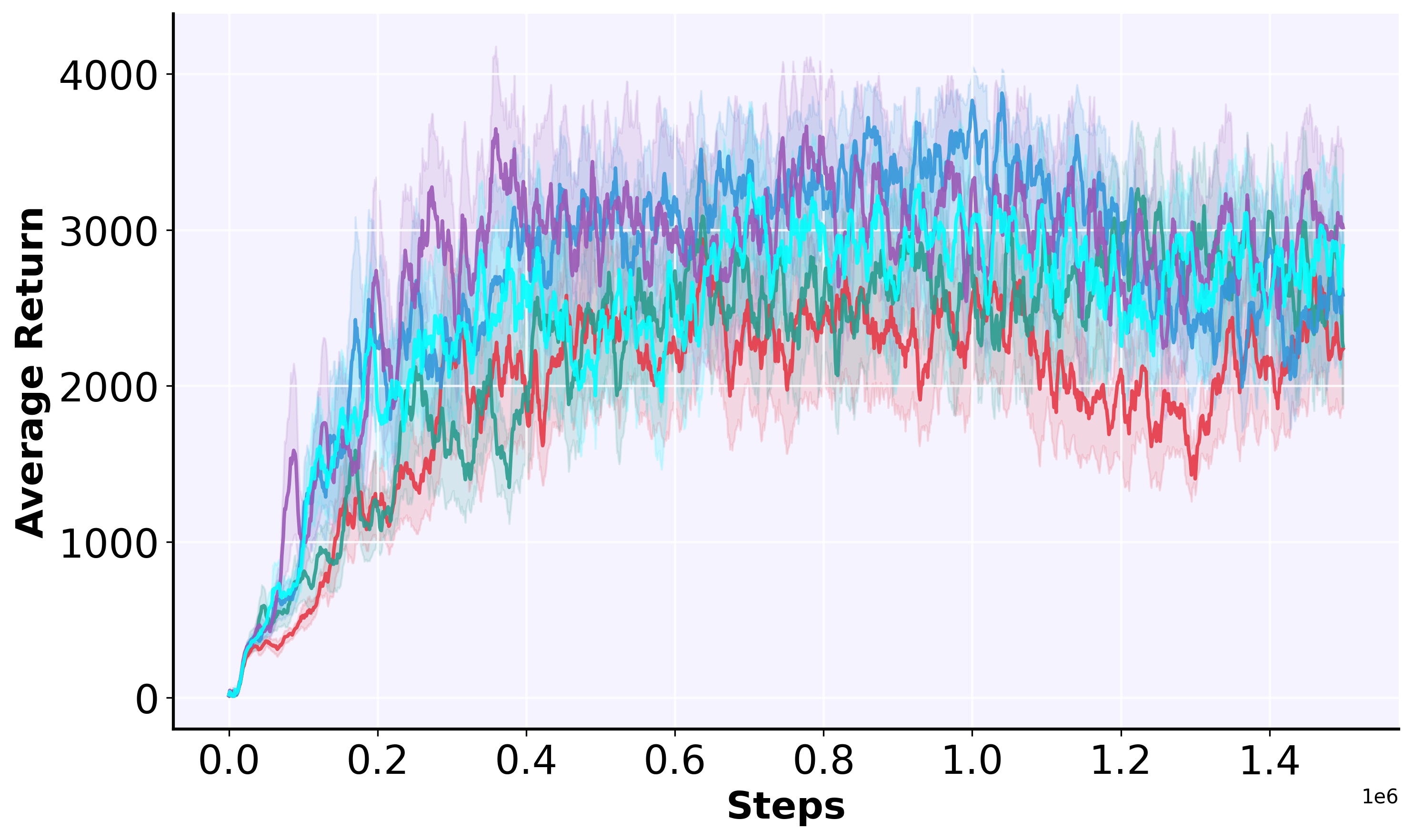}
			\label{fig:regularization_hopper}
		}
		\hspace{-0.2cm}
		\subfigure[Regularization in Lunarlander-v3]{
			\includegraphics[width=0.23\textwidth]{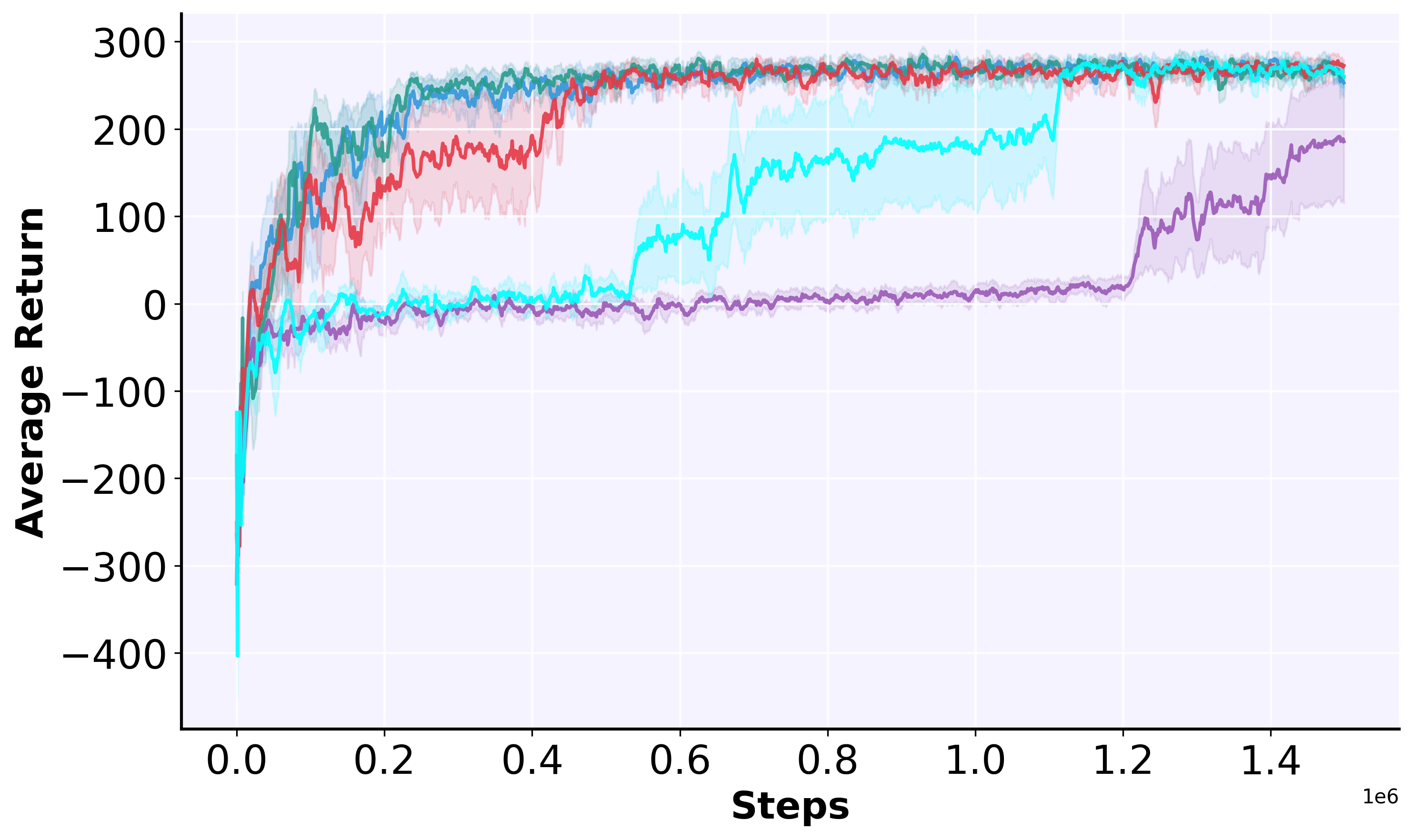}
			\label{fig:regularization_lunarlander}
		}
		
		\vspace{0.15cm}
		\centering
		\begin{tabular}{@{}l@{\hspace{1.5em}}l@{\hspace{1.5em}}l@{\hspace{1.5em}}l@{\hspace{1.5em}}l@{\hspace{1.5em}}l@{}}
			\colorbox{1e0}{\rule{0pt}{1pt}\rule{8pt}{0pt}} \raisebox{-2.0pt}{\scriptsize 1e0} &
			\colorbox{1e-1}{\rule{0pt}{1pt}\rule{8pt}{0pt}} \raisebox{-2.0pt}{\scriptsize 1e-1} &
			\colorbox{1e-2}{\rule{0pt}{1pt}\rule{8pt}{0pt}} \raisebox{-2.0pt}{\scriptsize 1e-2} &
			\colorbox{1e-3}{\rule{0pt}{1pt}\rule{8pt}{0pt}} \raisebox{-2.0pt}{\scriptsize 1e-3} &
			\colorbox{1e-4}{\rule{0pt}{1pt}\rule{8pt}{0pt}} \raisebox{-2.0pt}{\scriptsize 1e-4} &
			\colorbox{1e-5}{\rule{0pt}{1pt}\rule{8pt}{0pt}} \raisebox{-2.0pt}{\scriptsize 1e-5}
		\end{tabular}
		
		\caption{Hyperparameter sensitivity analysis (SAC backbone). Each row tests one parameter ($\alpha_{\text{task}}$, $\alpha_{\text{energy}}$, $\lambda$ from top to bottom); each column corresponds to one environment; each colored line corresponds to one tested value (1e0--1e-5 as shown in the legend). Shaded bands show $\pm$1\,s.d. over 5 seeds. Optimal values for each environment are listed in Table~\ref{tab:hyperparams_standard}. The smooth, bounded response in panels~a--k and the complete failure of $\lambda{=}$1e-5 in panel~l are both consistent with Theorem~\ref{thm:parameter_continuity} and Proposition~\ref{prop:regularization_necessity} respectively.}
		\label{fig:sensitivity}
	\end{figure*}
	
	Theorem~\ref{thm:parameter_continuity} establishes that $\|J(\pi^*_{\lambda+\delta\lambda}) - J(\pi^*_\lambda)\| \leq C|\delta\lambda|\,\mathbb{E}[\mathcal{E}(a)]$, bounding sensitivity to $\lambda$ perturbations. The following experiments extend this to all three H-EARS hyperparameters ($\alpha_{\text{task}}$, $\alpha_{\text{energy}}$, $\lambda$) empirically via a one-factor-at-a-time (OFAT) design: each parameter is swept over six values (1e0--1e-5) while the remaining two are held at their optimal values from Table~\ref{tab:hyperparams_standard}. All experiments use the SAC backbone across all four standard environments (5 seeds each). Figure~\ref{fig:sensitivity} presents the results; three observations emerge.
	
	(1) $\alpha_{\text{task}}$ sensitivity (Fig.~\ref{fig:sensitivity}, panels a--d). All four environments exhibit robust response across the full tested range: curves cluster tightly in Ant-v5, Humanoid-v5, and LunarLander-v3, with Hopper-v5 showing the widest spread yet no catastrophic failure at any tested value. This confirms the initialization guideline: $\alpha_{\text{task}}$ can be set to match the task reward scale without fine-grained tuning, as the energy potential provides a complementary gradient signal that buffers the effect of mild $\alpha_{\text{task}}$ miscalibration.
	
	(2) $\alpha_{\text{energy}}$ sensitivity (Fig.~\ref{fig:sensitivity}, panels e--h). A unimodal response is confirmed across all environments. Hopper-v5 (panel~g) shows the sharpest peak: values above 1e-2 produce measurable degradation consistent with Proposition~\ref{prop:regularization_necessity}'s Case~2, where excessive energy weighting suppresses task-directed gradient signal. LunarLander-v3 (panel~h) shows that large $\alpha_{\text{energy}}$ (1e0) induces early training instability before eventual partial convergence, reflecting the non-mechanical nature of its energy proxy ($\rho_{\text{info}}\approx 1$, Table~\ref{tab:hessian_validity}). Ant-v5 and Humanoid-v5 display a wider plateau, consistent with their globally valid Hessian conditions permitting stronger energy guidance without gradient competition.
	
	(3) $\lambda$ sensitivity (Fig.~\ref{fig:sensitivity}, panels i--l). This is the most environment-dependent parameter. Ant-v5 and Humanoid-v5 (panels~i,~j) are robust across three decades; only $\lambda=\text{1e0}$ produces slight degradation in Ant-v5, where large regularization begins to compete with high-dimensional task gradients. Hopper-v5 (panel~k) shows a well-defined optimal region with degradation at both extremes, consistent with its mixed-alignment character. The most decisive result is LunarLander-v3 (panel~l): $\lambda=\text{1e-5}$ fails completely---training curves remain near $-400$ throughout the run without any meaningful convergence---while all larger values ($\lambda\geq\text{1e-4}$) converge normally to $\approx250$. This directly instantiates Proposition~\ref{prop:regularization_necessity}'s Case~2: in a $\kappa(s)>0$ environment, insufficient regularization leaves oscillatory action sequences unpenalized, preventing policy improvement entirely.
	
	Across all three parameters, the training curves exhibit the bounded, smooth behavior predicted by Theorem~\ref{thm:parameter_continuity}. The sensitivity profiles confirm the scheduling heuristics of Section~\ref{sec:theoretical_framework}: $\alpha_{\text{task}}$ is set first from reward scale, $\alpha_{\text{energy}}$ tuned within the unimodal plateau, and $\lambda$ selected per the $\kappa(s)$ criterion. Since the three parameters act on structurally disjoint channels of the shaped reward (Lemma~\ref{lem:functional_independence}), this sequential tuning procedure does not invalidate earlier choices.
	
	\section{Simulation Validation}
	\label{sec:vehicle}
	
	\subsection{Problem Background}
	
	This section focuses on a four-wheel distributed electric motor independent drive, front-axle steering compact multipurpose vehicle (MPV) as shown in Figure~\ref{fig:Truck}, validating two core theoretical contributions through high-fidelity simulation:
	
	\begin{figure}[htbp]
		\centering
		\includegraphics[width=0.35\textwidth]{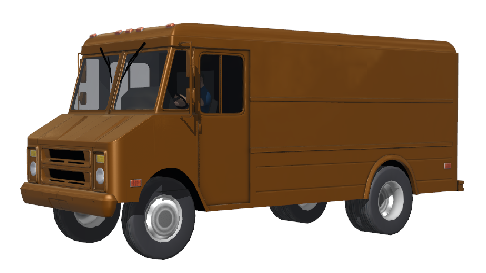}
		\caption{Vehicle simulation model in Trucksim}
		\label{fig:Truck}
	\end{figure}
	
	(i) \textbf{Lyapunov Stability Empiricism (Theorem~\ref{prop:lyapunov_heuristic})}: Validate whether energy potential $\Phi_{\text{energy}}$ achieves implicit Lyapunov stability constraints under extreme conditions (low adhesion + compound slopes), manifested through sideslip angle $\beta$ and yaw rate $r$ convergence characteristics.
	
	(ii) \textbf{Approximate Potential Boundary Test (Lemma~\ref{lem:approx_potential})}: Assess performance retention of simplified energy models (modeling only dominant terms) under out-of-distribution perturbations, validating practical applicability of approximate potential error bound $\delta$.
	
	Vehicle systems exhibit strong nonlinear tire dynamics, multi-actuator coupling, and constraint-intensive characteristics that fully expose theoretical framework adaptation boundaries under complex conditions, which single degree-of-freedom benchmark tasks cannot provide. Additionally, vehicle control domain possesses deep accumulation in physics modeling—two-degree-of-freedom single-track models and Pacejka magic formula tire models enable model-based controllers like MPC to achieve satisfactory performance under standard conditions. However, scenarios of interest present three challenges to traditional simplified models:
	
	(i) Model precision degrades significantly under extreme conditions—when road adhesion coefficient drops below 0.3 or compound slopes exist, linearization assumptions fail and tire cornering stiffness varies dramatically.
	
	(ii) High degree-of-freedom distributed drive control—multi-wheel independent torque allocation introduces multiple control degrees of freedom, with traditional simplified models struggling to capture inter-wheel coupling dynamics.
	
	(iii) Data-driven RL adaptive capability may exploit high-order nonlinear characteristics that explicit models cannot capture.
	
	\subsection{Simulation Configuration}
	
	The study employs a four-wheel distributed-drive electric MPV with parameters detailed in Table~\ref{tab:vehicle_params}. Systematic comparison experiments are conducted through TruckSim. Controller training simulations use 1000m straight road segments containing varying longitudinal/lateral slope combinations and randomly distributed road adhesion coefficients. Test road segment spans 300m straight with randomly distributed low-adhesion areas ($\mu \in [0.1, 1.0]$) and compound slopes (lateral slope $\leq$ 15°, longitudinal slope $\leq$ 20°). Vehicle initial speed is 0 m/s with target speed set at 15 m/s. Training and test road parameter settings are shown in Figures~\ref{fig:training road} and \ref{fig:test road}.
	
	\begin{table}[!t]
		\renewcommand{\arraystretch}{1.3}
		\captionsetup{font=footnotesize, labelfont=footnotesize}
		\caption{Vehicle Body Parameter Settings}
		\label{tab:vehicle_params}
		\centering
		\footnotesize
		\begin{tabular}{ccc}
			\toprule
			\textbf{Parameter} & \textbf{Symbol/Unit} & \textbf{Value} \\
			\midrule
			Total Vehicle Mass & $m$/kg & 2100 \\
			Yaw Moment of Inertia & $I_z$/kg·m$^2$ & 4116 \\
			Center of Mass Height & $h$/mm & 710 \\
			Distance from CoG to Front Axle & $a$/mm & 1350 \\
			Distance from CoG to Rear Axle & $b$/mm & 1450 \\
			Front Track Width & $d_f$/mm & 1800 \\
			Rear Track Width & $d_r$/mm & 1800 \\
			\bottomrule
		\end{tabular}
	\end{table}
	
	\begin{figure}[!t]
		\centering
		\subfigure[Training road height variation with position]{
			\includegraphics[width=0.22\textwidth]{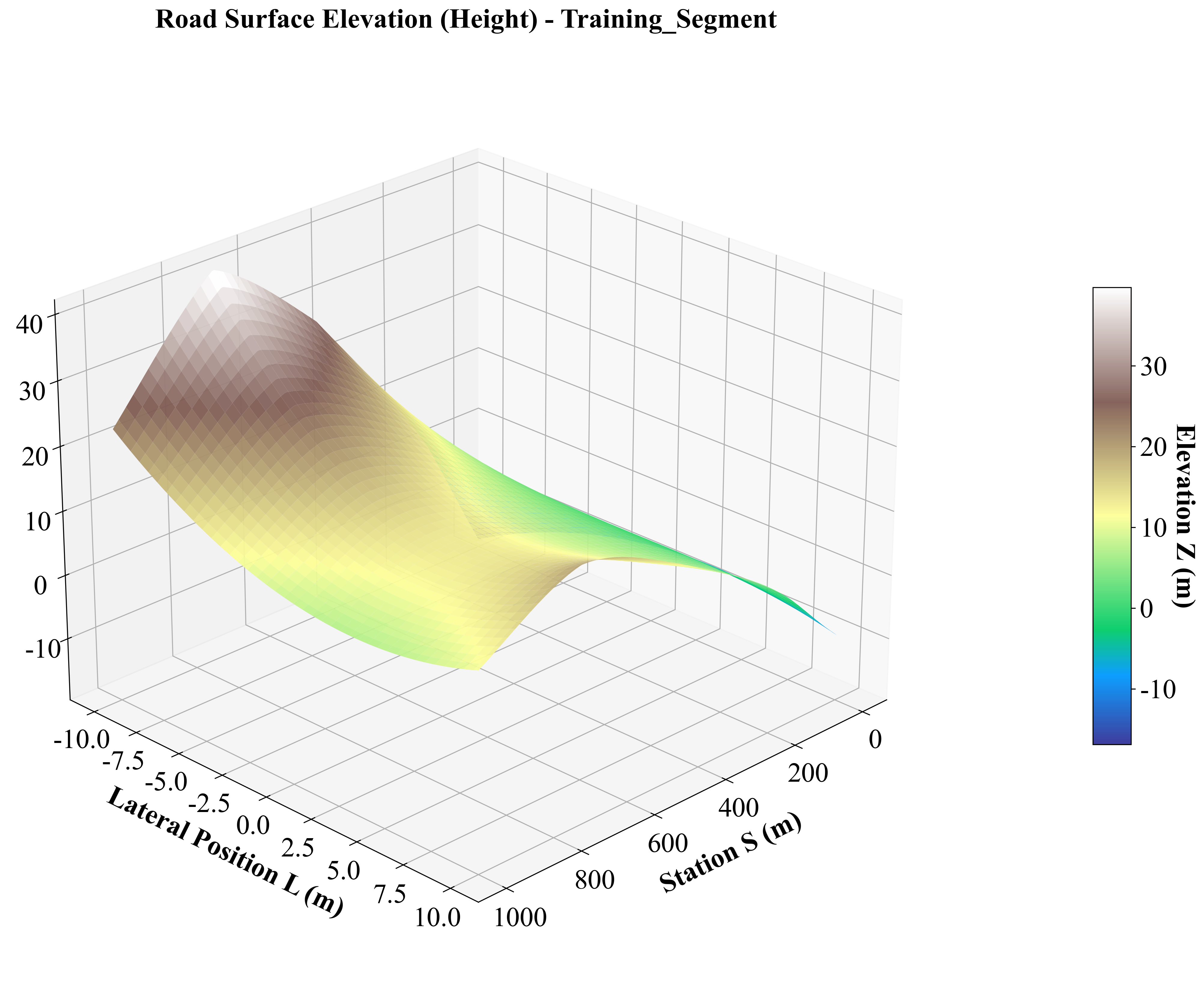}
			\label{fig:train_height}
		}
		\hfill
		\subfigure[Training road adhesion coefficient variation with position]{
			\includegraphics[width=0.22\textwidth]{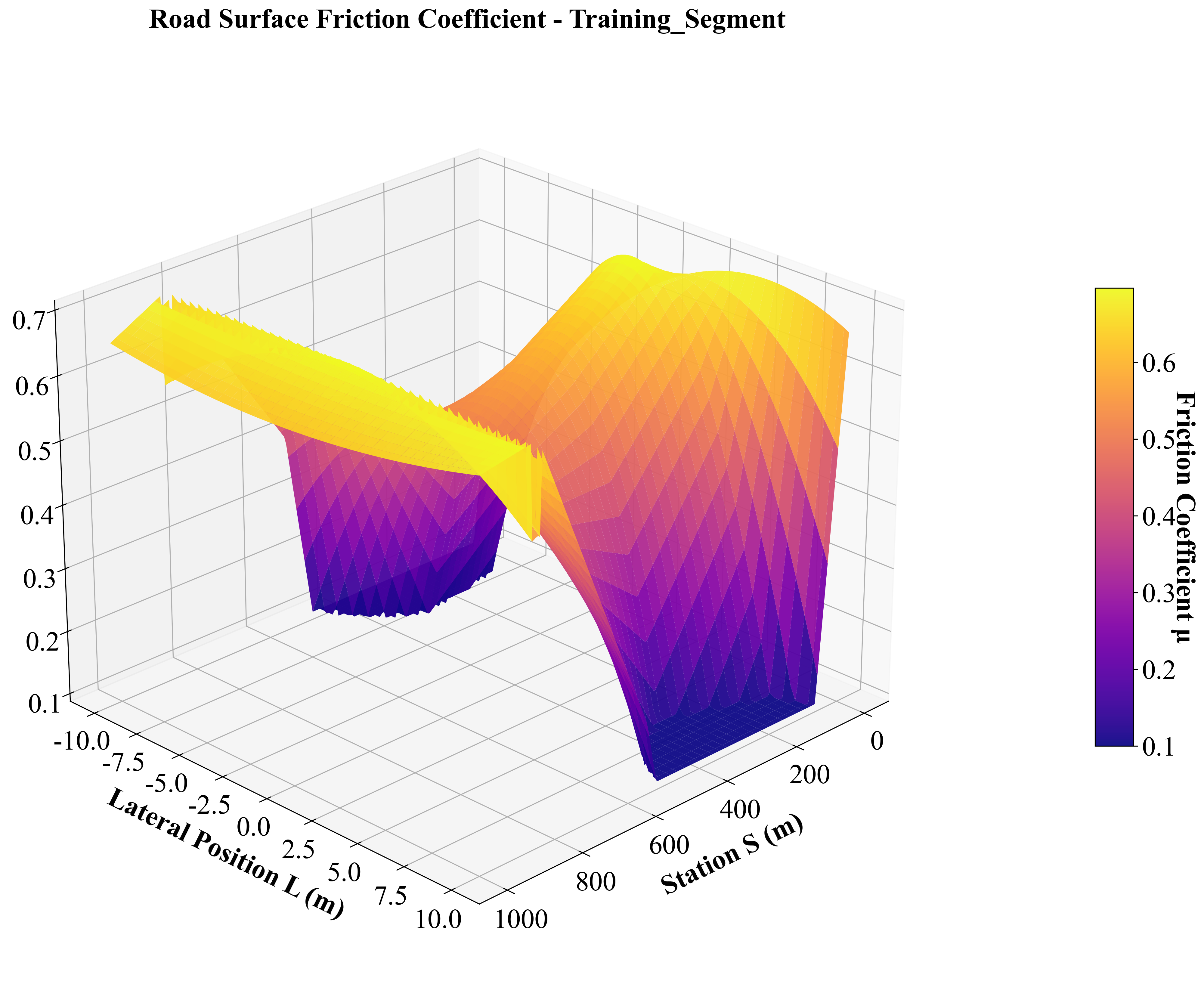}
			\label{fig:train_coefficient}
		}
		\caption{Training road height and adhesion coefficient variation parameter settings}
		\label{fig:training road}
	\end{figure}
	
	\begin{figure}[!t]
		\centering
		\subfigure[Test road height variation with position]{
			\includegraphics[width=0.22\textwidth]{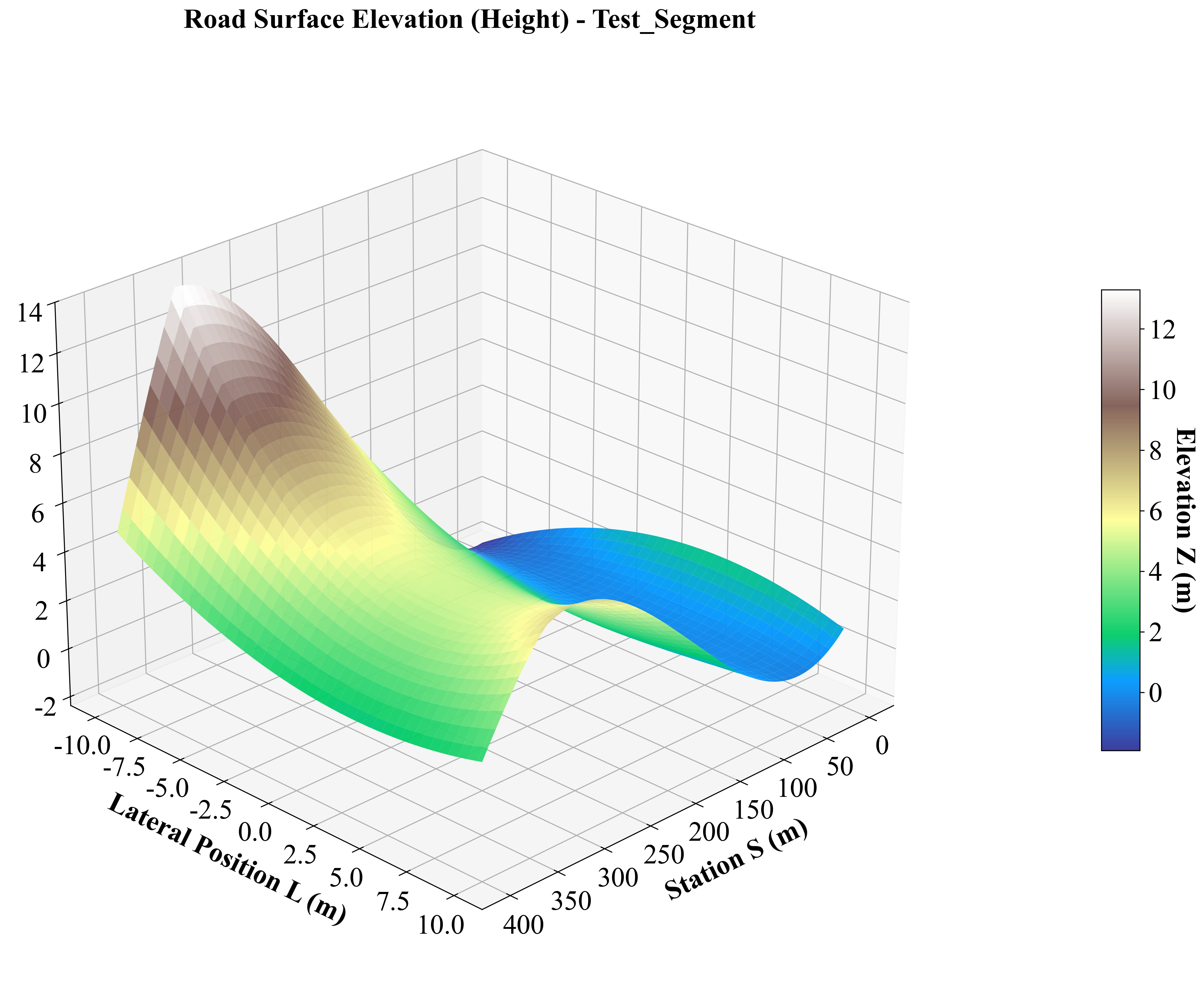}
			\label{fig:test_height}
		}
		\hfill
		\subfigure[Test road adhesion coefficient variation with position]{
			\includegraphics[width=0.22\textwidth]{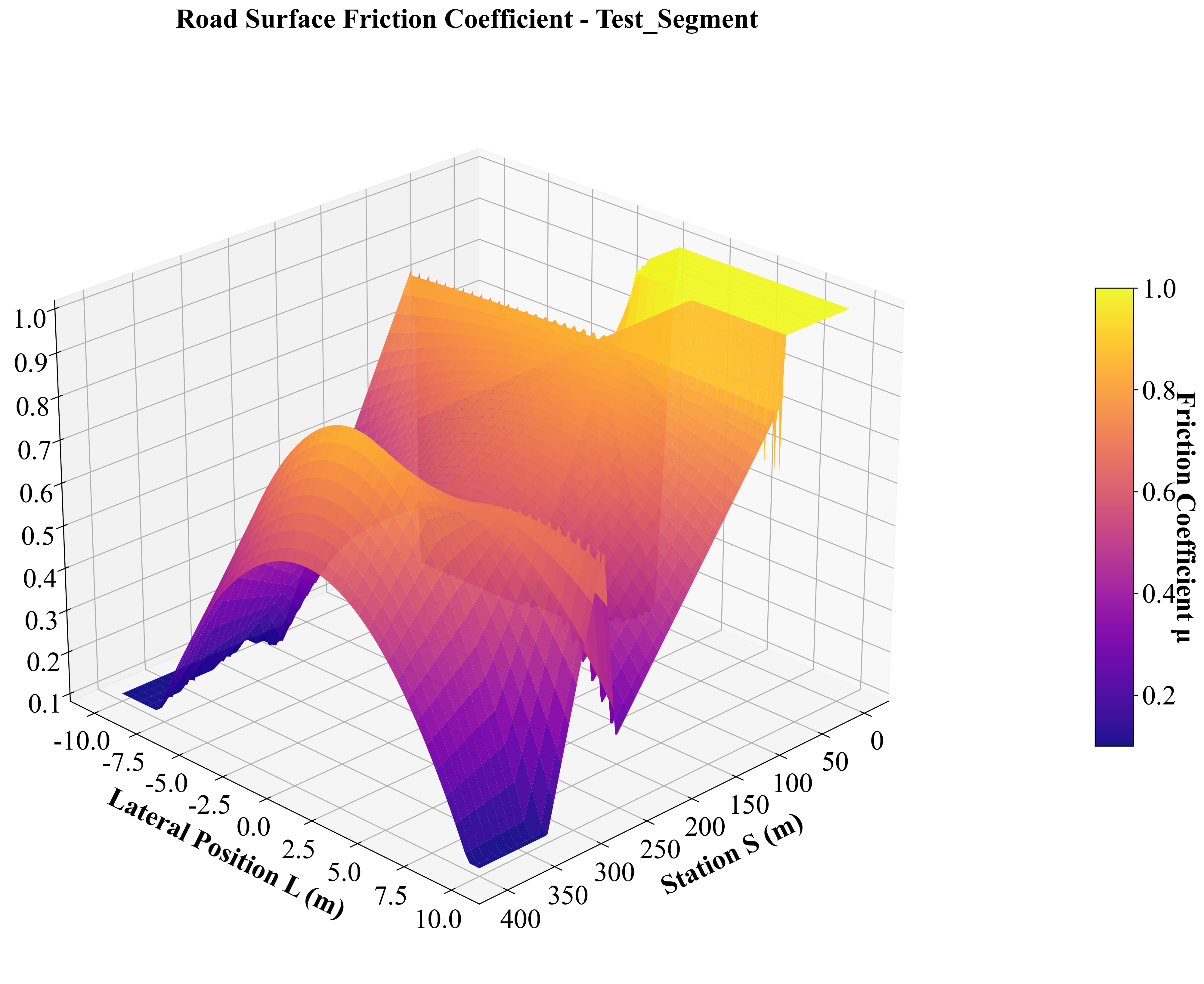}
			\label{fig:test_coefficient}
		}
		\caption{Test road height and adhesion coefficient variation parameter settings}
		\label{fig:test road}
	\end{figure}
	
	\subsection{Simulation Platform and Control Framework}
	
	TruckSim-Python co-simulation architecture is adopted, implementing 50\,Hz state information exchange and control command transmission through TruckSim's API interface. The vehicle experiments serve a dual validation purpose beyond standard performance measurement: (i) they validate the approximate energy modeling framework (Lemma~\ref{lem:approx_potential}) by testing H-EARS under intentionally degraded energy models with approximation errors $\epsilon_{\text{approx}} \in \{10\%, 30\%\}$, directly quantifying the bound's tightness in a safety-critical domain; and (ii) the extreme road adhesion conditions ($\mu \in \{0.8, 0.4, 0.2\}$) constitute controlled disturbance tests for the stability-oriented guidance of Proposition~\ref{prop:lyapunov_heuristic}, where low-$\mu$ surfaces act as external perturbations to the dissipation property $\dot{E} \leq 0$. This framing connects the vehicle experiments directly to two open theoretical questions from the standard environment analysis: the robustness of energy guidance under model approximation, and the stability behavior when the Lyapunov-inspired dissipation condition is challenged by external disturbances.
	
	\subsubsection{Safety Integration Framework}
	\label{sec:safety_framework}
	
	H-EARS shapes the reward landscape to promote physically consistent behavior; formal constraint guarantees require a complementary safety layer. Two integration patterns are identified.
	
	Pattern 1---RL+MPC (implemented). The upper H-EARS policy generates reference states through energy-aware reward shaping. The lower MPC (Section~\ref{sec:mpc_arch}) converts these references to actuator commands while solving the constrained optimisation problem at every control step, enforcing $u_{\min}\leq u_k\leq u_{\max}$ and $\|\Delta u_k\|\leq\Delta u_{\max}$ regardless of the RL policy's output. Crucially, hard-constraint satisfaction is guaranteed by the MPC layer independently of policy quality---H-EARS and the safety layer have non-overlapping responsibilities.
	
	Pattern 2---Shielded RL (for constraint-critical deployments without a lower-level controller). A safety filter monitors the RL policy's action at each step and projects any constraint-violating action to the boundary of the safe set before execution~\cite{hewing2020learning}. H-EARS shaping reduces the frequency and magnitude of such projections by guiding the policy toward physically consistent regions; the filter provides the formal guarantee.
	
	\subsubsection{$\lambda$ Calibration Under Varying Safety Requirements}
	\label{sec:lambda_calib}
	
	Proposition~\ref{prop:regularization_necessity} provides the theoretical basis for $\lambda$ selection: $\lambda$ should be increased when $\kappa(s)>0$ becomes prevalent in task-critical regions, i.e.\ when energy-minimisation and task gradients conflict. In the vehicle domain, this conflict intensifies as road adhesion $\mu$ decreases: at low $\mu$, tyre forces saturate and yaw control demands high-rate torque inputs that conflict with kinetic-energy minimisation, driving $\kappa(s)$ toward positive values and increasing the risk of oscillatory yaw artifacts (Proposition~\ref{prop:regularization_necessity} Case~2). The calibration procedure follows three steps:
	
	(i)Baseline: Set $\lambda$ to the value calibrated for normal operating conditions. In the vehicle experiments, $\lambda=0.20$ is identified by grid search as the value that balances yaw stability and speed-tracking accuracy at $\mu\geq0.4$.
	
	(ii)Safety-budget adjustment: Under stricter requirements (lower $\mu$, tighter yaw-rate limits), increase $\lambda$ in increments $\delta\lambda$. The maximum performance cost per increment is bounded a priori by Theorem~\ref{thm:parameter_continuity}:
	\begin{equation}
		\|J(\pi^*_{\lambda+\delta\lambda}) - J(\pi^*_\lambda)\| \leq C\,|\delta\lambda|\,\mathbb{E}_{\pi^*_\lambda}[\mathcal{E}(a)]
		\label{eq:lambda_tradeoff}
	\end{equation}
	where $C$ depends on the MDP structure and $\mathbb{E}[\mathcal{E}(a)]$ can be estimated from a short rollout of the current policy. This bound makes the speed-recovery cost of each $\lambda$ increment predictable before committing to retraining.
	
	(iii)Stopping criterion: Stop increasing $\lambda$ when the observed yaw-rate oscillation amplitude falls below the domain-specific safety threshold. Because the $\lambda$-to-performance mapping is Lipschitz continuous (Theorem~\ref{thm:parameter_continuity}), this procedure converges to a locally optimal safety-performance operating point without exhaustive search.
	
	\subsection{Vehicle Dynamics Modeling}
	
	The two-degree-of-freedom vehicle dynamics model is employed, applicable to small-angle cornering and medium-low speed driving conditions. Vehicle motion in the horizontal plane is described by:
	\begin{equation}
		\begin{aligned}
			m(\dot{v}_y+v_xr) &= F_{yf} + F_{yr} \\
			I_z\dot{r} &= aF_{yf} - bF_{yr} + M_z
		\end{aligned}
		\label{eq:vehicle_dynamics}
	\end{equation}
	
	Tire longitudinal and lateral forces are calculated through Pacejka magic formula. Sideslip angle $\alpha_i$ is calculated based on vehicle motion state and steering angle.
	
	\subsection{Lower-Level MPC Controller Architecture}
	\label{sec:mpc_arch}
	
	The MPC controller converts RL-generated reference values to specific actuator commands while ensuring physical constraint satisfaction. At each control period, MPC solves the finite-horizon optimization problem:
	\begin{equation}
		\begin{aligned}
			\min_{u_{0:N_c-1}} \quad & \sum_{k=0}^{N_p-1} \|y_k-y_{\text{ref},k}\|_{Q_k}^2 + \sum_{k=0}^{N_c-1} \|\Delta u_k\|_{R_k}^2 \\
			\text{s.t.} \quad & x_{k+1} = f(x_k, u_k), \quad y_k = g(x_k) \\
			& u_{\min} \leq u_k \leq u_{\max}, \quad \|\Delta u_k\| \leq \Delta u_{\max}
		\end{aligned}
	\end{equation}
	
	\subsection{Simulation Results and Analysis}
	
	\subsubsection{Algorithm Convergence Analysis}
	Figure~\ref{fig:Sim_Reward} shows cumulative reward evolution trends of different algorithms during training process.
	
	\begin{figure}[!t]
		\centering
		\includegraphics[width=0.42\textwidth]{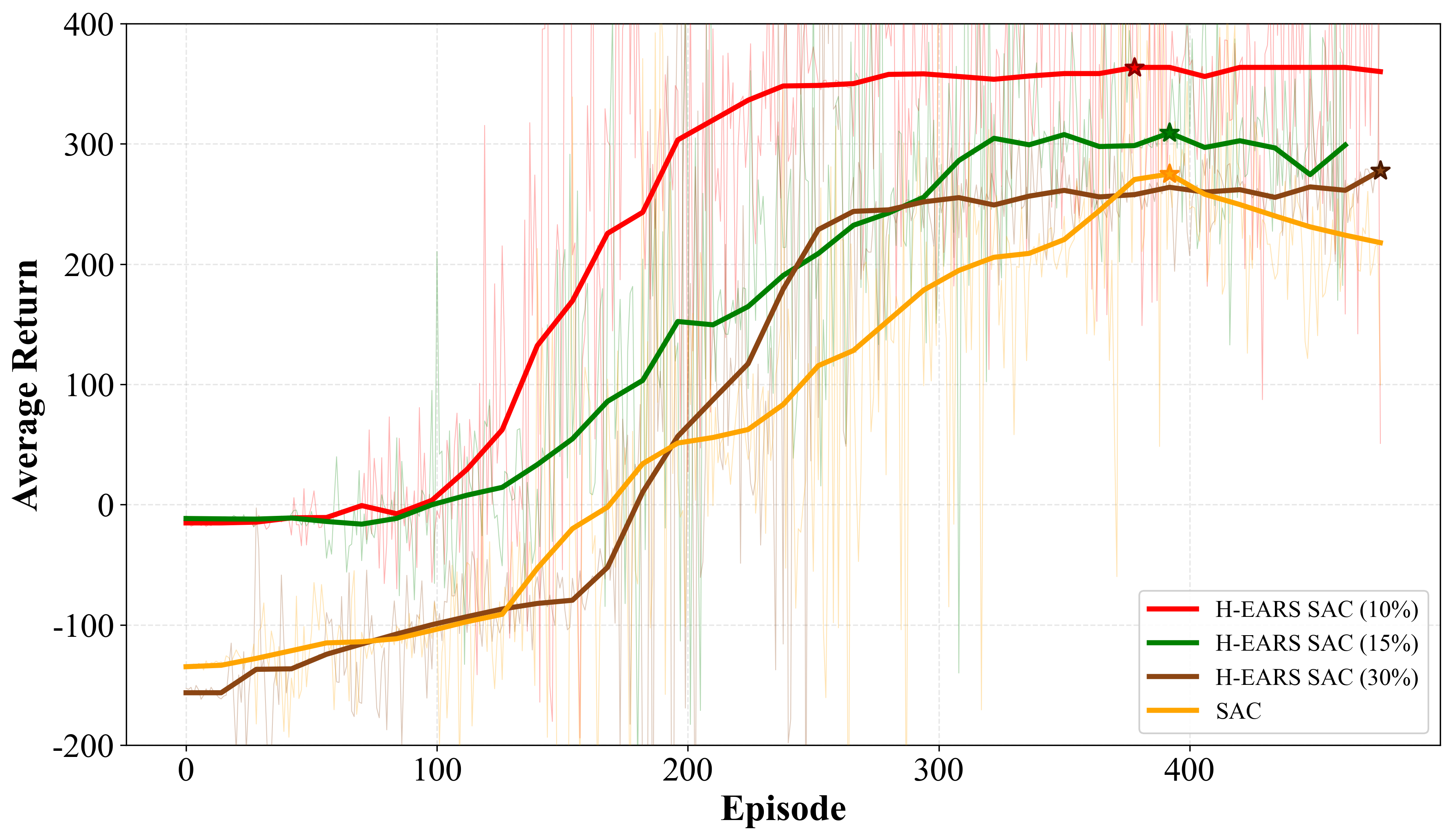}
		\caption{Training return across four variants}
		\label{fig:Sim_Reward}
	\end{figure}
	
	Figure~\ref{fig:Sim_Reward} compares training return for H-EARS variants across three approximation levels and vanilla SAC; quantitative convergence metrics are in Table~\ref{tab:convergence}.
	
	Monotone degradation with approximation error. Final stable returns satisfy the ordering predicted by Lemma~\ref{lem:approx_potential}: $\varepsilon{=}10\%$ ($357{\pm}14$) $>$ $\varepsilon{=}15\%$ ($311{\pm}12$) $>$ $\varepsilon{=}30\%$ ($280{\pm}22$) $>$ SAC ($275{\pm}18$). All three H-EARS variants outperform SAC by 29.8\%, 13.1\%, and 1.8\% respectively, confirming graceful degradation rather than performance collapse at elevated approximation error; even the coarsest 30\% model retains substantial physics-guided advantage.
	
	Convergence acceleration and initialization. Energy-aware potentials raise initial returns from ${\approx}{-130}$ (SAC) to ${\approx}{-40}$ across all H-EARS variants, reflecting the gradient informativeness of $\Phi_{\text{energy}}$ from the first episode (Theorem~\ref{thm:energy_convergence}), irrespective of approximation level. Threshold crossings (265.5, defined as 88.5\% of the $\varepsilon{=}15\%$ stable return) occur at episodes 179, 281, 348, and 376 for the 10\%, 15\%, 30\%, and SAC configurations respectively (Table~\ref{tab:convergence}). The 10\% model reaches threshold 52.4\% earlier than SAC; the 15\% model 25.3\% earlier, consistent with Theorem~\ref{thm:energy_convergence}'s gradient-enrichment acceleration mechanism.
	
	\begin{table}[!t]
		\renewcommand{\arraystretch}{1.3}
		\captionsetup{font=footnotesize, labelfont=footnotesize}
		\caption{Training Performance Comparison Across Approximation Levels}
		\label{tab:convergence}
		\centering
		\footnotesize
		\resizebox{1.0\columnwidth}{!}{%
			\begin{tabular}{lccc}
				\toprule
				\textbf{Algorithm} & \makecell{\textbf{Episodes to}\\\textbf{Threshold}} & \makecell{\textbf{Avg.\ Return}\\\textbf{(Stable)}} & \textbf{CV (\%)} \\
				\midrule
				H-EARS SAC ($\varepsilon_{\text{approx}}{=}10\%$) & \textbf{179} & \textbf{357.0$\pm$14.0} & \textbf{3.8} \\
				H-EARS SAC ($\varepsilon_{\text{approx}}{=}15\%$) & 281          & 311.0$\pm$12.0          & 4.0          \\
				H-EARS SAC ($\varepsilon_{\text{approx}}{=}30\%$) & 348          & 280.0$\pm$22.0          & 5.8          \\
				SAC                                               & 376          & 275.0$\pm$18.0          & 8.3          \\
				\bottomrule
			\end{tabular}
		}
		\vspace{2mm}
		
		\footnotesize
		\textit{Note:} Threshold $= 265.5$ (88.5\% of $\varepsilon{=}15\%$ stable return). Bold: best value per column.
	\end{table}
	
	\subsubsection{Quantitative Performance Comparison}
	
	Table~\ref{tab:trucksim_results_new} presents vehicle control performance for all three H-EARS approximation levels against vanilla SAC, directly validating Lemma~\ref{lem:approx_potential} across the full $\varepsilon_{\text{approx}}$ range.
	
	\begin{table}[!t]
		\renewcommand{\arraystretch}{1.2}
		\captionsetup{font=footnotesize, labelfont=footnotesize}
		\caption{Vehicle Control Performance Under Extreme Conditions: Cross-Approximation Comparison}
		\label{tab:trucksim_results_new}
		\centering
		\footnotesize
		\resizebox{1.0\columnwidth}{!}{%
			\begin{tabular}{lcccc}
				\toprule
				\textbf{Metric} & \makecell{\textbf{H-EARS}\\\textbf{($\varepsilon{=}10\%$)}} & \makecell{\textbf{H-EARS}\\\textbf{($\varepsilon{=}15\%$)}} & \makecell{\textbf{H-EARS}\\\textbf{($\varepsilon{=}30\%$)}} & \textbf{SAC} \\
				\midrule
				Avg Speed Error (m/s)  & $2.51\pm0.32$   & $\mathbf{0.23\pm0.12}$ & $0.35\pm0.20$   & $2.9\pm0.19$ \\
				Max Sideslip (°)       & $\mathbf{0.30}$ & $0.52$                 & $0.82$          & $1.03$        \\
				Max $|r|$ (°/s)        & $\mathbf{0.51}$ & $1.18$                 & $2.71$          & $3.24$        \\
				CV of Speed (\%)       & $\mathbf{2.8}$  & $5.2$                  & $7.1$           & $7.8$         \\
				\bottomrule
			\end{tabular}%
		}
		\vspace{2mm}
		
		\footnotesize
		\textit{Note:} Metrics over 300\,m test segment ($\mu\in[0.1,1.0]$; lateral slope ${\leq}15^\circ$, longitudinal ${\leq}20^\circ$). Speed tracking: the 15\% model achieves best accuracy; the 10\% model is over-constrained due to task--energy gradient competition (see text). Stability metrics (Max Sideslip, Max\,$|r|$): monotonically improve with decreasing $\varepsilon_{\text{approx}}$. Bold: best value per metric.
	\end{table}
	
	\textbf{Speed Tracking Accuracy.} Among the three H-EARS variants, the $\varepsilon{=}15\%$ reference model achieves the best speed tracking accuracy (0.23$\pm$0.12\,m/s, CV\,5.2\%), followed by $\varepsilon{=}30\%$ (0.35$\pm$0.20\,m/s). The $\varepsilon{=}10\%$ model exhibits a systematic positive speed bias (2.51$\pm$0.32\,m/s) despite its lowest CV (2.8\%), yet still outperforms SAC (2.9$\pm$0.19\,m/s, CV\,7.8\%) by 13.4\%. This non-monotone pattern among H-EARS variants arises from Proposition~\ref{prop:regularization_necessity} Case\,2: the additional energy terms in the 10\% model (pitch/roll kinetic energy, tyre deformation potential) have gradients partially misaligned with the speed-tracking objective---i.e.\ $\kappa(s){>}0$ in speed-critical operating regions---creating a systematic pull toward lower-velocity, energy-minimising configurations. The $\lambda{=}0.01$ value, calibrated for the 15\% reference model, is insufficient to suppress this gradient competition at 10\% completeness; Theorem~\ref{thm:parameter_continuity} bounds the resulting performance shift as $\|J(\pi^*_{\lambda+\delta\lambda}) - J(\pi^*_\lambda)\|\leq C|\delta\lambda|\,\mathbb{E}[\mathcal{E}(a)]$, suggesting a larger $\lambda$ would restore tracking accuracy at the cost of reduced stability margins. Conversely, on stability metrics (below), the 10\% model's extra energy terms align with the lateral-stability objective ($\kappa(s){<}0$) and yield the best performance---demonstrating that the optimal approximation level depends on task--energy gradient alignment, not model completeness alone.
	
	\begin{figure}[!t]
		\centering
		\includegraphics[width=0.48\textwidth]{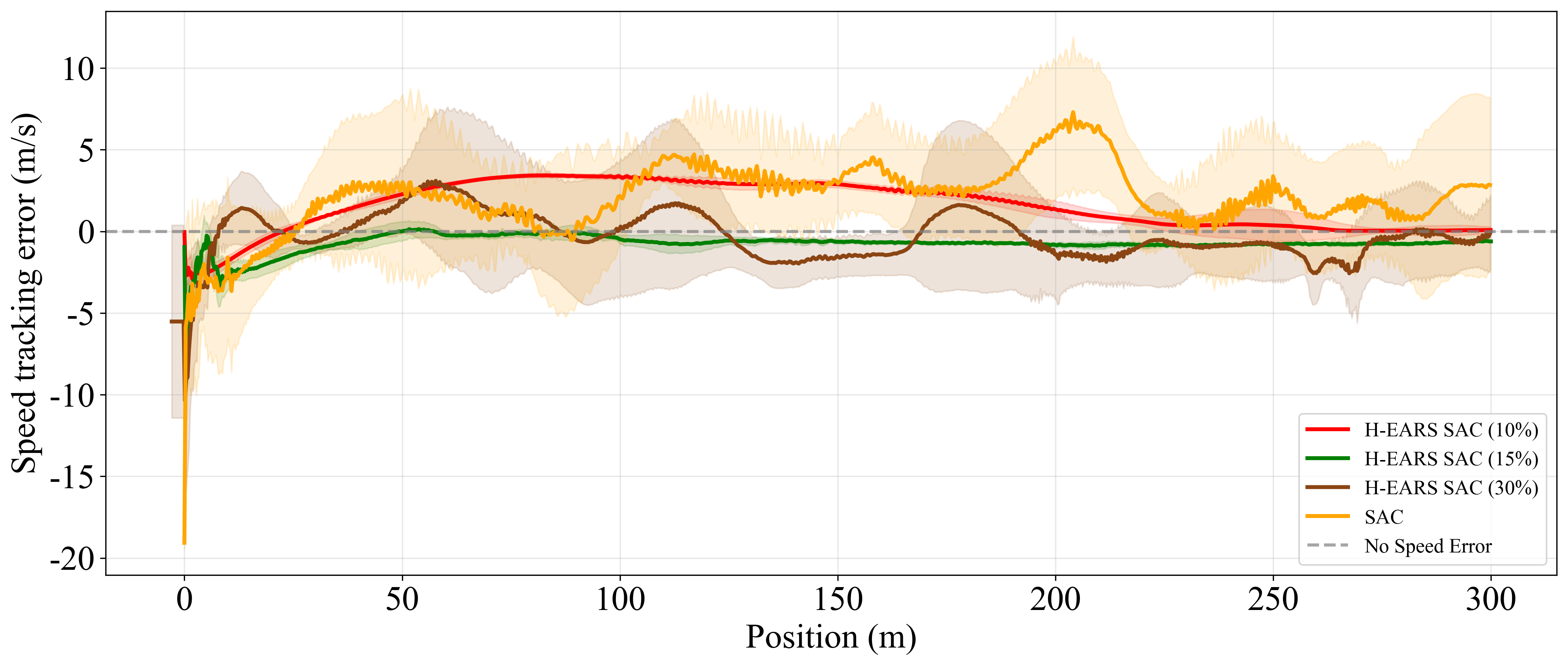}
		\caption{Longitudinal speed tracking error variation with position for each controller}
		\label{fig:velocity_error}
	\end{figure}
	
	\textbf{Lateral Stability Enhancement.} Stability metrics improve monotonically with decreasing $\varepsilon_{\text{approx}}$, directly validating Lemma~\ref{lem:approx_potential}'s graceful-degradation prediction: Max Sideslip progresses from 0.30$^\circ$ ($\varepsilon{=}10\%$) to 0.52$^\circ$ ($\varepsilon{=}15\%$) to 0.82$^\circ$ ($\varepsilon{=}30\%$) to 1.03$^\circ$ (SAC); Max\,$|r|$ progresses from 0.51 to 1.18 to 2.71 to 3.24\,$^\circ$/s over the same sequence. All three H-EARS variants substantially outperform SAC: the 30\% model alone reduces Max Sideslip by 20.4\% and Max\,$|r|$ by 16.4\% relative to SAC, confirming meaningful physics-guided stabilisation even under 30\% approximation error. The monotone stability ordering is consistent with the gradient enrichment mechanism of Theorem~\ref{thm:energy_convergence}: richer energy models supply stronger implicit Lyapunov regularisation (Proposition~\ref{prop:lyapunov_heuristic}), as lateral kinetic energy terms are structurally aligned with the stability objective across all approximation levels.
	
	\begin{figure}[!t]
		\centering
		\includegraphics[width=0.48\textwidth]{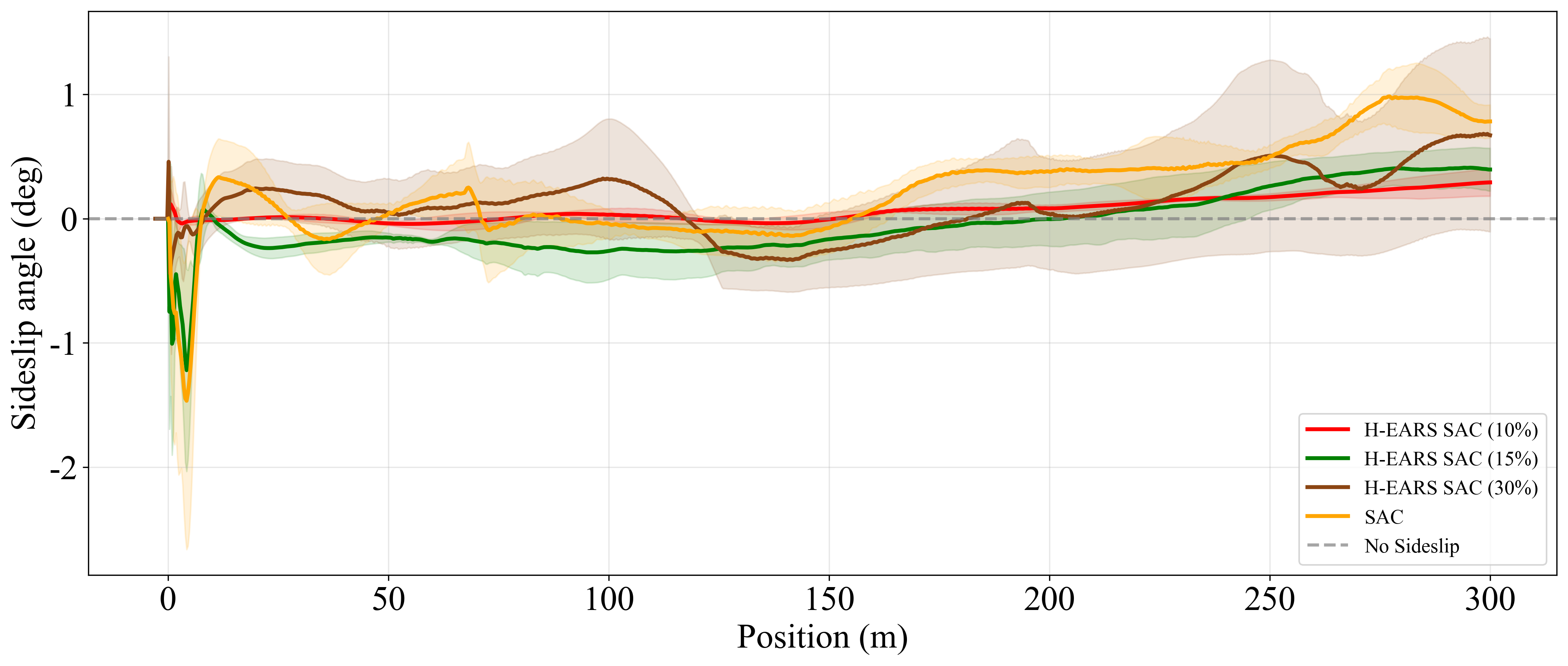}\\
		{\fontsize{7}{6}\selectfont (a) Sideslip angle variation with position}
		\vspace{0.2em}
		\includegraphics[width=0.48\textwidth]{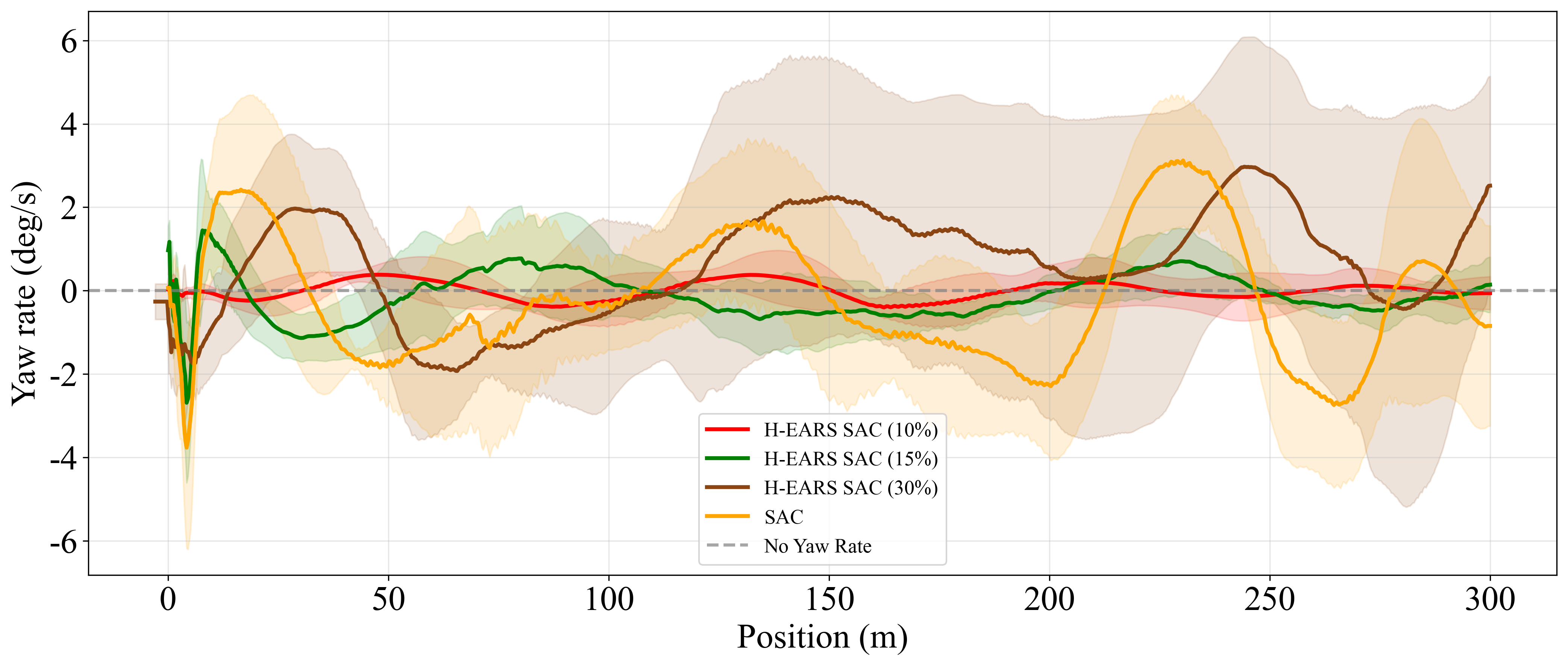}\\
		{\fontsize{7}{8}\selectfont (b) Yaw rate variation with position}
		\caption{Sideslip angle and yaw rate variation with position for each controller}
		\label{fig:slide_Yaw}
	\end{figure}
	
	\subsubsection{Lyapunov Stability Empirical Validation}
	
	Figure~\ref{fig:slide_Yaw} illustrates sideslip angle and yaw rate evolution under extreme conditions for all three H-EARS approximation levels and vanilla SAC, providing direct empirical evidence for Proposition~\ref{prop:lyapunov_heuristic}. The monotone performance ordering across approximation levels is quantified in Table~\ref{tab:trucksim_results_new}; the analysis below focuses on the $\varepsilon{=}15\%$ reference model as representative of shared characteristics across all H-EARS variants.
	
	\textbf{Monotonic Convergence Characteristics}: Defining Lyapunov candidate function $L = \frac{1}{2}I_z r^2 + k_\beta \beta^2$ (corresponding to energy potential $\Phi_{\text{energy}} = -L$), the $\varepsilon{=}15\%$ reference model exhibits representative monotonic convergence behaviour shared across all H-EARS variants. In the 0-50m initial segment, sideslip angle $\beta$ rapidly decays from peak 0.25° to within ±0.2°, with $L(t)$ presenting monotonic decreasing trend satisfying $\dot{L}<0$. Throughout the 300m test segment, H-EARS maintains absolute sideslip peak below 0.52° with overall fluctuation within ±0.4°. In contrast, SAC exhibits periodic instability with sideslip peaks breaking 1.0° in the 200-300m segment, particularly at road condition mutation points (100m, 200m, 250m positions), indicating failure to maintain $\dot{L}<0$ energy dissipation property.
	
	\textbf{Energy Dissipation Mechanism}: Proposition~\ref{prop:lyapunov_heuristic} establishes that maximizing $\Phi_{\text{energy}}$ is equivalent to controlling along $-\nabla L$ direction. Empirical observations confirm this prediction: the $\varepsilon{=}15\%$ model's yaw rate adjustment amplitude at road condition mutations (150\,m, 240\,m positions) remains $<$1\,$^\circ$/s, significantly smaller than SAC's 3--5\,$^\circ$/s mutations. This validates energy potential's predictive constraint effect—the system automatically avoids aggressive actions generating large-amplitude yaw energy $\frac{1}{2}I_z r^2$. SAC yaw oscillation amplitude reaches ±4-5°/s in the 200-250m compound slope segment, corresponding to non-monotonic growth of $L$ function, violating Lyapunov stability principles.
	
	\textbf{Approximate Model Robustness.} Experiments across $\varepsilon_{\text{approx}}\in\{10\%,15\%,30\%\}$ directly validate Lemma~\ref{lem:approx_potential}'s graceful-degradation prediction. On lateral stability metrics, performance degrades monotonically with increasing $\varepsilon_{\text{approx}}$ (Table~\ref{tab:trucksim_results_new}): Max Sideslip progresses from 0.30$^\circ$ to 0.52$^\circ$ to 0.82$^\circ$, while all three levels remain substantially below SAC's 1.03$^\circ$. The relative performance degradations in stable return from the 10\% model are $-12.9\%$ ($\varepsilon{=}15\%$) and $-21.6\%$ ($\varepsilon{=}30\%$), both well-bounded even at 30\% approximation error. Speed tracking exhibits a non-monotone pattern arising from task--energy gradient competition at 10\% completeness. These results confirm that dominant-term energy models capture sufficient physical structure for effective guidance across the full tested range, without requiring complete dynamics identification.
	
	\textbf{Physics Mechanism Interpretation}: The equivalence relation $\max \Phi_{\text{energy}} \Leftrightarrow \min \dot{L}$ established by Theorem~\ref{prop:lyapunov_heuristic} demonstrates that RL policies naturally learn passive stability through energy minimization principles implicitly encoded by PBRS. Without explicit complete vehicle dynamics modeling, relying solely on gradient information of dominant energy terms maintains Lyapunov control law-like asymptotic stability characteristics under extreme conditions. Yaw rate's smooth evolution (low-frequency characteristics in Figure~\ref{fig:slide_Yaw}(b)) further validates approximate validity conditions: under settings $\gamma=0.99$, $\Delta t=0.02$s, dominant gradient term $\gamma\nabla_s\Phi \cdot f\Delta t$ of PBRS far exceeds constant correction term $(1-\gamma)|\Phi|$ and high-order error $O(\Delta t^2)$, making discrete-time approximation bias $<$3\%. This explains why simplified energy models realize theoretically predicted stability guarantees in high-fidelity TruckSim environments.
	
	\section{Limitations and Future Work}
	
	Scope of the Hessian condition. The gradient enrichment guarantee of Theorem~\ref{thm:energy_convergence} is predicated on $\frac{\partial^2 E}{\partial q^2}\succ 0$ over $\mathcal{S}_{\text{op}}$. For mechanical systems governed by Lagrangian dynamics this condition is structurally satisfied by the positive-definite inertia matrix $M(q)$, covering the full class of locomotion, manipulation, and vehicle control tasks studied here. Where the condition holds only locally---e.g., large-angle pendulums or gaits with ground-contact discontinuities---the enrichment guarantee applies within that sub-region and the framework degrades gracefully to TaskOnly-level performance elsewhere (ablation, Figure~\ref{fig:ablation}), with the $O(1/\sqrt{N})$ convergence guarantee of Theorem~\ref{thm:convergence} remaining intact throughout.
	
	Extension to non-mechanical domains. H-EARS assumes that dominant energy terms are identifiable a priori from physical knowledge. Extending the framework to domains without natural mechanical structure requires constructing a scalar proxy $E_{\text{proxy}}$ satisfying $\partial^2 E_{\text{proxy}}/\partial q^2\succ 0$ over the task-relevant region, after which Theorem~\ref{thm:energy_convergence} and Lemma~\ref{lem:approx_potential} apply without modification. Standard constructions exist for several engineering domains---reservoir-level control ($E_{\text{proxy}}(h)=\tfrac{1}{2}k(h-h^*)^2$, $\partial^2/\partial h^2 = k>0$) and power-grid frequency regulation ($E_{\text{proxy}}(\Delta f)=\tfrac{1}{2}(\Delta f)^2$, $\partial^2/\partial(\Delta f)^2=1>0$) being representative examples---and we identify systematic proxy design for cyber-physical and economic systems as a productive direction for future work.
	
	\section{Conclusion}
	
	This paper presents H-EARS, a framework that encodes dominant energy terms---assumed known a priori---directly as reward potentials, enabling physics-guided reinforcement learning without modifying the underlying algorithm or requiring complete system identification.
	
	Three theoretical contributions are established. First, when the energy Hessian is locally positive definite over the task-relevant operating region, the energy potential provides gradient enrichment throughout the state space, accelerating convergence in domains where task rewards are sparse (Theorem~\ref{thm:energy_convergence}). Second, dual-potential decomposition into task-oriented and energy-based components is shown to be mathematically necessary: a single potential cannot simultaneously satisfy task directivity and energy awareness when objectives are misaligned (Proposition~\ref{prop:dual_necessity}). Third, performance bounds for approximate energy models quantify the trade-off between modeling effort and guidance quality---relative approximation errors up to 20\% yield less than 5\% performance loss under typical hyperparameters (Lemma~\ref{lem:approx_potential}).
	
	Empirical evaluation across four environments and four baseline algorithms confirms consistent improvements in convergence speed, policy stability, and energy efficiency. In practice, H-EARS provides a pathway from academic research to industrial deployment by showing that minimal physics priors can enhance model-free RL without expert modeling, complete dynamics, or algorithm changes. The systematic integration of convergence guarantees, error bounds, mechanical principles, cross-algorithm validation, and simplified modeling bridges theoretical rigor with deployment feasibility.
	
	\section{Acknowledgment}
	
	Supported by the High-Speed Autonomous Engineering Scooter Project (Revealing List Program, Science and Technology for China Platform in Xuzhou City).
	The authors thank the School of Mechanical Engineering, University of Science and Technology Beijing, and Jiangsu XCMG Construction Machinery Research Institute Co., Ltd.
	
	\appendix
	
	\subsection{Training Details on benchmark}
	\label{benchmark_training_details}
	
	Table~\ref{tab:hyperparams_standard} shows the detailed hyperparameters used in benchmark experiments.
	
	\begin{table}[htbp]
		\renewcommand{\arraystretch}{1.2}
		\captionsetup{font=footnotesize, labelfont=footnotesize}
		\caption{Detailed hyperparameters}
		\label{tab:hyperparams_standard}
		\centering
		\footnotesize
		\begin{tabular}{lc}
			\toprule
			\textbf{Hyperparameters} & \textbf{Value} \\
			\midrule
			\textit{Shared} & \\
			\quad Optimizer & Adam ($\beta_1=0.9, \beta_2=0.999$) \\
			\quad Actor learning rate & $3\mathrm{e}{-}4$ \\
			\quad Critic learning rate & $3\mathrm{e}{-}4$ \\
			\quad Discount factor ($\gamma$) & 0.99 \\
			\quad Policy update interval & 5000 \\
			\quad Target smoothing coefficient ($\tau$) & 0.005 \\
			\quad Reward scale & 1 \\
			\quad Random seed set & [12345,22345,32345,42345,52345] \\
			\midrule
			\textit{Maximum-entropy framework} & \\
			\quad Learning rate of $\alpha$ & $3\mathrm{e}{-}4$ \\
			\quad Expected entropy ($\overline{H}$) & $-\dim(\mathcal{A})$ \\
			\midrule
			\textit{Off-policy} & \\
			\quad Replay buffer size & $1 \times 10^6$ \\
			\quad Samples collected per iteration & 256 \\
			\midrule
			\textit{On-policy} & \\
			\quad Sample batch size & 2048 \\
			\quad Train batch size & 2048 \\
			\quad GAE factor ($\lambda$) & 0.95 \\
			\midrule
			\textit{H-EARS ($\alpha_{\text{task}}$, $\alpha_{\text{energy}}$, $\lambda$)} & \\
			\quad Ant-v5 & (0.005, 0.03, 0.01) \\
			\quad Hopper-v5 & (0.5, 0.001, 0.0005) \\
			\quad LunarLander-v3 & (0.5, 0.001, 0.01) \\
			\quad Humanoid-v5 & (0.1, 0.001, 0.0001) \\
			\bottomrule
		\end{tabular}
	\end{table}
	
	\subsection{H-EARS Module Design for Standard Environments}
	\label{app:standard_envs}
	
	This section details the H-EARS module design for the four standard environments in the experiments of Section IV, supporting reproducibility of the experimental results.
	
	\subsubsection{Ant-v5 Environment}
	
	Internal energy function:
	\begin{equation}
		\begin{aligned}
			\mathcal{E}_{\text{int}}(s) &= \frac{1}{2}m\sum_{i=1}^{3}v_i^2 + \frac{1}{2}I\sum_{j=1}^{3}\omega_j^2 + mgh \\
			\mathbf{v} &= s[13:16], \quad \boldsymbol{\omega} = s[16:19], \quad h = s[0]
		\end{aligned}
	\end{equation}
	
	Task potential function:
	\begin{equation}
		\Phi_{\text{task}}(s) = \alpha_{\text{task}} \cdot x, \quad x = s[1].
	\end{equation}
	
	Hybrid potential function:
	\begin{equation}
		\Phi(s) = \alpha_{\text{task}} \cdot x - \alpha_{\text{energy}}\mathcal{E}_{\text{int}}(s).
	\end{equation}
	
	Control energy:
	\begin{equation}
		\mathcal{E}(a) = \frac{1}{2}\sum_{k=1}^{8}a_k^2.
	\end{equation}
	
	Hyperparameters: $\alpha_{\text{task}}=0.005$, $\alpha_{\text{energy}}=0.03$, $\lambda=0.01$.
	
	\subsubsection{Hopper-v5 Environment}
	
	Internal energy function:
	\begin{equation}
		\begin{aligned}
			\mathcal{E}_{\text{int}}(s) &= \frac{1}{2}m(v_x^2 + v_z^2) + \frac{1}{2}I\sum_{j=1}^{3}\omega_j^2 + mgh + 0.1\sum_{i=1}^{3}\theta_i^2 \\
			\mathbf{v} &= s[5:7], \quad \boldsymbol{\omega} = s[7:10], \quad h = s[0], \quad \boldsymbol{\theta} = s[2:5]
		\end{aligned}
	\end{equation}
	where $0.1\sum\theta_i^2$ represents the posture deviation energy, encouraging upright posture.
	
	Task potential function:
	\begin{equation}
		\Phi_{\text{task}}(s) = \alpha_{\text{task}}\sqrt{\max(0, x)}, \quad x = s[1].
	\end{equation}
	
	Hybrid potential function:
	\begin{equation}
		\Phi(s) = \alpha_{\text{task}}\sqrt{\max(0, x)} - \alpha_{\text{energy}}\mathcal{E}_{\text{int}}(s).
	\end{equation}
	
	Control energy:
	\begin{equation}
		\mathcal{E}(a) = \frac{1}{2}\sum_{k=1}^{3}a_k^2.
	\end{equation}
	
	Hyperparameters: $\alpha_{\text{task}}=0.5$, $\alpha_{\text{energy}}=0.001$, $\lambda=0.0005$ (dynamic balancing task).
	
	\subsubsection{LunarLander-v3 Environment}
	
	Internal energy function:
	\begin{equation}
		\begin{aligned}
			\mathcal{E}_{\text{int}}(s) &= \frac{1}{2}m(v_x^2 + v_y^2) + \frac{1}{2}I\omega^2 + mgh \\
			\mathbf{v} &= (v_x, v_y) = s[2:4], \quad \omega = s[5], \quad h = s[1]
		\end{aligned}
	\end{equation}
	
	Task potential function:
	\begin{equation}
		\Phi_{\text{task}}(s) = -\alpha_{\text{task}}\left(\sqrt{x^2+y^2} + 0.5|\theta|\right),
	\end{equation}
	where $x=s[0]$, $y=s[1]$, $\theta=s[4]$. The negative sign ensures the potential function increases as the target is approached.
	
	Hybrid potential function:
	\begin{equation}
		\Phi(s) = -\alpha_{\text{task}}\left(\sqrt{x^2+y^2} + 0.5|\theta|\right) - \alpha_{\text{energy}}\mathcal{E}_{\text{int}}(s).
	\end{equation}
	
	Control energy:
	\begin{equation}
		\mathcal{E}(a) = \frac{1}{2}\sum_{k=1}^{2}a_k^2.
	\end{equation}
	
	Hyperparameters: $\alpha_{\text{task}}=0.5$, $\alpha_{\text{energy}}=0.001$, $\lambda=0.01$ (precision control task).
	
	\subsubsection{Humanoid-v5 Environment}
	
	Internal energy function:
	\begin{equation}
		\begin{aligned}
			\mathcal{E}_{\text{int}}(s) &= \frac{1}{2}m\sum_{i=1}^{3}v_i^2 + \frac{1}{2}I\sum_{j=1}^{3}\omega_j^2 + mgh \\
			\mathbf{v} &= s[185:188], \quad \boldsymbol{\omega} = s[188:191], \quad h = s[0]
		\end{aligned}
	\end{equation}
	
	Task potential function:
	\begin{equation}
		\Phi_{\text{task}}(s) = \alpha_{\text{task}} \cdot x, \quad x = s[1].
	\end{equation}
	
	Hybrid potential function:
	\begin{equation}
		\Phi(s) = \alpha_{\text{task}} \cdot x - \alpha_{\text{energy}}\mathcal{E}_{\text{int}}(s).
	\end{equation}
	
	Control energy:
	\begin{equation}
		\mathcal{E}(a) = \frac{1}{2}\sum_{k=1}^{N_a}a_k^2.
	\end{equation}
	
	Hyperparameters: $\alpha_{\text{task}}=0.1$, $\alpha_{\text{energy}}=0.001$, $\lambda=0.0001$.
	
	\subsection{Vehicle Simulation Environment Configuration}
	\label{app:vehicle}
	
	This section details the reward function design and H-EARS configuration for the vehicle control experiments in Section V of the main text.
	
	\subsubsection{Base Reward Function}
	
	The original reward function consists of the following components:
	\begin{equation}
		\begin{aligned}
			R_{\text{base}} &= w_{\text{coop}}R_{\text{coop}} + w_{\text{speed}}R_{\text{speed}} + w_{\text{path}}R_{\text{path}} \\
			&\quad + w_{\text{look}}R_{\text{look}} + w_{\text{head}}R_{\text{head}} + w_{\text{stab}}R_{\text{stab}} + P_{\text{term}}
		\end{aligned}
	\end{equation}
	
	Cooperation reward (MPC-RL cooperation):
	\begin{equation}
		R_{\text{coop}} = 3.0\sigma(f) - 0.5(1-\sigma(f)) + R_{\text{ref-exec}} + R_{\text{state-dep}},
	\end{equation}
	where $\sigma(f) = 1/(1+e^{-15(f-0.85)})$ and $f$ is the MPC feasibility ratio.
	
	Reference-execution consistency:
	\begin{equation}
		R_{\text{ref-exec}} = 2.0\exp\left(-2.0\sum_{i}\left|\frac{y_{\text{ref},i} - y_{\text{exec},i}}{y_{\text{max},i}}\right|\right).
	\end{equation}
	
	Speed tracking:
	\begin{equation}
		R_{\text{speed}} = \frac{v_{\text{target}}^2 - (v_x-v_{\text{target}})^2}{v_{\text{target}}^2}.
	\end{equation}
	
	Path tracking:
	\begin{equation}
		R_{\text{path}} = 
		\begin{cases}
			5.0\left(1-\frac{|e_{\text{lat}}|}{1.0}\right) & \text{if } |e_{\text{lat}}| < 1.0\text{m} \\
			-2.0(|e_{\text{lat}}|-1.0)^{1.5} & \text{otherwise}
		\end{cases}.
	\end{equation}
	
	Look-ahead point tracking:
	\begin{equation}
		R_{\text{look}} = 0.7\exp\left(-\frac{\theta_{e,1}^2}{2(0.3)^2}\right) + 0.3\exp\left(-\frac{\theta_{e,2}^2}{2(0.3)^2}\right).
	\end{equation}
	
	Heading angle:
	\begin{equation}
		R_{\text{head}} = \exp(-3.0|\psi|).
	\end{equation}
	
	Stability:
	\begin{equation}
		R_{\text{stab}} = \frac{1}{3}[\exp(-3.0|r|) + \exp(-5.0|\beta|) + \exp(-5.0\text{LTR}^2)].
	\end{equation}
	
	Weights: $w_{\text{coop}}=1.0$, $w_{\text{speed}}=1.5$, $w_{\text{path}}=1.0$, $w_{\text{look}}=1.2$, $w_{\text{head}}=1.2$, $w_{\text{stab}}=0.8$.
	
	Final training reward:
	\begin{equation}
		R_{\text{train}} = 0.05 \cdot R_{\text{base}}.
	\end{equation}
	
	\subsubsection{H-EARS Enhancement}
	
	Task potential function:
	\begin{equation}
		\begin{aligned}
			\Phi_{\text{task}}(s) &= 0.35\Phi_{\text{track}} + 0.25\Phi_{\text{stab}} \\
			&\quad + 0.15\Phi_{\text{head}} + 0.15\Phi_{\text{speed}} + 0.10\Phi_{\text{prog}} \\
			\Phi_{\text{track}} &= 10.0\exp(-0.5|v_y|^2) \\
			\Phi_{\text{stab}} &= 5.0\exp(-3.0(\beta^2 + 0.5r^2)) \\
			\Phi_{\text{head}} &= 3.0\exp(-5.0\psi^2) \\
			\Phi_{\text{speed}} &= 4.0\exp(-0.05(v_x-v_{\text{target}})^2) \\
			\Phi_{\text{prog}} &= 10.0\min(x/x_{\text{max}}, 1.0).
		\end{aligned}
	\end{equation}
	
	Internal energy function:
	\begin{equation}
		\begin{aligned}
			\mathcal{E}_{\text{int}}(s) &= \hat{E}_{\text{lin}} + \hat{E}_{\text{ang}} + E_{\text{slip}} + E_{\Delta r} + E_{\Delta v} \\
			\hat{E}_{\text{lin}} &= \frac{m(v_x^2+v_y^2)}{mv_{\text{target}}^2} \\
			\hat{E}_{\text{ang}} &= \frac{I_zr^2}{I_zr_{\text{typical}}^2} \\
			E_{\text{slip}} &= 2.0\beta^2 \\
			E_{\Delta r} &= 1.0\frac{(r-r_{\text{prev}})^2}{r_{\text{ref}}^2} \\
			E_{\Delta v} &= 0.5\left(\frac{v_x-v_{\text{ideal}}}{v_{\text{ideal}}}\right)^2.
		\end{aligned}
	\end{equation}
	
	Action regularization:
	\begin{equation}
		\begin{aligned}
			\mathcal{E}(a) &= \frac{1}{2}\|a\|^2 + P_{\text{turn}} + P_{\text{slip}} + P_{\text{change}} \\
			P_{\text{turn}} &= 
			\begin{cases}
				2.0(|\hat{r}_{\text{ref}}|-0.3)^2 & \text{if } |\hat{r}_{\text{ref}}| > 0.3 \\
				0 & \text{otherwise}
			\end{cases} \\
			P_{\text{slip}} &= 
			\begin{cases}
				3.0(|\hat{\beta}_{\text{ref}}|-0.3)^2 & \text{if } |\hat{\beta}_{\text{ref}}| > 0.3 \\
				0 & \text{otherwise}
			\end{cases} \\
			P_{\text{change}} &= \|a - a_{\text{prev}}\|^2.
		\end{aligned}
	\end{equation}
	
	H-EARS reward:
	\begin{equation}
		R_{\text{H-EARS}} = R_{\text{train}} + \gamma\Phi(s') - \Phi(s) - \lambda\mathcal{E}(a),
	\end{equation}
	where $\Phi(s) = \alpha_{\text{task}}\Phi_{\text{task}}(s) - \alpha_{\text{energy}}\mathcal{E}_{\text{int}}(s)$.
	
	Hyperparameters: $\alpha_{\text{task}}=0.45$, $\alpha_{\text{energy}}=0.35$, $\lambda=0.20$.
	
	This configuration significantly improves stability under extreme conditions through potential function guidance and regularization constraints, targeting the regularized MDP $\mathcal{M}_\lambda$ objective (Eq.~\eqref{eq:optimal_policy}) that deliberately prioritizes energy-efficient, physically consistent solutions.


\begin{thebibliography}{99}
		
		\bibitem{haarnoja2018soft}
		T. Haarnoja, A. Zhou, P. Abbeel, and S. Levine, ``Soft actor-critic: Off-policy maximum entropy deep reinforcement learning with a stochastic actor,'' in \textit{Proc. Int. Conf. Mach. Learn.}, 2018, pp. 1861--1870.
		
		\bibitem{cobbe2019quantifying}
		K. Cobbe, O. Klimov, C. Hesse, T. Kim, and J. Schulman, ``Quantifying generalization in reinforcement learning,'' in \textit{Proc. 36th Int. Conf. Mach. Learn.}, 2019, pp. 1282--1289.
		
		\bibitem{lutter2019deep}
		M. Lutter, C. Ritter, and J. Peters, ``Deep Lagrangian networks: Using physics as model prior for deep learning,'' in \textit{Proc. Int. Conf. Learn. Represent.}, 2019.
		
		\bibitem{greydanus2019hamiltonian}
		S. Greydanus, M. Dzamba, and J. Yosinski, ``Hamiltonian neural networks,'' in \textit{Proc. Adv. Neural Inf. Process. Syst.}, 2019, pp. 15379--15389.
		
		\bibitem{zhang2021learning}
		A. Zhang, R. McAllister, R. Calandra, Y. Gal, and S. Levine, ``Learning invariant representations for reinforcement learning without reconstruction,'' in \textit{Proc. Int. Conf. Learn. Represent.}, 2021.
		
		\bibitem{ng1999policy}
		A. Y. Ng, D. Harada, and S. Russell, ``Policy invariance under reward transformations: Theory and application to reward shaping,'' in \textit{Proc. 16th Int. Conf. Mach. Learn.}, 1999, pp. 278--287.
		
		\bibitem{ding2023magnetic}
		H. Y. Ding, Y. Z. Tang, Q. Wu, B. Wang, C. L. Chen, and Z. Wang, ``Magnetic field-based reward shaping for goal-conditioned reinforcement learning,'' \textit{IEEE/CAA J. Autom. Sinica}, vol. 10, no. 12, pp. 2233--2247, Dec. 2023.
		
		\bibitem{wiewiora2003principled}
		E. Wiewiora, G. Cottrell, and C. Elkan, ``Principled methods for advising reinforcement learning agents,'' in \textit{Proc. 20th Int. Conf. Mach. Learn.}, 2003, pp. 792--799.
		
		\bibitem{devlin2012dynamic}
		S. Devlin and D. Kudenko, ``Dynamic potential-based reward shaping,'' in \textit{Proc. 11th Int. Conf. Auton. Agents Multiagent Syst.}, 2012, pp. 433--440.
		
		\bibitem{zou2021reward}
		H. Zou, T. Ren, D. Yan, H. Su, and J. Zhu, ``Learning task-distribution reward shaping with meta-learning,'' in \textit{Proc. 35th AAAI Conf. Artif. Intell.}, 2021, pp. 11210--11218.
		
		\bibitem{abbeel2004apprenticeship}
		P. Abbeel and A. Y. Ng, ``Apprenticeship learning via inverse reinforcement learning,'' in \textit{Proc. 21st Int. Conf. Mach. Learn.}, 2004, pp. 1--8.
		
		\bibitem{grzes2010learning}
		M. Grzes and D. Kudenko, ``Online learning of shaping rewards in reinforcement learning,'' \textit{Neural Networks}, vol. 23, no. 4, pp. 541--550, May 2010.
		
		\bibitem{harutyunyan2015expressing}
		A. Harutyunyan, S. Devlin, P. Vrancx, and A. Nowé, ``Expressing arbitrary reward functions as potential-based advice,'' in \textit{Proc. 29th AAAI Conf. Artif. Intell.}, 2015, pp. 2652--2658.
		
		\bibitem{brys2015}
		T. Brys, A. Harutyunyan, M. E. Taylor, and A. Nowé, ``Policy transfer using reward shaping,'' in \textit{Proc. 14th Int. Conf. Auton. Agents Multiagent Syst.}, 2015, pp. 181--188.
		
		\bibitem{cranmer2020lagrangian}
		M. Cranmer, S. Greydanus, S. Hoyer, P. Battaglia, D. Spergel, and S. Ho, ``Lagrangian neural networks,'' in \textit{Proc. ICLR Workshop Deep Learn. Phys. Sci.}, 2020.
		
		\bibitem{battaglia2016interaction}
		P. Battaglia, R. Pascanu, M. Lai, and D. J. Rezende, ``Interaction networks for learning about objects, relations and physics,'' in \textit{Proc. Adv. Neural Inf. Process. Syst.}, 2016, pp. 4502--4510.
		
		\bibitem{sanchez2020learning}
		A. Sanchez-Gonzalez, J. Godwin, T. Pfaff, R. Ying, J. Leskovec, and P. Battaglia, ``Learning to simulate complex physics with graph networks,'' in \textit{Proc. Int. Conf. Mach. Learn.}, 2020, pp. 8459--8468.
		
		\bibitem{chen2018neural}
		R. T. Chen, Y. Rubanova, J. Bettencourt, and D. K. Duvenaud, ``Neural ordinary differential equations,'' in \textit{Proc. Adv. Neural Inf. Process. Syst.}, 2018, pp. 6571--6583.
		
		\bibitem{liu2020parallel}
		T. Liu, B. Tian, Y. Ai, and F.-Y. Wang, ``Parallel reinforcement learning-based energy efficiency improvement for a cyber-physical system,'' \textit{IEEE/CAA J. Autom. Sinica}, vol. 7, no. 2, pp. 617--626, Mar. 2020.
		
		\bibitem{fujimoto2018addressing}
		S. Fujimoto, H. van Hoof, and D. Meger, ``Addressing function approximation error in actor-critic methods,'' in \textit{Proc. 35th Int. Conf. Mach. Learn.}, 2018, pp. 1587--1596.
		
		\bibitem{schulman2017proximal}
		J. Schulman, F. Wolski, P. Dhariwal, A. Radford, and O. Klimov, ``Proximal policy optimization algorithms,'' \textit{arXiv preprint arXiv:1707.06347}, 2017.
		
		\bibitem{haarnoja2018applications}
		T. Haarnoja, A. Zhou, K. Hartikainen, G. Tucker, S. Ha, J. Tan, V. Kumar, H. Zhu, A. Gupta, P. Abbeel, and S. Levine, ``Soft actor-critic algorithms and applications,'' \textit{arXiv preprint arXiv:1812.05905}, 2018.
		
		\bibitem{ma2020artificial}
		Y. Ma, Z. Wang, H. Yang, and L. Yang, ``Artificial intelligence applications in the development of autonomous vehicles: A survey,'' \textit{IEEE/CAA J. Autom. Sinica}, vol. 7, no. 2, pp. 315--329, Mar. 2020.
		
		\bibitem{peng2021end}
		P. Peng, Y. Jiang, Y. Wang, and Y. Sun, ``End-to-end autonomous driving through dueling double deep Q-network,'' \textit{Automot. Innov.}, vol. 4, no. 3, pp. 328--337, Aug. 2021.
		
		\bibitem{hewing2020learning}
		L. Hewing, K. P. Wabersich, M. Menner, and M. N. Zeilinger, ``Learning-based model predictive control: Toward safe learning in control,'' \textit{Annu. Rev. Control Robot. Auton. Syst.}, vol. 3, pp. 269--296, May 2020.
		
		\bibitem{spielberg2019neural}
		Spielberg NA, Brown M, Kapania NR, Kegelman JC, Gerdes JC. ``Neural network vehicle models for high-performance automated driving,'' \textit{Sci Robot}, 2019 Mar 27;4(28):eaaw1975.
		
		\bibitem{rosolia2017learning}
		U. Rosolia and F. Borrelli, ``Learning model predictive control for iterative tasks. A data-driven control framework,'' \textit{IEEE Trans. Autom. Control}, vol. 63, no. 7, pp. 1883--1896, Jul. 2018.
		
		\bibitem{liniger2015optimization}
		A. Liniger, A. Domahidi, and M. Morari, ``Optimization-based autonomous racing of 1:43 scale RC cars,'' \textit{Optim. Control Appl. Methods}, vol. 36, no. 5, pp. 628--647, 2015.
		
		\bibitem{kiran2022deep}
		B. R. Kiran et al., ``Deep reinforcement learning for autonomous driving: A survey,'' \textit{IEEE Trans. Intell. Transp. Syst.}, vol. 23, no. 6, pp. 4909--4926, Jun. 2022.
		
		\bibitem{henderson2018deep} P. Henderson, R. Islam, P. Bachman, J. Pineau, D. Precup, and D. Meger, ``Deep reinforcement learning that matters,'' in \textit{Proc. 32nd AAAI Conf. Artif. Intell.}, 2018, pp. 3207--3214.
		
	\end{thebibliography}
\end{document}